\definecolor{linkblue}{rgb}{0.1,0.1,0.8}
\newcommand{\assign}{\leftarrow}
\newtheorem{theorem}{Theorem}
\newtheorem{lemma}[theorem]{Lemma}
\newtheorem{corollary}[theorem]{Corollary}
\newtheorem{definition}[theorem]{Definition}
\newtheorem{remark}[theorem]{Remark}
\newcommand{\N}{\mathbb{N}}
\newcommand{\R}{\mathbb{R}}
\renewcommand{\epsilon}{\varepsilon}
\newcommand{\eps}{\varepsilon}
\newcommand{\E}{E}
\DeclareMathOperator{\flip}{flip}
\DeclareMathOperator{\opt}{opt}
\newcommand{\Rapp}{\tilde{R}_{\opt}}
\newcommand{\Rappe}{\tilde{R}_{\opt,\eps}}
\DeclareMathOperator{\Bin}{Bin}
\newcommand{\oea}{$(1 + 1)$~EA\xspace}
\newcommand{\OneMax}{\textsc{OneMax}\xspace}
\newcommand{\onemax}{\textsc{OneMax}\xspace}
\newcommand{\OM}{\textsc{Om}\xspace}
\newcommand{\jump}{\textsc{Jump}\xspace}
\newcommand{\plateau}{\textsc{Plateau}\xspace}
\newcommand{\trap}{\textsc{Trap}\xspace}
\newcommand{\needle}{\textsc{Needle}\xspace}
\newcommand{\binval}{\textsc{BinaryValue}\xspace}
\newcommand{\leadingones}{\textsc{LeadingOnes}\xspace}
\begin{document}

\title{Optimal Parameter Choices via Precise Black-Box Analysis\thanks{A preliminary version of this work was presented at the \emph{Genetic and Evolutionary Computation Conference (GECCO)} 2016~\cite{DoerrDY16}.}}
\author{Benjamin Doerr \\ \'Ecole Polytechnique, CNRS\\ LIX - UMR 7161\\ 91120 Palaiseau\\ France      \and
	Carola Doerr \\ Sorbonne Universit\'e, CNRS\\ Laboratoire d'informatique de Paris 6, LIP6\\ 75252 Paris\\ France \and 
	Jing Yang \\ \'Ecole Polytechnique, CNRS\\ LIX - UMR 7161\\ 
	91120 Palaiseau\\ France}




\maketitle

{\sloppy
\begin{abstract}
  It has been observed that some working principles of evolutionary algorithms, in particular, the influence of the parameters, cannot be understood from results on the asymptotic order of the runtime, but only from more precise results. In this work, we complement the emerging topic of precise runtime analysis with a first precise complexity theoretic result. Our vision is that the interplay between algorithm analysis and complexity theory becomes a fruitful tool also for analyses more precise than asymptotic orders of magnitude.
  
As particular result, we prove that the unary unbiased black-box complexity of the OneMax benchmark function class is $n \ln(n) - cn \pm o(n)$ for a constant $c$ which is between $0.2539$ and $0.2665$. This runtime can be achieved with a simple (1+1)-type algorithm using a fitness-dependent mutation strength. When translated into the fixed-budget perspective, our algorithm finds solutions which are roughly 13\% closer to the optimum than those of the best previously known algorithms. To prove our results, we formulate several new versions of the variable drift theorems, which also might be of independent interest.
\end{abstract}



\sloppy{
\section{Introduction}
\label{sec:Intro}

An important goal of the theory of randomized search heuristics (RSHs) is to prove mathematically founded statements about optimal parameter choices for these algorithms. The area of runtime analysis has contributed to this goal with rigorous analyses showing how the runtime of RSHs depends on one or more of their parameters. Unfortunately, due to the inherent difficulty of obtaining mathematically proven performance guarantees for RSHs, the majority of the existing runtime analyses only determine the asymptotic order of magnitude of the runtime (that is, the runtime in big-Oh notation). Naturally, such results usually give recommendations on optimal parameters again precise only up to the asymptotic order of magnitude, which for practical uses is often not precise enough. Only recently, made possible by the advancement of the analytical tools in the last 20 years, a number of results appeared that also make the leading constant precise or even some lower order terms. These \emph{precise runtime analyses} usually allowed much more precise statements about the ideal parameter choice. 

In this work, we shall undertake the first steps to complement precise runtime analysis with equally precise complexity theoretic results. This is inspired by the area of classic algorithms theory, where the interplay between algorithm analysis and complexity theory has led to great advances. As in previous works on complexity theory for evolutionary algorithms, we build on the notion of \emph{black-box complexity} introduced in the seminal work~\cite{DrosteJW06}. In very simple words, the black-box complexity of an optimization problem is the number of fitness evaluations that suffice to find an optimal solution. This number is witnessed by a theoretically best-possible black-box algorithm, which may or may not be an evolutionary algorithm. In the former case, we have a proof that no better evolutionary algorithm exists, in the latter, one may ask the question why the existing evolutionary algorithms fall behind the theoretical optimum. As in classic algorithms, this can be a trigger to improve the existing algorithms. The example of~\cite{DoerrDE15} shows that a careful analysis of the theoretically optimal black-box algorithm can also guide the design of new evolutionary algorithms. 


\subsection{Previous Works on Precise Runtime Analyses}

As said above, the inherent difficulty of proving runtime guarantees for evolutionary algorithms and other RSHs by mathematical means for a long time prohibited runtime results that are more precise than giving the asymptotic order of magnitude. In fact, we note that for many simple optimization problems and very simple evolutionary algorithms, we even do not know the asymptotic order of the runtime. Examples include the runtime of the \oea on the minimum spanning tree problem~\cite{Witt14gecco} and on the single-source shortest path problem with the natural single-criterion fitness function~\cite{manyFOGA09}, or the runtime of the global SEMO algorithm for the multi-objective LeadingOnesTrailingZeros problem~\cite{DoerrKV13}.

Before describing the few known precise runtime results, let us argue that in evolutionary computation precise results are more important than in classic algorithms theory. The performance measure in classic algorithms theory is the number of elementary operations performed in a run of the algorithm. Since the different elementary operations have different execution times and since further these are dependent on the hardware used, this performance measure can never give meaningful results that are more precise than the asymptotic order of magnitude. In evolutionary computation, the performance measure is the number of fitness evaluations. For this implementation-independent measure, results more precise than asymptotic orders are meaningful since we can often expect that each fitness evaluation takes more or less the same time and since we usually assume that the total runtime is dominated by the time spent on fitness evaluations. 


Despite this interest in precise runtime estimates, only few results exist which determine a runtime precisely, that is, show upper and lower bounds that have at least the same leading constant. For classic 
evolutionary algorithms, the following is known.

A very early work on precise runtime analysis, apparently overlooked by many subsequent works, is the very detailed analysis on how the $(1+1)$ evolutionary algorithm (\oea) with general mutation rate $c/n$, $c$ a constant, optimizes the \needle and \onemax functions~\cite{GarnierKS99}. Disregarding here the results on the runtime distributions, this work shows that the \oea with mutation rate $c/n$ finds the optimum of the \needle function in $(1+o(1)) \frac{1}{1-e^{-c}} 2^n$ time. 

For \onemax,
the runtime estimate in~\cite{GarnierKS99} is $(1+o(1)) \frac{e^c}{c} n \ln(n)$, more precisely, $\frac{e^c}{c} n \ln(n) \pm O(n)$. The proof of the latter result uses several deep tools from probability theory, among them Laplace transforms and renewal theory. Note that the main difficulty is the lower bound. For a simple proof of the upper bound $(1+o(1)) e n \ln(n)$ for mutation rate $1/n$, see, e.g.,~\cite{DrosteJW02}. The first proofs of a lower bound of order $(1-o(1)) e n \ln(n)$ using more elementary methods were given, independently, in~\cite{DoerrFW10,Sudholt13}. An improvement to  $e n \ln(n) - O(n)$ was presented in~\cite{DoerrFW11}. Very recently, a very precise analysis of the expected runtime, specifying all lower order terms larger than $\Theta(\log(n)/n)$ with precise leading constants, was given in~\cite{HwangPRTC18}. 

That the runtime bound of $(1+o(1)) \frac{e^c}{c} n \ln(n)$ holds not only for \onemax, but any linear pseudo-Boolean function, was shown in~\cite{Witt13}, ending a long quest for understanding this problem~\cite{DrosteJW02,HeY04,Jagerskupper08,DoerrJW12}.
An extension of the \onemax result to $(1+\lambda)$ EAs was obtained in~\cite{GiessenW15}. The bound of $(1+o(1)) (\frac{e^c}{c} n \ln(n) + n \lambda \ln\ln(\lambda)/2\ln(\lambda))$ fitness evaluations contains the surprising result that the mutation rate is important for small offspring population sizes, but has only a lower-order influence once $\lambda$ is sufficiently large (at least when restricted to mutation rates in the range $\Theta(1/n)$; note that \cite[Lemma~1.2]{DoerrGWY17} indicates that mutation rates of a larger order of magnitude can give asymptotically smaller runtimes for larger values of $\lambda$). Results from~\cite{Dang-NhuDDIN18} imply that runtime of the \oea with mutation rate $1/n$ remains $(1+o(1)) en\ln(n)$ when the EA has to cope with dynamic changes of the optimum moving the optimum by an expected distance of $c\ln\ln(n)/n$, $c$ a sufficiently small constant, independently in each iteration. To see this result, put $t=n$ in Theorem~5 of~\cite{Dang-NhuDDIN18} and note that the probability $p_D$ for a dynamic change always is at most the expected change $E_D$. When calling this a precise runtime result, we tacitly assume that the known lower bound of $(1-o(1))en\ln(n)$ holds in the dynamic setting as well, as it is hard to believe that the \oea should profit from random dynamic moves of the optimum.

In parallel independent work, the precise expected runtime of the \oea on the \leadingones benchmark function was determined in~\cite{BottcherDN10,Sudholt13} (note that~\cite{Sudholt13} is the journal version of a work that appeared at the same conference as~\cite{BottcherDN10}). The work~\cite{Sudholt13} is more general in that it also regards the \oea with Best-of-$\mu$ initialization (instead of just initializing with a random individual), the approach of~\cite{BottcherDN10} has the advantage that it also allows to determines the distribution of the runtime (this was first noted in~\cite{DoerrJWZ13gecco} and was formally proven in~\cite{Doerr18evocop}). The work~\cite{BottcherDN10} also shows that the often recommended mutation rate of $p=1/n$ is not optimal. A runtime smaller by 16\% can be obtained from taking $p = 1.59 / n$ and another 12\% can be gained by using a fitness-dependent mutation rate. For several hyper-heuristics having the choice between the 1-bit flip and the 2-bit-flip operators, precise runtimes have been determined recently~\cite{LissovoiOW17,DoerrLOW18}. These results in particular show that the classic selection hyper-heuristics \emph{simple random}, \emph{random gradient}, \emph{greedy}, and \emph{permutation} all have an inferior runtime to variants of the \emph{random gradient} heuristic which use a chosen operator for a longer period $\tau$ than just until the first time no improvement is found. Since all heuristics regarded have a runtime of asymptotic order $\Theta(n^2)$, this runtime distinction would not have been possible without results making the leading constant precise. 

In~\cite{DoerrLMN17}, the runtime of the \oea on \jump functions was determined precise up to the leading constant. This showed that the optimal mutation rate for jump size $k = O(1)$ is $(1+o(1)) k / n$, significantly larger than the common recommendation $1/n$. Also, this work led to the development of a heavy-tailed mutation operator which gave a uniformly good performance for all jump sizes $k$. 

With very different methods, a precise runtime analysis for \plateau functions was conducted in~\cite{AntipovD18}. The plateau function with radius~$k$ is similar to the jump function with parameter $k$, the difference being that now a plateau of equal fitness of radius $k$ around the optimum is the difficult part of the search space. The precise runtime of the \oea with arbitrary unbiased mutation operator (flipping one bit with at least some constant probability) is $(1+o(1)) \binom{n}{k} p_{1:k}$, where $p_{1:k}$ denotes the probability that the mutation operator flips between one and $k$ bits. This implies again that a larger mutation rate than usual is optimal, namely $\sqrt[k]{k}/n \approx k/en$.

In summary, there are not too many results determining precise runtimes, but the ones we have significantly increased out understanding of how evolutionary algorithms work and what are good parameter values.

%

\subsection{Black-Box Complexity}

The notion of black-box complexity was introduced in~\cite{DrosteJW06} with the goal of establishing a complexity theory for evolutionary algorithms. Complexity theory means that we aim at understanding the difficulty of a problem, that is, how well the best algorithm (from a specified class) can solve this problem. Since it is hard to provide a mathematically sound definition of what an evolutionary algorithm is, black-box complexity regards instead the (larger) class of black-box algorithms, that is, all algorithms which have access to the problem instance only via evaluating solution candidates. Consequently, the black-box complexity of a problem is the smallest number of fitness evaluation such that there is a black-box optimization algorithm solving all instances of the problem with at most this expected number of fitness evaluations.
%
%
%
%

It was observed early that the class of all black-box algorithms is significantly larger than the class of evolutionary algorithms. This led to some unexpectedly low black-box complexities witnessed by highly artificial algorithms. To develop a more suitable complexity theory, but also to study the effect of particular restrictions, several restricted notions of black-box complexity have been suggested, among them unbiased black-box complexity (admitting only algorithms which generate solution candidates from previous solutions in an unbiased fashion)~\cite{LehreW12}, memory restricted black-box complexity (allowing to store only a certain number of solution candidates)~\cite{DrosteJW06,DoerrW12ipl}, and ranking-based black-box complexity (allowing only to compare fitness values, but not to exploit absolute fitness values)~\cite{FournierT11,DoerrW14ranking}. We refer to the survey~\cite{Doerr18BBC} for a detailed discussion of these and further black-box complexity notions.

In this work, we build on the unary unbiased black-box complexity~\cite{LehreW12}, which is the most appropriate model for mutation-based search heuristics. In simple words, a unary unbiased black-box algorithm for the optimization of a pseudo-Boolean function $f : \{0,1\}^n \to \R$ is allowed (i)~to sample search points uniformly at random from the search space $\{0,1\}^n$, and (ii)~to generate new search points from applying unbiased mutation operators to previously found search points. Here \emph{unbiased} means that the operator is invariant under automorphisms of the hypercube $\{0,1\}^n$. In other words, the operator has to treat both the bit-positions and the bit-values in a symmetric fashion. For unary operators (usually called \emph{mutation operators}), we give a simple characterization of the set of unbiased operators in Lemma~\ref{lem:characterizationunary}, roughly saying that an unbiased mutation operator always can be described via first sampling a number $r \in [0..n] := \{0, \dots, n\}$ according to a given probability distribution and then flipping a set of $r$ bits chosen uniformly at random from all $r$-sets of bits. 
Furthermore, in an unbiased algorithm all selection operations may only rely on the observed fitness values, but not on the particular representations of the solutions. 

While we claim that this work is the first significant progress towards a useful precise complexity theory for evolutionary algorithms, we note that our work is not the first to show precise black-box complexity results. In~\cite{DrosteJW06}, the unrestricted black-box complexity of the \needle and \trap function classes were shown to be $(2^n+1)/2$ (Theorem~1 and Proposition~3), that of the \binval class was shown to be $2 - 2^{-n}$ (Theorem~4), and that of the not-permuted \leadingones class was proven to be $(1\pm o(1)) n / 2$ (Theorem~6). In~\cite[Theorem~16]{DoerrKLW13}, it was shown that the unrestricted black-box complexity of the multi-criteria formulation of the single-source shortest path problem is $n-1$, where $n$ denotes the number of vertices of the input graph. In~\cite{BuzdalovDK16}, the unrestricted black-box complexity of extreme \jump functions, that is, with jump size that large that only the middle (for $n$ even) or the two middle (for $n$ odd) Hamming levels are visible was determined to be $n + \Theta(\sqrt n)$. Since all these result regard unrestricted black-box complexities and some find artificially small complexities, we do not feel that these results contribute immediately towards a precise black-box complexity theory that can help  to improve the evolutionary algorithms currently in use.

\subsection{Overview of Our Results}

With the goal of starting a complexity theory targeting precise results, we analyze the unary unbiased black-box complexity of the $\onemax$ benchmark function 
$$\OM: \{0,1\}^n \rightarrow \R, x \mapsto \sum_{i=1}^n x_i.\footnote{The reader not familiar with black-box complexity may wonder that this is a single function, whose optimum is of course known to be the string $(1,\ldots,1)$. However, we are interested in the time needed by a unary unbiased algorithm---which cannot simply query this string but needs to construct it from unary unbiased operators---to find this string. Note further that the performance of any such algorithm is identical on all functions having a fitness landscape isomorphic to that of $\OM$. That is, for every $z\in\{0,1\}^n$ and every unary unbiased algorithm $A$, the performance of $A$ on $f_z:\{0,1\}^n \rightarrow \{0,1,...,n\}, x \mapsto |\{i \in [n]| x_i=z_i\}|$ is identical to its performance on $\OM=f_{(1,\ldots,1)}$.}$$

It is known that the unary unbiased black-box complexity of \onemax is of order $\Theta(n \log n)$~\cite{LehreW12}. The simple \emph{randomized local search} (RLS) heuristic, a hill-climber flipping single random bits, is easily seen to have a runtime of at most $n \ln(n) + \gamma n + \frac 12 \approx n \ln(n) + 0.5772n$ where $\gamma = 0.5772\dots$ is the Euler-Mascheroni constant. This can be shown by a reduction to the coupon collector problem. The precise complexity~\cite{DoerrD16} of this algorithm is 
\[n \ln(n) + (\gamma - \ln(2))n + o(1) \approx n \ln n - 0.1159n.\] 
Using a best-of-$\mu$ initialization rule with a suitably chosen $\mu$, the running time of RLS decreases by an additive $\Theta(\sqrt{n \log n})$ term~\cite{AxelDD15}. Prior to the present work, this was the best unary unbiased algorithm known for \onemax, and thus its time complexity was the best known upper bound for the unary unbiased black-box complexity of this function.

In this work, we show that the unary unbiased black-box complexity of \onemax is $$n \ln(n) - cn \pm o(n)$$ for a  constant $c$ for which we show $0.2539 < c < 0.2665$. We also show how to numerically compute this constant with arbitrary precision. More importantly, our analysis reveals (and needs) a number of interesting structural results which enlarge our general understanding and which might find applications in other algorithm analyses in the future. In particular, we observe the following, which we will discuss in more detail in the following subsections.
\begin{enumerate}
	\item Drift-maximization is near-optimal: On \onemax, any unary unbiased algorithm which in each iteration (by selecting a suitable parent and choosing a suitable mutation operator) maximizes the expected fitness gain over the best-so-far individual (``drift'') has essentially an optimal runtime. More precisely, its runtime exceeds the optimal one (the unary unbiased black-box complexity) by at most $O(n^{2/3} \log^9 n)$. 
	\item There is such a drift-maximizing algorithm which selects in each iteration the best-so-far solution and mutates it by flipping a fixed number of bits. This number depends solely on the fitness of the current-best solution. It is always an odd number and it decreases with increasing fitness.
	\item In the language of fixed-budget computation introduced by Jansen and Zarges~\cite{JansenZ14}, the drift-maximizing algorithm (and thus also any optimal unary unbiased black-box algorithm) computes solutions with expected fitness distance to the optimum roughly $13\%$ smaller than the previous-best algorithms (RLS, RLS with best-of-$\mu$ initialization).
\end{enumerate}

To show these results, we use a number of technical tools which might be suitable for other analyses as well. Among them are simplified (but sightly stronger) variants of the variable drift theorems for discrete search spaces and versions of the lower bound variable drift theorem which can tolerate large progresses if these happen with sufficiently small probability.

\subsection{Maximizing Drift is Near-Optimal}

Evolutionary algorithms build on the idea that iteratively maximizing the fitness is a good approach. This suggests to try to generate the offspring in a way that the expected fitness gain over the best-so-far solution is maximized. Clearly, this is not a successful idea for each and every problem, as easily demonstrated by examples like the \textsc{distance} and the \textsc{trap} functions~\cite{DrosteJW02}, where the fitness leads the algorithm into a local optimum, or the difficult-to-optimize monotonic functions constructed in~\cite{DoerrJSWZ13,LenglerS18,Lengler18}, where the fitness leads to the optimum, but via a prohibitively long trajectory. Still, one might hope that for problems with a good fitness-distance correlation (and \OM has the perfect fitness-distance correlation), maximizing the expected fitness gain is a good approach. This is roughly what we are able to show.

More precisely, we cannot show that maximizing the expected fitness gain leads to the optimal unary unbiased black-box algorithm. 
It turns out this is also not true, even if we restrict ourselves to elitist algorithms, which cannot use tricks like minimizing the fitness and inverting the search point once the all-zero string was found. In fact, the elitist algorithm flipping in each iteration the number of bits that minimizes the expected remaining runtime is different from the drift-maximizing one. For all realistic problem sizes, however, the differences between the expected runtime of our drift maximizer and that of the optimal elitist algorithm are negligibly small~\cite{NathanCarola18}.

What we can prove, however, is that the algorithm which in each iteration takes the best-so-far solution and applies to it the unary unbiased mutation operator maximizing the expected fitness gain has an expected optimization time which exceeds the unary unbiased black-box complexity by at most an additive term of order $O(n^{2/3} \log^9 n)$.

We note that this result, while natural, is quite difficult to obtain and relies on a number of properties particular to this process, in particular, the fact that we have a good structural understanding of the maximal drift.

\subsection{Maximizing the Drift via the Right Fitness-Dependent Mutation Strength}
\label{sec:intromaxdrift}

Once we decided how to choose the parent individual, in principle it is easy to mutate it in such a way that the drift is maximized. Since any unary unbiased mutation operator can be seen as a convex combination of $r$-bit flip operators, $r \in [0..n]$, and since the drift stemming from such an operator is the corresponding convex combination of the drifts stemming from these operators, we only need to determine, depending on the fitness of the parent, a value for $r$ such that flipping $r$ random bits maximizes the fitness gain over the parent. For a concrete value of $n$ and a concrete fitness of the parent, one can compute the best value of $r$ in time $O(n^2)$. 

We need some more mathematical arguments to (i)~obtain a structural understanding of the optimal $r$-value and (ii)~to obtain a runtime estimate valid for all values of $n$. To this aim, we shall first argue that when the fitness distance $d$ of the parent from the optimum is at most $(\frac 12 - \eps) n$ for an arbitrarily small constant $\eps$, then the optimal drift is obtained from flipping a constant number $r$ of bits (Lemma~\ref{lem:constantr}). For any constant $r$, the drift obtained from flipping $r$ bits can be well approximated by a degree $r$ polynomial in the relative fitness distance $d/n$ (Theorem~\ref{thm:approxBA}). By this, we overcome the dependence on $n$, that is, apart from this small approximation error we can, by regarding these polynomials, determine a function $\tilde R_{\opt} : [0,\frac 12 - \eps] \to \N$ such that for all $n \in \N$ and all $d \in [0..(\frac 12 - \eps)n]$ the near-optimal number of bits to flip (that is, optimal apart from the approximation error) is $\tilde R_{\opt}(d/n)$. This $\tilde R_{\opt}$ is decreasing, that is, the closer we are to the optimum, the smaller is the near-optimal number of bits to flip. Interestingly, $\tilde R_{\opt}$ is never even, so the near-optimal number of bits to flip is always odd (the same holds for the truly optimal number of bits, as we also show). These (and some more) structural properties allow to compute numerically the interval in which flipping $r$ bits is near-optimal (for all odd $r$). From these we estimate the runtime  by numerically approximating the integral describing the runtime via the resulting drifts. This gives approximations for both the runtime of our drift-maximizer and the unary unbiased black-box complexity, which are precise apart from an arbitrary small $O(n)$ term.

%
 
We note that previous works have studied drift maximizing variants of RLS and the (1+1)~EA by empirical means. For $n=100$, B\"ack~\cite{Back92} computed the drift-maximizing mutation rates for different $(1+\lambda)$ and $(1,\lambda)$ EAs. Fialho and co-authors considered in~\cite{FialhoCSS08,FialhoCSS09} for $n=1,000$ and $n=10,000$, respectively, $(1+\lambda)$-type RLS-variants which choose between flipping exactly 1, 3, or 5 bits or applying standard bit mutation with mutation rate $p=1/n$. A simple empirical Monte Carlo evaluation is conducted to estimate the average progress obtained by any of these four operators. None of the three mentioned works, however, further investigates the difference between the drift maximizing algorithm and the optimal (i.e., time-minimizing) one.



\subsection{Fixed-Budget Result}

Computing the runtime of our drift-maximizing algorithm, we observe that the fitness-dependent mutation strength gives a smallish-looking improvement of roughly $0.14n$ in the $\Theta(n \log n)$ runtime. However, if we view our result in the fixed-budget perspective~\cite{JansenZ14}, then (after using the Azuma inequality in the martingale version to show sufficient concentration) we see that if we take the expected solution quality after a fixed number (budget) of iterations as performance measure, then our algorithm gives a roughly 13\% smaller fitness distance to the optimum compared to the previous-best algorithm (provided that the budget is at least $0.2675n$). 

\subsection{Methods}

To obtain our results, we use a number of methods which might find applications is other precise runtime and black-box complexity analyses. Among these, we want to highlight here a few new versions of the variable drift theorems. 

A simple, but useful variant of Johannsen's upper bound drift theorem can be obtained for processes on the non-negative integers. Here the expected runtime can be simply written as the sum of the reciprocals of the lower bounds on the drift (Theorem~\ref{dis_U}). This avoids the use of integrals as in Johannsen's formulation~\cite{Johannsen10}. In our application, we profit from the fact that the new theorem is more precise as it avoids the error stemming from approximating the discrete drift via a continuous progress estimate (this error is usually small, but for our purposes the resulting error of order $\Theta(n)$ would be too large).

For the lower bound variable drift theorem from~\cite{DoerrFW11}, besides a similar simplification for processes on the non-negative integers, we add two improvements. The first concerns the requirement that the process may not make too large progress in a single step. While it is clear that some such condition is necessary for the drift result to hold, the strict condition of~\cite{DoerrFW11} is difficult to enforce in evolutionary computation. When, e.g., using standard-bit mutation, large progresses are highly unlikely (simply because the number of bits that flip is small), but large progresses cannot be ruled out completely. For this reason, we devise a variant that can deal with such situations. Our relaxed condition is that large progresses occur only with small probability. We note that a similar drift theorem was given in~\cite[Theorem~2]{GiessenW16}, however, it appears to be more technical than our version. 

A second difficulty with the drift theorem in~\cite{DoerrFW11} (and likewise with the one in~\cite{GiessenW16}) is the fact that a function $h$ has to be given such that the drift from the point $x$ can be bounded from above by $h(c(x))$, where $c$ is the upper bound for the progress possible from $x$. This is significantly less convenient than just finding a bound $h(x)$ for the drift from point $x$ as in the upper bound drift theorem. Therefore we formulate in Theorem~\ref{dis_L'} a variant of the lower bound drift theorem which also only requires an upper bound $h(x)$ for the drift from $x$. While not a deep result, we expect that this version eases the process of finding lower bounds via variable drift in the future. 

\subsection{Connection to Parameter Control}

As discussed in Section~\ref{sec:intromaxdrift} the drift-maximizing algorithm uses mutation strengths that depend on the fitness value of a current-best solution. These drift-maximizing mutation strengths change from flipping half the bits (for search points with fitness $n/2$) to flipping single bits (for search points close to the optimal solution). The algorithm is therefore an example for a state-dependent parameter controlled EA, in the taxonomy presented in~\cite{DoerrD18chapter}. For reasons of space and focus, we do not give an extended introduction to dynamic parameter choices here in this work, but refer the interested reader to the recent book chapter~\cite{DoerrD18chapter} instead. We note, however, that  in~\cite{DoerrDY16PPSN}, a work subsequent to the conference version of the present work~\cite{DoerrDY16}, we have presented a learning-based control mechanism which tracks the drift-maximizing mutation strength so closely that its overall expected optimization time is at most an additive $o(n)$ term worse than that of the drift-maximizer.

\section{Problem Setting and Useful Tools}
\label{sec:preliminaries}

In this section we briefly describe the black-box setting regarded in this work, the unary unbiased model proposed by Lehre and Witt~\cite{LehreW12}. The variation operators that are admissible in this model are characterized in Lemma~\ref{lem:characterizationunary}. We also collect (Section~\ref{sec:drift}) a number of drift theorems that we build upon in our mathematical analysis. In Theorems~\ref{dis_U} to~\ref{dis_L'} we present new versions of the variable drift theorems which we will use.

\subsection{The Unary Unbiased Black-Box Setting}
\label{sec:setting}\label{SEC:SETTING}\label{sec:bb}

The main goal of our work is to determine a precise bound for the unary unbiased black-box complexity of \onemax, the problem of maximizing the function $\OM:\{0,1\}^n \rightarrow \R, x \mapsto \sum_{i=1}^n{x_i}$, which assigns to each bit string the number of ones in it. That is, we aim at identifying a best-possible mutation-based algorithm for this problem. 
The unary unbiased black-box complexity of \onemax is the smallest expected number of function evaluations that any algorithm following the structure of Algorithm~\ref{alg:unbiased} exhibits on this problem before and including the first evaluation of the unique global optimum $(1,\ldots,1)$.\footnote{In the interest of a concise discussion, we do not provide here an extended discussion of black-box complexity. Interested readers are referred to~\cite{Doerr18BBC} for a summary of different black-box complexity models and known results.} In line~9 of Algorithm~\ref{alg:unbiased} a unary unbiased variation operator is asked for. In the context of optimizing pseudo-Boolean functions, a \emph{unary} operator is an algorithm that is build upon a family $(p(\cdot \mid x))_{x \in \{0,1\}^n}$ of probability distributions over $\{0,1\}^n$. For a given input $x$, the operator outputs a new string that it samples from the distribution $p(\cdot \mid x)$. A unary operator is \emph{unbiased} if all members of its underlying family of probability distributions are symmetric with respect to the bit positions $[n]:=\{1,\ldots,n\}$ and the bit values $0$ and $1$ (cf.~\cite{LehreW12,Doerr18BBC} for a discussion). Unary unbiased variation operators are also referred to as \emph{mutation operators}. 

The characterization of unary unbiased variation operators in Lemma~\ref{lem:characterizationunary} below states that each such operator is uniquely defined via a probability distribution $r_{p}$ over the set $[0..n]:=\{0\} \cup [n]$ describing how many bits (chosen uniformly at random without replacement) are flipped in the argument~$x$ to create a new search point~$y$. Finding a best possible mutation-based algorithm is thus identical to identifying an optimal strategy to select the distribution $r_{p}$. This characterization can also be derived from~\cite[Proposition~19]{DoerrKLW13}, although the original formulation of Proposition~19 in~\cite{DoerrKLW13} requires as search space $[n]^{n-1}$. 

\begin{algorithm2e}[t]
 \textbf{Initialization:}\\
\Indp
	$t \gets 0$\;
		Choose $x(t)$ uniformly at random from $S=\{0,1\}^n$\;
 \Indm
 \textbf{Optimization:}\\	
\Indp
\textbf{repeat}\\
\Indp
		$t \gets t+1$\;
		Evaluate $f(x(t-1))$\;
		Based on $\Big(f(x(0)), \ldots, f(x(t-1)) \Big)$ choose a probability distribution $p_s$ on $[0..t-1]$ and a unary unbiased operator $(p(\cdot|x))_{x \in \{0,1\}^n}$\;
		Randomly choose an index $i$ according to $p_s$\;
		Generate $x(t)$ according to $p(\cdot|x(i))$\;
		\Indm
		\textbf{until} termination condition met\;
		\Indm
 \caption{Blueprint of a unary unbiased black-box algorithm}
\label{alg:unbiased}
\end{algorithm2e}

\begin{lemma}
\label{lem:characterizationunary}\label{LEM:CHARACTERIZATION}
For every unary unbiased variation operator $(p(\cdot|x))_{x \in \{0,1\}^n}$ there exists a probability distribution $r_{p}$ on $[0..n]$ such that for all $x,y\in\{0,1\}^n$ the probability $p(y|x)$ that $(p(\cdot|x))_{x \in \{0,1\}^n}$ samples $y$ from $x$ equals the probability of sampling a random number~$r$ from~$r_{p}$ and then flipping $r$ bits in $x$ to create $y$. On the other hand, each distribution $r_{p}$ on $[0..n]$ induces a unary unbiased variation operator.
\end{lemma}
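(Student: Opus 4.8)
The plan is to run the whole argument off the single structural fact that the automorphism group of the hypercube, which by definition leaves an unbiased operator invariant, is generated by coordinate permutations $\pi \in S_n$ and XOR shifts $z \mapsto z \oplus a$ with $a \in \{0,1\}^n$, and that every such automorphism $\sigma$ preserves Hamming distance, i.e.\ $\dist(\sigma(x),\sigma(y)) = \dist(x,y)$. With this setup, unbiasedness reads $p(\sigma(y) \mid \sigma(x)) = p(y \mid x)$ for all $x,y$ and all automorphisms $\sigma$, and the lemma becomes a matter of showing that $p(\cdot \mid x)$ is constant on each Hamming sphere around $x$ and that this constant does not depend on $x$.

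For the forward direction I would establish two transitivity statements. \emph{(a) Level-constancy:} fix $x$ and take $y,y'$ with $\dist(x,y) = \dist(x,y') = r$; let $S,S'$ be the $r$-element sets of positions where $y$, resp.\ $y'$, differ from $x$, choose any permutation $\pi$ with $\pi(S)=S'$, and set $\sigma : z \mapsto \pi(z) \oplus (x \oplus \pi(x))$. Using that permutations commute with XOR one checks $\sigma(x)=x$ and $\sigma(y) = \pi(y \oplus x) \oplus x = y'$ (since $y\oplus x$ is the characteristic vector of $S$, which $\pi$ sends to that of $S'$), so unbiasedness gives $p(y' \mid x) = p(y \mid x)$. \emph{(b) $x$-independence:} for arbitrary $x,x'$ the shift $\sigma : z \mapsto z \oplus (x \oplus x')$ maps $x$ to $x'$ and preserves distances, so it carries a point at distance $r$ from $x$ to one at distance $r$ from $x'$, and unbiasedness then equates the common level-$r$ value at $x$ with that at $x'$. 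Combining (a) and (b), the value $p(y\mid x)$ for any $y$ with $\dist(x,y)=r$ depends only on $r$, so I may define $r_p(r) := \binom{n}{r}\, p(y \mid x)$; summing over Hamming levels gives $\sum_r r_p(r) = \sum_y p(y\mid x) = 1$, so $r_p$ is a probability distribution on $[0..n]$, and by construction sampling $r \sim r_p$ and flipping $r$ uniformly random bits yields $y$ with probability $r_p(\dist(x,y))/\binom{n}{\dist(x,y)} = p(y \mid x)$.

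The converse is the easy half: given any distribution $r_p$ on $[0..n]$, define $p(y \mid x) := r_p(\dist(x,y))/\binom{n}{\dist(x,y)}$. This is a valid distribution over $y$ because summing over the $\binom{n}{r}$ points at each distance $r$ recovers $\sum_r r_p(r) = 1$, and it is unbiased because any automorphism $\sigma$ preserves Hamming distance, whence $p(\sigma(y)\mid\sigma(x)) = r_p(\dist(\sigma(x),\sigma(y)))/\binom{n}{\dist(\sigma(x),\sigma(y))} = p(y\mid x)$.

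I expect the only genuinely non-routine point to be step (a): exhibiting, for two points equidistant from $x$, a \emph{single} automorphism that simultaneously fixes $x$ and maps one to the other. The subtlety is that a bare coordinate permutation need not fix $x$, so one must compose $\pi$ with precisely the XOR shift $x \oplus \pi(x)$ that re-centers at $x$, and then verify, using commutativity of permutation and XOR, that this composite both fixes $x$ and acts correctly on the Hamming sphere. Once this stabilizer-transitivity is in hand, everything else is bookkeeping and the distance-preservation property of automorphisms.
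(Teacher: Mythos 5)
Your proof is correct and follows essentially the same route as the paper's: show that $p(\cdot\mid x)$ is constant on each Hamming sphere (the paper XOR-shifts everything to the origin and applies a permutation fixing the all-zero string, which is exactly the stabilizer automorphism $z \mapsto \pi(z) \oplus (x \oplus \pi(x))$ you construct by conjugation), then show this sphere value is independent of $x$ via an XOR shift, and finally define $r_p(d) = \binom{n}{d}\,p_{(d)}$. Your only additions are cosmetic: you explicitly check that $r_p$ sums to one and spell out the easy converse, both of which the paper leaves implicit.
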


To prove Lemma~\ref{LEM:CHARACTERIZATION}, we introduce the following notation. 
\begin{definition}\label{flip_r}
Let $r \in [0..n]$. For every $x \in \{0,1\}^n$ the operator $\flip_r$ creates an offspring $y$ from $x$ by selecting $r$ positions $i_1, \ldots, i_r$ in $[n]$ uniformly at random (without replacement), setting $y_i:=1-x_i$ for $i \in \{i_1,\ldots,i_r\}$, and copying $y_i:=x_i$ for all other bit positions $i \in [n]\setminus \{i_1,\ldots,i_r\}$.
\end{definition}

With this notation, Lemma~\ref{LEM:CHARACTERIZATION} states that every unary unbiased variation operator $(p(\cdot|x))_{x \in \{0,1\}^n}$ can be described by Algorithm~\ref{alg:operator}, for a suitably chosen probability distribution $r_{p}$. 

Since it will be needed several times in the remainder of this work, we briefly recall the following simple fact about the expected $\OM$-value of an offspring generated by $\flip_r$.
\begin{remark}\label{rem:expectedflips}
Let $x \in \{0,1\}^n$ and $r \in [0..n]$. The number of $1$-bits that are flipped by the variation operator $\flip_r$ follows a hypergeometric distribution with expectation equal to $r\OM(x)/n$. The expected number of $0$-bits that are flipped by the operator $\flip_r$ equals $r(n-\OM(x))/n$. The expected $\OM$-value of an offspring generated from $x$ by applying $\flip_r$ is thus equal to 
$\OM(x)-r\OM(x)/n+r(n-\OM(x))/n = (1-2r/n)\OM(x)+r$.
\end{remark}

\begin{algorithm2e}[t]
		Choose an integer $r \in [0..n]$ according to $r_{p}$\;
		Sample $y \assign \flip_r(x)$\;
	\caption{The unary unbiased operator $r_{p}$ samples for a given $x$ an offspring by sampling from $r_{p}$ the number $r$ of bits to flip and then applying $\flip_r$ to $x$.}
	\label{algo}
\label{alg:operator}
\end{algorithm2e}
 
To prove Lemma~\ref{LEM:CHARACTERIZATION} we show the following.
\begin{lemma}\label{lem:irgendwas22}
For every unary unbiased variation operator $(p(\cdot|x))_{x \in \{0,1\}^n}$ there exists a exists a probability distribution $r_{p}$ on $[0..n]$ such that for all $x,y\in\{0,1\}^n$
\begin{equation}\label{eq:lemma3}
	p\left(y|x\right)= r_{p}(H(x,y))/\binom{n}{H(x,y)}, 
\end{equation}
where here and henceforth $H(x,y)$ denotes the Hamming distance of $x$ and $y$. 
\end{lemma}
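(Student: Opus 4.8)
The plan is to show that any unary unbiased variation operator is completely determined by a single probability distribution over Hamming distances, and that this distribution is independent of the base point $x$. The key leverage is \emph{unbiasedness}: each $p(\cdot \mid x)$ is invariant under all automorphisms of the hypercube $\{0,1\}^n$, that is, under coordinate permutations together with bit-flips applied to selected coordinates. First I would fix an arbitrary $x \in \{0,1\}^n$ and consider two offspring $y, y'$ with the same Hamming distance $H(x,y) = H(x,y') = d$. I claim $p(y \mid x) = p(y' \mid x)$. Indeed, the set of automorphisms fixing $x$ acts transitively on the ``sphere'' $\{z : H(x,z) = d\}$: given any two points at distance $d$ from $x$, one can first flip all coordinates where $x$ is $1$ (so that $x$ becomes the all-zeros string), then the stabilizer of the all-zeros string is exactly the coordinate-permutation group, which acts transitively on the set of strings with exactly $d$ ones. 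Pulling this back through the flip, we obtain an automorphism fixing $x$ and mapping $y$ to $y'$. Unbiasedness then forces $p(y \mid x) = p(y' \mid x)$.

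Having established that $p(y \mid x)$ depends only on $d = H(x,y)$, I would define, for each $x$, a function $q_x(d) := p(y \mid x)$ for any $y$ with $H(x,y)=d$. Since there are exactly $\binom{n}{d}$ points at Hamming distance $d$ from $x$, summing the probabilities over the whole sphere gives $\binom{n}{d} q_x(d)$, and summing over all $d$ must equal $1$. This motivates setting $r_p^{(x)}(d) := \binom{n}{d} q_x(d)$, which is a probability distribution on $[0..n]$ and satisfies exactly $p(y \mid x) = r_p^{(x)}(H(x,y))/\binom{n}{H(x,y)}$, matching the form~\eqref{eq:lemma3} except that $r_p$ might a priori depend on $x$.

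The remaining step is to eliminate the dependence on $x$. Here I would again invoke unbiasedness, now comparing two different base points $x, \tilde x$. For any such pair there is an automorphism $\sigma$ of the hypercube with $\sigma(x) = \tilde x$, and automorphisms preserve Hamming distances. Unbiasedness means the family of distributions is invariant under $\sigma$ in the sense that $p(\sigma(y) \mid \sigma(x)) = p(y \mid x)$ for all $y$. Taking $y$ with $H(x,y) = d$, we get $H(\tilde x, \sigma(y)) = d$ and hence $r_p^{(\tilde x)}(d)/\binom{n}{d} = p(\sigma(y) \mid \tilde x) = p(y \mid x) = r_p^{(x)}(d)/\binom{n}{d}$, so $r_p^{(\tilde x)}(d) = r_p^{(x)}(d)$ for every $d$. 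Thus the distribution is the same for all base points, and I may write it as a single $r_p$, completing the proof.

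The main obstacle I anticipate is pinning down the precise formal definition of ``unbiased'' and verifying that the relevant automorphisms genuinely lie in the invariance group, so that the two transitivity/invariance arguments above are rigorously licensed. Concretely, I must be careful that the invariance is stated in the correct form---namely that $p(\sigma(y)\mid\sigma(x)) = p(y\mid x)$ for every automorphism $\sigma$ of $\{0,1\}^n$ (compositions of coordinate permutations and coordinate complementations)---and that this group both stabilizes any fixed $x$ transitively on each sphere (for the first step) and acts transitively on all of $\{0,1\}^n$ (for the last step). Once the invariance is correctly formalized, both steps reduce to the elementary transitivity facts sketched above. The converse direction, that every distribution $r_p$ induces a genuinely unbiased operator, is routine: $\flip_r$ is manifestly symmetric in positions and values, and a convex combination of unbiased operators is unbiased.
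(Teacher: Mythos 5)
Your proposal is correct and follows essentially the same route as the paper's proof: both establish uniformity of $p(\cdot\mid x)$ on each Hamming sphere around a fixed $x$ and then independence of the base point, using invariance under the hypercube automorphisms (in the paper these are written out explicitly as XOR-with-$x$ followed by a coordinate permutation, which is exactly the stabilizer/transitivity argument you phrase group-theoretically). The only cosmetic differences are your explicit remark that $r_p^{(x)}$ sums to one and your closing note on the converse direction, which is not part of this lemma's statement.
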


\begin{proof}
	Let $p$ be a unary unbiased variation operator. We first show that for all $x,y_1,y_2\in\{0,1\}^n$ with $H(x,y_1)=H(x,y_2)$, the equality $p(y_1|x)=p(y_2|x)$ holds. This shows that for any fixed Hamming distance $d\in[n]$ and every string $x$, the probability distribution on the $d$-neighborhood $\mathcal{N}_{d}(x):=\big\{ y\in\{0,1\}^n,H(x,y)=d \big\}$ of $x$ is uniform. 
	%

	Using the fact that the bit-wise XOR operator $\oplus$ preserves the Hamming distance, we obtain that for any $y_1,y_2\in \mathcal{N}_{d}(x)$ it holds that 
	\begin{equation*}
		H(x\oplus x,y_1\oplus x)=H(x\oplus x,y_2\oplus x)=d.
	\end{equation*}
	Since $x\oplus x=(0,\ldots,0)$, we thus observe that
	\begin{equation*}
	\sum_{i=1}^{n}(y_1\oplus x)_i=\sum_{i=1}^{n}(y_2\oplus x)_i=d.
	\end{equation*}
	This implies that there exists a permutation $\sigma\in S_n$ such that
	\begin{equation*}
	\sigma(y_1\oplus x)=y_2\oplus x.
	\end{equation*}
	According to the definition of unary unbiased variation operators, $p$ is invariant under "$\oplus$" and "$\sigma$", yielding
	\begin{eqnarray*}
	p(y_1|x)&=& p(y_1\oplus x | x\oplus x)\\
	&=& p(\sigma(y_1\oplus x) |\sigma(x\oplus x))\\
	&=& p(y_2\oplus x | x\oplus x)= p(y_2|x).
	\end{eqnarray*}
	This shows that $p(\cdot|x)$ is uniformly distributed on $\mathcal{N}_d(x)$ for any $d\in[n]$ and any $x \in \{0,1\}^n$. 
	
	For every $x \in \{0,1\}^n$ and every $d\in[n]$ let $p_{(d,x)}$ denote the probability of sampling a specific point at distance $d$ from $x$. That is, for $y_1 \in \mathcal{N}_d(x)$ we have $p(y_1|x)=p_{(d,x)}$. For $x'\neq x$ let $y'$ denote $y_1\oplus (x \oplus x')$. Then by the unbiasedness of $p$ we obtain that 
	\begin{equation*}
		H(y',x')=H(y_1,x)=d, \quad \text{ and }  p_{(d,x')}=p(y'|x')=p(y'\oplus x \oplus x' \mid x'\oplus x \oplus x')=p(y_1|x)=p_{(d,x)}.
	\end{equation*}
	Thus, for all $x,x' \in \{0,1\}^n$ it holds that $p_{(d,x)}=p_{(d,x')}=:p_{(d)}$.
	
	For the unary unbiased variation operator $p$ we can therefore define a distribution $r_p$ on $[0..n]$ by setting 
	\begin{align*}
	r_p(d)
	= \binom{n}{d} p_{(d)}.
	\end{align*}
	For all $x,y\in \{0,1\}^n$ we obtain
	\begin{align*}
	p(y|x)=p_{(d,x)}=p_{(d)}=r_p(d)/\binom{n}{d},
	\end{align*}
	where we abbreviate $d:=H(x,y)$. This shows the desired equation~\eqref{eq:lemma3}.
  \end{proof}

\subsection{Drift Analysis}
\label{SEC:DRIFT}\label{sec:drift}

The main tool in our work is drift analysis, a well-established method in the theory of randomized search heuristics. Drift analysis tries to translate information about the expected progress of an algorithm into information about the expected runtime, that is, the expected hitting time of an optimal solution. We note that typically drift theorems are phrased in a way that they estimate the time a stochastic process on a subset of the real numbers takes to hit zero. This is convenient when regarding as process some distance of the current-best solution of an evolutionary algorithm to the optimum. Of course, via elementary transformations all drift results can be rephrased to hitting times for other targets.

Drift analysis was introduced to the field in the seminal work of He and Yao~\cite{HeY04}. They proved the following additive drift theorem, which assumes a uniform bound on the expected progress.

\begin{theorem}[Additive drift theorem \cite{HeY04}]
\label{THM:ADDITIVEDRIFT}
	Let $(X_t)_{t\ge0}$ be a sequence of non-negative random variables over a finite state space in $\mathbb{R}$. Let $T$ be the random variable that denotes the earliest
	point in time $t \ge 0$ such that $X_t = 0$. If there exist $c,d > 0$ such that
	\begin{equation*}c\le \E\left(X_t-X_{t+1}\mid T>t\right)\le d,\end{equation*}
	then
	\begin{equation*}
	\frac{X_0}{d}\le \E\left(T \mid X_0\right)\le \frac{X_0}{c}.
	\end{equation*}
\end{theorem}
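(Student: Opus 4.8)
The plan is to avoid martingale machinery and argue directly through a telescoping identity, which is the natural match for the \emph{averaged} form of the hypothesis here (the drift is assumed conditional on the event $\{T>t\}$, not pointwise on the full history). Throughout I condition on the initial value $X_0$; since it is fixed this only clutters notation, so I suppress it. The starting point is that $X_T=0$ by definition of $T$, so for every truncation level $n$ there is the finite telescoping identity
\[
X_0 - X_{\min\{T,n\}} = \sum_{t=0}^{n-1} (X_t - X_{t+1})\,\mathbf{1}_{\{T>t\}},
\]
where the indicator records that a step only contributes while the process has not yet hit $0$.

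Taking expectations of this finite sum is unproblematic, and using the elementary identity $\E[Y\mathbf{1}_A]=\Pr(A)\,\E[Y\mid A]$ with $A=\{T>t\}$ I would rewrite
\[
\E\bigl(X_0 - X_{\min\{T,n\}}\bigr) = \sum_{t=0}^{n-1} \Pr(T>t)\,\E\bigl(X_t-X_{t+1}\mid T>t\bigr).
\]
Now the two drift bounds $c\le \E(X_t-X_{t+1}\mid T>t)\le d$ enter, sandwiching each summand between $c\Pr(T>t)$ and $d\Pr(T>t)$. The last combinatorial ingredient is the tail-sum formula $\sum_{t\ge0}\Pr(T>t)=\E(T)$, valid for the non-negative integer-valued $T$.

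For the \textbf{upper bound} on $\E(T)$ I use only the lower drift bound together with $X_{\min\{T,n\}}\ge0$, which makes the left-hand side above at most $X_0$; hence $c\sum_{t=0}^{n-1}\Pr(T>t)\le X_0$ for every $n$, and letting $n\to\infty$ (monotone convergence of non-negative terms) gives $\E(T)\le X_0/c$. This step also establishes, for free, that $\E(T)$ is finite and thus $T<\infty$ almost surely, using that the finite state space keeps $X_0$ bounded. For the \textbf{lower bound} I bound the same left-hand side from above by $d\sum_{t=0}^{n-1}\Pr(T>t)\le d\,\E(T)$, and pass to the limit on the left: since now $T<\infty$ a.s.\ and the $X_t$ are uniformly bounded (finite state space), dominated convergence yields $X_{\min\{T,n\}}\to X_T=0$, so $\E(X_0-X_{\min\{T,n\}})\to X_0$, giving $X_0\le d\,\E(T)$, i.e.\ $\E(T)\ge X_0/d$.

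The only genuinely delicate point is the passage $n\to\infty$, i.e.\ justifying that the truncated identity converges to the untruncated statement; this is exactly where finiteness of the state space does the work, since it makes $(X_t)$ uniformly bounded and thereby delivers both $T<\infty$ a.s.\ and the convergence $X_{\min\{T,n\}}\to0$ by dominated convergence. Everything else is routine bookkeeping around $\E[Y\mathbf{1}_A]=\Pr(A)\,\E[Y\mid A]$ and the tail-sum representation of $\E(T)$.
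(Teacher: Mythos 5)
Your proof is correct. Note first that the paper itself offers no proof to compare against: Theorem~6 is stated there as a known result, cited to He and Yao, so the only meaningful comparison is with the classical treatment in the cited source. Your telescoping argument is the streamlined modern route: the identity $X_0 - X_{\min\{T,n\}} = \sum_{t=0}^{n-1}(X_t-X_{t+1})\mathbf{1}_{\{T>t\}}$, combined with $\E[Y\mathbf{1}_A]=\Pr(A)\,\E[Y\mid A]$ and the tail-sum formula, is exactly matched to the averaged form of the hypothesis (drift conditioned on $\{T>t\}$ rather than on the full history), whereas the original He--Yao argument runs through supermartingale/optional-stopping style machinery that implicitly requires the pointwise form of the drift condition. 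You also handle the two genuinely delicate points correctly: the upper-bound direction needs only $X_{\min\{T,n\}}\ge 0$ and monotone convergence, and it delivers $\E(T)<\infty$ (hence $T<\infty$ a.s.) as a by-product, which is precisely what licenses the dominated-convergence step $X_{\min\{T,n\}}\to X_T=0$ in the lower-bound direction, with the finite state space supplying the uniform bound. Two cosmetic remarks: terms with $\Pr(T>t)=0$ should be read as contributing zero (the conditional expectation is then undefined but irrelevant), and since the conclusion is about $\E(T\mid X_0)$, your suppressed conditioning on $X_0$ tacitly reads the drift hypothesis as holding conditionally on $X_0$ as well --- this is the standard intended reading of the theorem and is a sloppiness of the statement itself, not of your argument.
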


Since the progress of many algorithms slows down the closer they get to the optimum, the requirement of Theorem~\ref{THM:ADDITIVEDRIFT} to establish an \emph{additive} bound on the drift is often not very convenient. For this reason, a \emph{multiplicative} drift theorem that only requires that the expected progress is proportional to the distance was proposed in~\cite{DoerrJW12}. The multiplicative drift theorem is often a good tool for the analysis of randomized search heuristic on the \onemax test function and related problems. However, for the very precise bound that we aim at it in this present work, it does not suffice to approximate the true drift behavior by a linear progress estimate. For this reason, we resort to the most general technique known in the area of drift analysis, which is called \emph{variable} drift, and which assumes no particular behavior of the progress. Variable drift was independently developed in~\cite{MitavskyRC09} and~\cite{Johannsen10}. 

\begin{theorem}[Johannsen's Theorem~\cite{Johannsen10}] \label{U}
	Let $(X_t)_{t\ge0}$ be a sequence of non-negative random variables over a finite state space in $\mathcal{S}\subset\mathbb{R}$ and let $x_{\min}=\min\{x\in\mathcal{S}:x>0\}$. Furthermore, let $T$ be the random variable that denotes the earliest
	point in time $t \ge 0$ such that $X_t = 0$. Suppose that there exists a continuous and monotonically increasing function $h:\mathbb{R}_0^+\to\mathbb{R}_0^+$ such that for all $t<T$ it holds that
	\begin{equation*}
	 \E\left(X_t-X_{t+1}\mid X_t\right)\ge h(X_t).
	\end{equation*}
Then
	\begin{equation*}
	\E\left(T\mid X_0\right)\le \frac{x_{\min}}{h(x_{\min})}+\int_{x_{\min}}^{X_0}\frac{1}{h(x)} dx.
	\end{equation*}
\end{theorem}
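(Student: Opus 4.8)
The plan is to prove Johannsen's variable drift theorem by the standard technique of finding a suitable potential (Lyapunov) function $g$ that linearizes the drift, so that the additive drift theorem (Theorem~\ref{THM:ADDITIVEDRIFT}) can be applied to the transformed process $g(X_t)$. Concretely, I would define
\begin{equation*}
g(x) = \frac{x_{\min}}{h(x_{\min})} + \int_{x_{\min}}^{x} \frac{1}{h(y)}\,dy \quad \text{for } x \ge x_{\min},
\end{equation*}
and set $g(0) = 0$. The motivation is that the claimed upper bound on $\E(T \mid X_0)$ is exactly $g(X_0)$, so if I can show that $g(X_t)$ has additive drift at least $1$ per step (while $T > t$), then Theorem~\ref{THM:ADDITIVEDRIFT} with $c = 1$ immediately yields $\E(T \mid X_0) \le g(X_0)/1 = g(X_0)$, which is the desired statement.

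The key step is therefore to verify that $\E\bigl(g(X_t) - g(X_{t+1}) \mid X_t\bigr) \ge 1$ for all $t < T$. First I would record that $g$ is monotonically increasing (since $h > 0$, so $1/h > 0$) and concave: because $h$ is monotonically increasing, its reciprocal $1/h$ is monotonically decreasing, and $g' = 1/h$ being decreasing means $g$ is concave on $[x_{\min}, \infty)$. The crucial inequality to establish is that for any two states $x > x' \ge 0$ reachable by the process,
\begin{equation*}
g(x) - g(x') \ge \frac{x - x'}{h(x)}.
\end{equation*}
This follows from concavity (equivalently, from $g'$ being decreasing): writing $g(x) - g(x') = \int_{x'}^{x} \frac{1}{h(y)}\,dy$ and using that $1/h(y) \ge 1/h(x)$ for all $y \le x$ (by monotonicity of $h$) gives $\int_{x'}^{x} \frac{1}{h(y)}\,dy \ge (x - x')/h(x)$. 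One must handle the case $x' = 0$ separately using $g(0) = 0$ and the boundary term $x_{\min}/h(x_{\min})$, checking that the inequality still holds there; this is where the additive constant in the definition of $g$ earns its place.

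Given this pointwise inequality, I would take conditional expectations: conditioning on $X_t = x$ and using that $X_{t+1} \le X_t$ is not assumed, but rather that $g$ applied to the (possibly larger) value still behaves—actually the process need not be non-increasing, so I would instead apply the inequality $g(X_t) - g(X_{t+1}) \ge (X_t - X_{t+1})/h(X_t)$, which holds pointwise for every realization by the concavity argument (valid whether $X_{t+1}$ is smaller or larger than $X_t$, since $g$ is concave and $g'$ is decreasing the inequality $g(x) - g(x') \ge g'(x)(x - x')$ holds for all $x'$). Taking expectations and using the drift hypothesis $\E(X_t - X_{t+1} \mid X_t) \ge h(X_t)$ then gives $\E(g(X_t) - g(X_{t+1}) \mid X_t) \ge h(X_t)/h(X_t) = 1$. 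The main obstacle I anticipate is the careful treatment of the discrete boundary at $0$: ensuring that the potential difference across the final jump into the absorbing state $0$ is bounded correctly, since the integral representation and the additive boundary term $x_{\min}/h(x_{\min})$ must combine to dominate the jump of size $X_t$ down to $0$. Once the pointwise concavity inequality is verified uniformly, including at the boundary, the rest reduces to a direct invocation of the additive drift theorem.
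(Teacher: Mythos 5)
Your proof is correct, but there is no paper-internal proof to compare it to: the paper imports Theorem~\ref{U} from~\cite{Johannsen10} without proof and uses it as a black box (its own Theorem~\ref{dis_U} is proved by reduction \emph{to} Theorem~\ref{U} via approximating a step function by continuous functions). What you have reconstructed is essentially Johannsen's original potential-function argument in streamlined form: define $g(0)=0$ and $g(x)=\frac{x_{\min}}{h(x_{\min})}+\int_{x_{\min}}^{x}\frac{dy}{h(y)}$ for $x\ge x_{\min}$, prove the pointwise inequality $g(X_t)-g(X_{t+1})\ge (X_t-X_{t+1})/h(X_t)$, and finish with Theorem~\ref{THM:ADDITIVEDRIFT}. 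You handle the two genuinely delicate points correctly: since the process is not assumed non-increasing, the inequality $g(x)-g(x')\ge (x-x')/h(x)$ must hold on both sides of $x$, which it does because $g'=1/h$ is decreasing on $[x_{\min},\infty)$; and for the jump into the absorbing state the additive constant is exactly what is needed, via $g(x)-g(0)\ge \frac{x_{\min}}{h(x)}+\frac{x-x_{\min}}{h(x)}=\frac{x}{h(x)}$ by monotonicity of $h$ (a one-line check you flag but leave implicit). Two minor observations: your argument nowhere uses continuity of $h$ (monotonicity already makes $1/h$ Riemann integrable on bounded intervals), consistent with the paper's remark that continuity is dispensable~\cite{RoweS14}; and the paper's additive drift theorem formally also requires an upper bound on the drift, which exists trivially here since the state space is finite. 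Finally, note that your technique is precisely the one the paper itself employs for its discrete lower-bound variant, Theorem~\ref{dis_L}, where a potential $g(x)=\sum_{i=0}^{x-1}1/h(i)$ is defined and the additive drift theorem is applied to $g(X_t)$ --- so your route, while independent of the paper's treatment of this particular statement, is very much in the paper's spirit.
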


We recall that a function $h:\R \rightarrow \R$ is monotonically increasing (decreasing), if $h(r)\leq h(s)$ ($h(r)\geq h(s)$, respectively) holds for all $r,s \in \R$ with $r <s$. 

In Theorem~\ref{U} the assumption that $h$ is continuous is not necessary. This was shown in~\cite{RoweS14} by redoing the original proof, but replacing the clumsy mean-value argument used to show equation~(4.2.1) in~\cite{Johannsen10} by a natural elementary estimate. This result was further used to reprove a not so easily accessible variable drift theorem for discrete search spaces~\cite{MitavskyRC09}. We give below an alternative proof for this discrete drift theorem, which uses the elementary idea to approximate a step function by continuous functions.  

\begin{theorem}[Discrete Variable Drift, upper bound] \label{dis_U}
	Let $(X_t)_{t\ge0}$ be a sequence of random variables in $[0..n]$ and let $T$ be the random variable that denotes the earliest point in time $t \ge 0$ such that $X_t = 0$. Suppose that there exists a monotonically increasing function $h:\{1,\dots,n\}\to\mathbb{R}_0^+$ such that
	\begin{equation*}
	\E\left(X_t-X_{t+1}\mid X_t\right)\ge h(X_t)
	\end{equation*}
	holds for all $t<T$. Then
	\begin{equation*}
	\E\left(T\mid X_0\right)\le \sum_{i=1}^{X_0}\frac{1}{h(i)}.
	\end{equation*}
\end{theorem}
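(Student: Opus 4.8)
The plan is to deduce the discrete theorem from the continuous variant of Johannsen's theorem (Theorem~\ref{U} with the continuity hypothesis removed, as discussed after its statement) by constructing a suitable auxiliary step function $\tilde h$ on $\mathbb{R}_0^+$ that agrees with $h$ on the integers and whose integral reproduces the sum $\sum_{i=1}^{X_0} 1/h(i)$ exactly. The key point is that our process lives on $[0..n]$, so between consecutive integers the process never takes a value, and we are free to define $\tilde h$ on the gaps however is most convenient.

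\textbf{Key steps.} First I would define the step function $\tilde h : \mathbb{R}_0^+ \to \mathbb{R}_0^+$ by $\tilde h(x) := h(\lceil x \rceil)$ for $x>0$ (and any positive value at $0$), so that $\tilde h$ is constant on each half-open interval $(i-1, i]$ and equals $h(i)$ there. Because $h$ is monotonically increasing on $\{1,\dots,n\}$, this $\tilde h$ is monotonically increasing on $\mathbb{R}_0^+$. Second, I would verify the drift hypothesis transfers: for any $t<T$ the state $X_t$ is a positive integer $i$, and then $\tilde h(X_t) = \tilde h(i) = h(i) = h(X_t) \le \E(X_t - X_{t+1} \mid X_t)$, so the premise of the continuous theorem holds with $\tilde h$ in place of $h$. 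Third, I would apply the (continuity-free) version of Theorem~\ref{U} with $x_{\min}=1$ to obtain
\begin{equation*}
\E(T \mid X_0) \le \frac{1}{\tilde h(1)} + \int_1^{X_0} \frac{1}{\tilde h(x)}\, dx.
\end{equation*}
Fourth, I would evaluate the integral: since $\tilde h$ is the step function above, $1/\tilde h$ is constant and equal to $1/h(i)$ on each interval $(i-1,i]$, so $\int_{i-1}^i 1/\tilde h(x)\,dx = 1/h(i)$, and summing these over $i=2,\dots,X_0$ gives $\int_1^{X_0} 1/\tilde h(x)\,dx = \sum_{i=2}^{X_0} 1/h(i)$. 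Adding the boundary term $1/\tilde h(1) = 1/h(1)$ recombines everything into $\sum_{i=1}^{X_0} 1/h(i)$, which is exactly the claimed bound.

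\textbf{Main obstacle.} The only genuine subtlety is the choice of the direction in which the step function rounds, i.e.\ using $\lceil x\rceil$ rather than $\lfloor x\rfloor$. With the ceiling convention each interval $(i-1,i]$ carries the value $h(i)$, and because $h$ is increasing this makes $\tilde h$ small where $1/\tilde h$ should be large, ensuring the integral-plus-boundary-term telescopes cleanly into $\sum_{i=1}^{X_0} 1/h(i)$ without over- or undercounting. A floor-based definition would shift the indices and produce $\sum_{i=1}^{X_0-1}1/h(i)$ plus a mismatched boundary term, so getting the bookkeeping exactly right at the endpoints $x_{\min}=1$ and $x=X_0$ is where care is needed; everything else is routine. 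I would also note that no continuity of $\tilde h$ is required, which is precisely why we rely on the strengthened version of Johannsen's theorem mentioned in the text rather than the original formulation.
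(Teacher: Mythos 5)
Your proof is correct, but it follows a genuinely different route from the paper's. You apply the continuity-free strengthening of Johannsen's theorem (which the paper attributes to \cite{RoweS14} in the discussion preceding Theorem~\ref{dis_U}) directly to the step function $\tilde h(x)=h(\lceil x\rceil)$, so that the integral evaluates exactly to $\sum_{i=2}^{X_0}1/h(i)$ and the boundary term supplies $1/h(1)$. The paper deliberately avoids invoking that strengthened result: its stated aim is to give an alternative, self-contained proof that needs only Theorem~\ref{U} as literally stated, with continuity. To this end it defines, for each $0<\eps<1$, a continuous monotone function $h_\eps$ that ramps linearly from $h(\lfloor x\rfloor)$ to $h(\lceil x\rceil)$ on the first $\eps$-portion of each unit interval and equals $h(\lceil x\rceil)$ on the rest --- i.e., a continuous approximation of your $\tilde h$ --- applies Theorem~\ref{U} to $h_\eps$, bounds $\int_1^{X_0} h_\eps(x)^{-1}\,\mathrm{d}x$ by $\frac{\eps}{h(1)}-\frac{\eps}{h(X_0)}+\sum_{i=2}^{X_0}\frac{1}{h(i)}$, and then lets $\eps\to 0$. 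Your route buys a shorter argument with an exact rather than limiting computation; the paper's route buys independence from \cite{RoweS14}: if one only had Theorem~\ref{U} with its continuity hypothesis, applying it to your discontinuous $\tilde h$ would be precisely the gap, and the $\eps$-approximation is exactly how the paper closes it. Since the paper itself asserts, with citation, that continuity can be dropped, your proof is legitimate as written. Two minor remarks: the ceiling-versus-floor bookkeeping you emphasize is common to both arguments (the paper's $h_\eps$ is also ceiling-based away from the ramps), so the substantive difference lies only in the treatment of the discontinuity; and for completeness $\tilde h$ should be extended to all of $\mathbb{R}_0^+$, e.g.\ by setting $\tilde h(x):=h(n)$ for $x>n$, since Theorem~\ref{U} asks for a function defined on $\mathbb{R}_0^+$.
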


\begin{proof}
For all $0<\eps<1$, define a function $h_{\eps}:[1,n]\to\mathbb{R}^+$ by

	\begin{equation}\label{h_cont}
	h_{\eps}(x):=\left\{
	\begin{array}{l@{\:~ \text{ for}~}l@{\:}}
		h(\lfloor x \rfloor)+\frac{x-\lfloor x \rfloor}{\eps}
		\big( h(\lceil x \rceil)-h(\lfloor x \rfloor) \big)	 & 0<x-\lfloor x \rfloor\le\epsilon,\\
	h(\lceil x \rceil) & \epsilon<x-\lfloor x \rfloor,\\
	\end{array}
	\right.
	\end{equation}
	
	The continuous monotone function $h_{\eps}$ satisfies $	\E(X_t-X_{t+1}\mid X_t)\ge h_{\eps}(X_t)$ for all $t<T$. We compute
	\begin{eqnarray*}
		\int_{1}^{X_0}\frac{1}{h_{\eps}(x)} dx &\le&
		\sum_{i=2}^{X_0} \frac{\eps}{h(i-1)}+\frac{1-\eps}{h(i)}\\
		&=&\frac{\eps}{h(1)}-\frac{\eps}{h(X_0)}+\sum_{i=2}^{X_0} \frac{1}{h(i)}.
	\end{eqnarray*}
	Hence by  Theorem \ref{U}, we have $\E(T\mid X_0) \le \frac{\eps}{h(1)}-\frac{\eps}{h(X_0)} +\sum_{i=1}^{X_0}\frac{1}{h(i)}$ for all $\eps > 0$, which proves the claim. 
  \end{proof}

To prove lower bounds on runtimes, the following variable drift theorem having a similar structure as Theorem~\ref{U} was given in~\cite{DoerrFW11}. We state the result in the original formulation of~\cite{DoerrFW11}, but note that as for Theorem~\ref{U}, the assumptions that $h$ and $c$ are continuous are not necessary. 
As a main difference to Theorem~\ref{U}, we now have the additional requirement that the process with probability one in each round does not move too far towards the target (first condition of the theorem below). 

\begin{theorem}[Variable Drift, lower bound~\cite{DoerrFW11}] \label{L}
	Let $(X_t)_{t\ge0}$ be a decreasing sequence of non-negative random variables over a finite state space in $\mathcal{S}\subset\mathbb{R}$ and let $x_{\min}=\min\{x\in\mathcal{S}:x>0\}$. Furthermore, let $T$ be the random variable that denotes the earliest
	point in time $t \ge 0$ such that $X_t = 0$. Suppose that there exists two continuous and monotonically increasing function $c,h:\mathbb{R}_0^+\to\mathbb{R}_0^+$ such that
	\begin{enumerate} 
		\item $X_{t+1}\ge c(X_t)$,
		\item $\E(X_t-X_{t+1}\mid X_t)\le h(c(X_t))$.
	\end{enumerate}
	Then 
	\begin{equation*}
	\E\left(T\mid X_0\right)\ge \frac{x_{\min}}{h(x_{\min})}+\int_{x_{\min}}^{X_0}\frac{1}{h(x)} dx.
	\end{equation*}
\end{theorem}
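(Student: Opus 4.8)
The plan is to construct a potential (Lyapunov) function $g:\mathbb{R}_0^+\to\mathbb{R}_0^+$ so that the rescaled process $Y_t:=g(X_t)$ has additive drift at most $1$, and then to read off the lower bound on $\E(T)$ from the additive drift theorem (Theorem~\ref{THM:ADDITIVEDRIFT}). Concretely, I would set $g(x):=x/h(x_{\min})$ for $0\le x\le x_{\min}$ and $g(x):=x_{\min}/h(x_{\min})+\int_{x_{\min}}^{x}1/h(y)\,dy$ for $x> x_{\min}$. Equivalently, $g(x)=\int_0^x 1/\tilde h(y)\,dy$, where $\tilde h(y):=h(\max\{y,x_{\min}\})$ is the increasing function obtained by flattening $h$ to the constant value $h(x_{\min})$ below $x_{\min}$. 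This $g$ is continuous, strictly increasing, satisfies $g(0)=0$, and has $g(X_0)$ equal to the claimed right-hand side whenever $X_0\ge x_{\min}$ (and $T=0$ trivially otherwise). Since $1/\tilde h$ is non-increasing, $g$ is concave; moreover $\tilde h$ dominates $h$ everywhere and is monotone on all of $\mathbb{R}_0^+$.

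The heart of the argument is the pointwise estimate, valid for every state $s=X_t\ge x_{\min}$ and every possible successor value $X_{t+1}\le s$ (recall the process is non-increasing),
\[
g(s)-g(X_{t+1})=\int_{X_{t+1}}^{s}\frac{1}{\tilde h(y)}\,dy\le\frac{s-X_{t+1}}{\tilde h(X_{t+1})}\le\frac{s-X_{t+1}}{h(c(s))}.
\]
The first inequality is just monotonicity of $\tilde h$ (so $1/\tilde h(y)\le 1/\tilde h(X_{t+1})$ throughout the integration range); the second combines the assumption $X_{t+1}\ge c(s)$ from condition~(1) with the facts that $\tilde h$ is increasing and dominates $h$, which together give $\tilde h(X_{t+1})\ge\tilde h(c(s))\ge h(c(s))$. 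This inequality also covers the boundary case $X_{t+1}=0$, where $\tilde h(0)=h(x_{\min})$ and $c(s)\le 0$ force consistency.

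Taking conditional expectation and invoking condition~(2) then gives the drift bound
\[
\E\big(g(X_t)-g(X_{t+1})\mid X_t=s\big)\le\frac{s-\E(X_{t+1}\mid X_t=s)}{h(c(s))}=\frac{\E(X_t-X_{t+1}\mid X_t=s)}{h(c(s))}\le 1
\]
for all $t<T$. Since $Y_t=g(X_t)$ is a non-negative random variable on a finite state space (as $\mathcal S$ is finite), vanishes exactly when $X_t$ does, and has expected one-step decrease at most $1$, the lower-bound direction of the additive drift theorem (Theorem~\ref{THM:ADDITIVEDRIFT}, which for the conclusion $\E(T)\ge Y_0/d$ only uses the upper estimate $d=1$ on the drift) yields $\E(T\mid X_0)\ge g(X_0)$, which is precisely the claim. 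I expect the only genuinely delicate point to be the design of $g$: extending $h$ by a constant below $x_{\min}$ is what makes $\tilde h$ monotone on all of $\mathbb{R}_0^+$ and lets the concavity step absorb successor values lying in $[c(s),x_{\min})$, and it is also what produces the leading $x_{\min}/h(x_{\min})$ term. Everything else is routine once this function is in place.
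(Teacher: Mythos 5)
Your proof is correct and takes essentially the same route as the paper: Theorem~\ref{L} itself is only cited from~\cite{DoerrFW11}, but the paper's own proofs of its discrete analogues (Theorems~\ref{dis_L} and~\ref{dis_L'}) use exactly your strategy of a potential function $g$ built from reciprocals of $h$, the pointwise estimate $g(X_t)-g(X_{t+1})\le (X_t-X_{t+1})/h(c(X_t))$ via monotonicity, and an application of the additive drift theorem (Theorem~\ref{THM:ADDITIVEDRIFT}). The only cosmetic difference is that the paper's $g$ is a sum rather than an integral, and your flattening of $h$ to the constant $h(x_{\min})$ below $x_{\min}$ is the continuous counterpart of that discretization, producing the leading $x_{\min}/h(x_{\min})$ term.
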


The first condition of the theorem above, that with probability one no progress beyond a given limit is made, is a substantial restriction to the applicability of the theorem, in particular, when working with evolutionary algorithms, where often any parent can give birth to any offspring (though usually with very small probability). In~\cite{DoerrFW11}, this problem was overcome with the simple argument that a progress of more than $\sqrt x$ from a search point with fitness distance $x$ occurs with such a small probability that it does with high probability not occur in a run of typical length. 

For the precise bounds that we aim at, such a simple argument cannot work. We therefore show a  version of the lower bound variable drift theorem which allows larger jumps provided that they occur sufficiently rarely. To prove this result, we cannot use a blunt union bound over all bad events of too large jumps, but have to take these large jumps into account when computing the additive drift in a proof analogous to the one of~\cite{DoerrFW11}. This idea was already used in~\cite{GiessenW16}, where a similar, but more technical drift theorem was derived. The result of~\cite{GiessenW16} is valid for arbitrary domains, whereas we restrict ourselves to processes over the non-negative integers, but this is not the main reason for the simplicity of our result.

\begin{theorem}[Discrete Variable Drift, lower bound] \label{dis_L}
	Let $(X_t)_{t\ge0}$ be a sequence of decreasing random variables in $[0..n]$ and let $T$ be the random variable that denotes the earliest point in time $t \ge 0$ such that $X_t = 0$.
	Suppose that there exists two monotonically increasing functions $c:[n]\to [0..n]$ and $h:[0..n]\to\mathbb{R}_0^+$, and a constant $0\le p<1$ such that
	\begin{enumerate} 
		\item $X_{t+1}\ge c(X_t)$ with probability at least $1-p$ for all $t<T$,
		\item $\E(X_t-X_{t+1}\mid X_t)\le h(c(X_t))$ holds for all $t<T$.
	\end{enumerate}
	Let $g:[0..n] \to \mathbb{R}_0^+$ be the function defined  by $g(x)=\sum_{i=0}^{x-1}\frac{1}{h(i)}$. Then 
	\begin{equation*}
	\E(T\mid X_0)\ge g(X_0)-\frac{g^2(X_0)p}{1+g(X_0)p}.
	\end{equation*}
\end{theorem}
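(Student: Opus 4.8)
The plan is to follow the standard potential-function route for variable-drift lower bounds, taking the discrete potential $g$ itself as the Lyapunov function and reducing everything to the additive drift theorem (Theorem~\ref{THM:ADDITIVEDRIFT}). Concretely, I would pass from $(X_t)_{t\ge0}$ to the process $(g(X_t))_{t\ge0}$. Since every summand $1/h(i)$ is positive, $g$ is strictly increasing with $g(0)=0$, so $g(X_t)=0$ exactly when $X_t=0$; hence the hitting time $T$ is unchanged by this transformation. The whole proof then reduces to establishing a \emph{uniform} upper bound $d$ on the one-step drift $\E(g(X_t)-g(X_{t+1})\mid X_t)$, because an upper bound on the drift of a non-negative potential that vanishes only at the target yields $\E(T\mid X_0)\ge g(X_0)/d$ by the lower-bound direction of additive drift (a one-line submartingale argument that uses only this upper bound, so that no positive lower bound on the drift is required).

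The key step is to bound this drift by splitting on the rare large-jump event. Fix $X_t=x>0$ and let $B$ be the event $\{X_{t+1}<c(x)\}$, so that $P(B\mid X_t=x)\le p$ by the first hypothesis. Using $g(x)-g(y)=\sum_{i=y}^{x-1}1/h(i)$ for $y\le x$, I would write
\[
\E\big(g(X_t)-g(X_{t+1})\mid X_t=x\big)
=\E\big[(g(x)-g(X_{t+1}))\mathbbm{1}_{\overline B}\big]
+\E\big[(g(x)-g(X_{t+1}))\mathbbm{1}_{B}\big].
\]
On $\overline B$ we have $X_{t+1}\ge c(x)$, so every index $i$ in the sum satisfies $i\ge c(x)$ and hence, by monotonicity of $h$, $1/h(i)\le 1/h(c(x))$; this gives the telescoping estimate $g(x)-g(X_{t+1})\le (x-X_{t+1})/h(c(x))$. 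Since $x-X_{t+1}\ge0$, dropping the indicator and invoking the second hypothesis $\E(X_t-X_{t+1}\mid X_t=x)\le h(c(x))$ bounds the first term by $1$. For the second term I would use the crude but cheap bound $g(x)-g(X_{t+1})\le g(x)\le g(X_0)$ (valid since $g\ge0$, $g$ is increasing, and $X_t\le X_0$ because the sequence decreases), together with $P(B)\le p$, to bound it by $g(X_0)p$. Hence the drift is at most $d:=1+g(X_0)p$.

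Plugging $d=1+g(X_0)p$ into $\E(T\mid X_0)\ge g(X_0)/d$ gives $\E(T\mid X_0)\ge g(X_0)/(1+g(X_0)p)$, and the elementary identity $g(X_0)/(1+g(X_0)p)=g(X_0)-g^2(X_0)p/(1+g(X_0)p)$ yields exactly the claimed bound. The main obstacle, and the whole point of the theorem relative to Theorem~\ref{L}, is the treatment of the large-jump event: one cannot afford a union bound over the entire run, so the contribution of $B$ must be folded directly into the per-step drift. The delicate aspect is that on $B$ the potential can drop by as much as $g(x)$, so it is essential both that these events carry total probability only $p$ and that $g(x)$ be bounded uniformly by $g(X_0)$; this is precisely what converts a potentially useless estimate into the small additive correction $g^2(X_0)p/(1+g(X_0)p)$.
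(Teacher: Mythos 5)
Your proposal is correct and follows essentially the same route as the paper's proof: take $g$ itself as the potential, bound the one-step drift of $g(X_t)$ by $1+g(X_0)p$ via a case split on the rare large-jump event $\{X_{t+1}<c(X_t)\}$ (telescoping with monotonicity of $h$ on the good event, the crude bound $g(X_t)\le g(X_0)$ on the bad one), and finish with the lower-bound direction of the additive drift theorem. Your indicator-based splitting $\E[(g(X_t)-g(X_{t+1}))\mathbbm{1}_{\overline B}]\le \E[X_t-X_{t+1}]/h(c(X_t))\le 1$ is in fact a slightly cleaner rendering of the paper's conditional-expectation computation, but it is the same argument.
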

\begin{proof}
	The function $g$ is strictly monotonically increasing. We have $g(X_t)=0$ if and only if $X_t=0$. Using condition $1$ and the monotonicity of $h$, in the case $X_{t+1}\ge c(X_t)$ we have
	\begin{equation*}
	g(X_t)-g(X_{t+1})=\sum_{i=X_{t+1}}^{X_t-1}\frac{1}{h(i)}\le \frac{X_t-X_{t+1}}{h(X_{t+1})} \le \frac{X_t-X_{t+1}}{h(c(X_t))},
	\end{equation*}
	and otherwise
	\begin{equation*}
	g(X_t)-g(X_{t+1})\le g(X_t)\le g(X_0).
	\end{equation*}
	Using inequality $\E(X_t-X_{t+1}\mid X_t)\le \E(X_t-X_{t+1}\mid X_{t+1}\ge c(X_t))$ and condition 2, we have
	\begin{eqnarray*}
	\E\left(g(X_t)-g(X_{t+1})\mid g(X_t)\right)&\le&
	\E\left(\frac{X_t-X_{t+1}}{h(c(X_t))}\mid X_{t+1}\ge c(X_t)\right)(1-p)+g(X_0)p\\
	&\le& 1+g(X_0)p.
	\end{eqnarray*}
	Applying the additive drift theorem~\ref{THM:ADDITIVEDRIFT} to $g(X_t)_{t\ge 0}$, we obtain
	\begin{equation*}
	\E\left(T\mid X_0\right)=\E\left(T\mid g(X_0)\right)\ge \frac{g(X_0)}{1+g(X_0)p}=g(X_0)-\frac{g^2(X_0)p}{1+g(X_0)p}.
	\end{equation*}
  \end{proof}

To apply the drift theorem above (or Theorem~\ref{L}) one needs to guess a suitable function~$h$ such that $h \circ c$ is an upper bound for the drift. The following simple reformulation overcomes this difficulty by making $h(x)$ simply an upper bound for the drift from a state~$x$. This also makes the result easier to interpret. The influence of large jumps, as quantified by $c$, now is that the runtime bound is not anymore the sum of all $h(x)^{-1}$ as in the upper bound theorem, but of all $h(\mu(x))^{-1}$, where $\mu(x)$ is the largest point $y$ such that $c(y) \le x$. 
So in simple words, we have to replace the drift at $x$ pessimistically by the largest drift among the points from which we can go to $x$ (or further) in one round with probability more than $p$.

\begin{theorem}[Discrete Variable Drift, lower bound] \label{dis_L'}
	Let $(X_t)_{t\ge0}$ be a sequence of decreasing random variables in  $[0..n]$ and let $T$ be the random variable that denotes the earliest point in time $t \ge 0$ such that $X_t = 0$.
	Suppose that there exists two functions $c:\{1,\dots,n\}\to [0..n]$ and monotonically 
	increasing  $h:[0..n] \to\mathbb{R}_0^+$, and a constant $0\le p<1$ such that 
	\begin{enumerate} 
		\item $X_{t+1}\ge c(X_t)$ with probability at least $1-p$ for all $t<T$,
		\item $\E(X_t-X_{t+1}\mid X_t)\le h(X_t)$ holds for all $t<T$.
	\end{enumerate}
	Let $\mu:[0..n] \to [0..n]$ be the function defined by
	$\mu(x)=\max\{i|c(i)\le x\}$ and $g:[0..n]\to \mathbb{R}_0^+$ be the function defined  by $g(x)=\sum_{i=0}^{x-1}\frac{1}{h(\mu(i))}$. Then 
	\begin{equation*}
	\E(T\mid X_0)\ge g(X_0)-\frac{g^2(X_0) p}{1+g(X_0)p}.
	\end{equation*}
\end{theorem}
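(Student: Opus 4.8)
The plan is to follow the proof of Theorem~\ref{dis_L} almost verbatim, applying the additive drift theorem (Theorem~\ref{THM:ADDITIVEDRIFT}) to the transformed process $(g(X_t))_{t\ge0}$; the only genuinely new ingredient is an observation that lets the cleaner hypothesis $\E(X_t-X_{t+1}\mid X_t)\le h(X_t)$ of the present theorem play the role that $\E(X_t-X_{t+1}\mid X_t)\le h(c(X_t))$ played in Theorem~\ref{dis_L}.

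First I would record the elementary monotonicity facts. Although $c$ is not assumed monotone here, the function $\mu(x)=\max\{i:c(i)\le x\}$ is monotonically increasing: enlarging $x$ only enlarges the set $\{i:c(i)\le x\}$, so its maximum cannot decrease. Consequently $h\circ\mu$ is monotonically increasing (as $h$ is), the summands $1/h(\mu(i))$ are nonincreasing in $i$, and $g$ is strictly increasing with $g(X_t)=0$ if and only if $X_t=0$; in particular $T$ is also the first time that $g(X_t)=0$.

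The key step is the inequality $\mu(X_{t+1})\ge X_t$ on the event $A:=\{X_{t+1}\ge c(X_t)\}$. Indeed, $X_{t+1}\ge c(X_t)$ means $X_t\in\{i:c(i)\le X_{t+1}\}$, so $\mu(X_{t+1})\ge X_t$, and since $h$ is increasing this yields $h(\mu(X_{t+1}))\ge h(X_t)$. Using that $1/h(\mu(\cdot))$ is nonincreasing, on $A$ I can then bound
\begin{equation*}
g(X_t)-g(X_{t+1})=\sum_{i=X_{t+1}}^{X_t-1}\frac{1}{h(\mu(i))}\le\frac{X_t-X_{t+1}}{h(\mu(X_{t+1}))}\le\frac{X_t-X_{t+1}}{h(X_t)},
\end{equation*}
whereas on the complementary event I use the crude $g(X_t)-g(X_{t+1})\le g(X_t)\le g(X_0)$, valid because $g$ is increasing and $(X_t)$ is decreasing. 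Taking conditional expectations and splitting over $A$ and its complement, the first bound together with condition~2 gives a contribution at most $\E(X_t-X_{t+1}\mid X_t)/h(X_t)\le 1$ (the indicator of $A$ may be dropped since the increment is nonnegative), while the complementary event contributes at most $g(X_0)p$ because $\Pr(\overline{A})\le p$. Hence $\E(g(X_t)-g(X_{t+1})\mid g(X_t))\le 1+g(X_0)p$.

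Finally I would apply the additive drift theorem to $(g(X_t))$ with upper drift bound $1+g(X_0)p$, obtaining $\E(T\mid X_0)=\E(T\mid g(X_0))\ge g(X_0)/(1+g(X_0)p)$, which rearranges to the claimed $g(X_0)-g^2(X_0)p/(1+g(X_0)p)$. The main conceptual obstacle is exactly the one already present in Theorem~\ref{dis_L}: the rare large jumps (the event $\overline{A}$) cannot be discarded by a union bound but must be folded directly into the additive-drift computation, which is why the $g(X_0)p$ error term appears. The new subtlety specific to this reformulation is the monotonicity of $\mu$ despite $c$ being arbitrary, together with the identity $\mu(X_{t+1})\ge X_t$ on $A$, which is precisely what converts the pointwise drift bound $h(X_t)$ into the form needed by the argument.
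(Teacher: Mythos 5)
Your proof is correct, but it takes a different route from the paper's. The paper proves Theorem~\ref{dis_L'} by \emph{reduction} to Theorem~\ref{dis_L}: it defines $\hat{c}(r):=\min\{i\mid \mu(i)\ge r\}$ and $\hat{h}:=h\circ\mu$, verifies that $\hat c$ is monotone with $\hat{c}(r)\le c(r)$ and $\mu(\hat{c}(r))\ge r$, so that conditions 1 and 2 of Theorem~\ref{dis_L} hold for the pair $(\hat c,\hat h)$, and then invokes that theorem as a black box (which yields exactly the claimed $g$, since $\sum_{i=0}^{x-1} 1/\hat h(i)=\sum_{i=0}^{x-1}1/h(\mu(i))$). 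You instead inline the argument: you rerun the additive-drift computation on $g(X_t)$ directly, replacing the paper's auxiliary functions by the single observation that on the event $A=\{X_{t+1}\ge c(X_t)\}$ one has $\mu(X_{t+1})\ge X_t$ and hence $h(\mu(X_{t+1}))\ge h(X_t)$, which is exactly what turns the pointwise bound $h(X_t)$ into a bound on $g(X_t)-g(X_{t+1})$. Both arguments rest on the same mechanism (monotonicity of $\mu$ despite $c$ being arbitrary, and pushing the drift bound through $\mu$), so the difference is one of packaging rather than substance. The paper's reduction is shorter and reuses an already-proved theorem; your direct proof is self-contained, makes visible where the cleaner hypothesis enters, and your expectation split over $A$ and $\overline A$ (dropping the indicator because the increments are nonnegative) is actually tidier than the conditional-expectation step $\E(X_t-X_{t+1}\mid X_t)\le \E(X_t-X_{t+1}\mid X_{t+1}\ge c(X_t))$ used inside the paper's proof of Theorem~\ref{dis_L}.
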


\begin{proof}
	Let $\hat{c}:[0..n]\to [0..n], r \mapsto \hat{c}(r):=\min\{i|\mu(i)\ge r\}$.
	By definition, $\hat{c}$ is monotonically increasing.
	Let $r \in [n]$. Since $r\in\{i|c(i)\le c(r)\}$, we have $\mu(c(r))\ge r$, which implies $c(r)\in\{i|\mu(i)\ge r\}$. Hence $\hat{c}(r)\le c(r)$. This shows that we have $X_{t+1}\ge \hat{c}(X_t)$ for all $t < T$.
	
	The definition of $\hat{c}$ implies that $\mu(\hat{c}(r))\ge r$. Together with the monotonicity of $h$, we obtain $\E(X_t-X_{t+1}\mid X_t)\le h(X_t)\le h(\mu(\hat{c}(X_t)))$. 
	Let $\hat{h}:[0..n]\to \mathbb{R}_0^+, r \mapsto \hat{h}(r):=h(\mu(r))$. It is easy to see from the definition that $\mu$ is monotonically increasing, therefore, $\hat{h}$ is also monotonically increasing and it satisfies $\E(X_t-X_{t+1}\mid X_t)\le \hat{h}(\hat{c}(X_t))$. Applying Theorem \ref{dis_L} to $\hat h$ and $\hat c$ shows the claim.
	  \end{proof}

\section{Maximizing Drift is Near-Optimal}
\label{sec:maxDrift}\label{SEC:MAXDRIFT}

The goal of this section is to show that the algorithm which maximizes the expected progress over the best-so-far search point is optimal, apart from lower-order terms $o(n)$, among all unary unbiased black-box algorithms. Consequently, its expected optimization time is essentially the unary unbiased black-box complexity. 

\subsection{The Drift Maximizing Algorithm}
\label{sec:ouralgo}

We regard as drift maximizing algorithm the algorithm summarized in Algorithm~\ref{alg:algo}. We denote this algorithm by $A^*$. $A^*$ starts by querying a uniform solution $x \in \{0,1\}$. 
In each iteration of the main loop the algorithm generates a new solution $y$ from $x$ by flipping exactly $R(\OM(x))$ bits in $x$, where $R:[0..n] \to [0..n]$ is a function that assigns to each fitness value the number of bits that should be flipped in a search point of this quality. We choose $R$ such that the expected progress (\emph{drift}) is maximized. When there is more than one value maximizing the drift, $R$ chooses the smallest among these. That is, 
\begin{equation}
	R(\OM(x)):= \min \left\{ \arg\max \E_{y \assign \flip_r(x)}\big(\max\{ \OM(y)-\OM(x),0 \}\big) \mid r \in [0..n] \right\}.\label{def_R}
\end{equation}

\begin{algorithm2e}[t]
	Choose $x$ uniformly at random from $S=\{0,1\}^n$\;
	\For{$t=1,2,\ldots$}{
		\Indp
		$y \assign \flip_{R(\OM(x))}(x)$\;
		\lIf{$\OM(y)\geq \OM(x)$\label{line:select}}{$x \assign y$\;}
		\Indm
	}
	\caption{Structure of our algorithm}
	\label{alg:algo}\label{ALG:ALGO}
\end{algorithm2e}
In this definition, we make use of the fact that the expected progress $\E_{y \assign \flip_r(x)}\big(\max\{ \OM(y)-\OM(x),0 \}\big)$ depends only on the fitness of $x$ but not on its structure. This is due to the symmetry of the function $\OM$.
The offspring $y$ replaces its parent $x$ if and only if $\OM(y) \ge \OM(x)$.

Note here that the function $R$ is deterministic, i.e., we only make use of a unary unbiased mutation operator which deterministically depends on the fitness of the current-best solution. Note further that the search point kept in the memory of algorithm $A^*$ is always a best-so-far solution. $A^*$ can be seen as an RLS-variant with fitness-dependent mutation strength. 

\subsection{Main Result and Proof Strategy}\label{sec:driftdefs}
The main result of this entire section is the following statement, which says that the expected runtime of $A^*$ cannot be much worse than the unary unbiased black-box complexity of \onemax.  

\begin{theorem}\label{thm:main22}
Let $A$ be a unary unbiased black-box algorithm. Denote by $T(A)$ its runtime on \onemax and by $T(A^*)$ the runtime of $A^*$. Then
$\E(T(A)) \ge \E(T(A^*)) - O(n^{2/3}\ln^9(n))$.
\end{theorem}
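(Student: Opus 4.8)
The plan is to lower-bound the runtime of an \emph{arbitrary} unary unbiased black-box algorithm $A$ by the new lower-bound variable drift theorem, to upper-bound the runtime of $A^*$ by the matching discrete upper-bound theorem, and to show that the two bounds differ by only $O(n^{2/3}\ln^9 n)$. As potential I would track the fitness distance of the best-so-far search point, $X_t := n - \max_{s\le t}\OM(x(s))$. This is an integer-valued, non-increasing process in $[0..n]$, and $T(A)$ is exactly its hitting time of $0$. Since both $A$ and $A^*$ start from a uniformly random point, $X_0$ has the same (concentrated near $n/2$) distribution for both; it therefore suffices to compare the expected runtimes conditioned on $X_0$ and then take expectations over the common distribution of $X_0$.

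The conceptual heart is a drift bound valid for \emph{every} unary unbiased algorithm. Write $\Delta(d):=\max_{r\in[0..n]}\E_{y\leftarrow\flip_r(x)}\!\big(\max\{\OM(y)-\OM(x),0\}\big)$ for any $x$ at distance $d$; by definition this is precisely the one-step drift of $A^*$ at distance $d$. I would show $\E(X_t-X_{t+1}\mid X_t=d)\le\Delta(d)$ for any algorithm. By Lemma~\ref{lem:characterizationunary} the operator applied in a step is a mixture of the $\flip_r$, and drift is the corresponding convex combination of the per-$r$ drifts, so it suffices to bound the improvement over the best-so-far threshold $d$ obtained by a single $\flip_r$ applied to the chosen parent. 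That parent has distance $d'\ge d$, because $d$ is the minimal distance seen so far. The crux is a stochastic-monotonicity statement: for fixed $r$ in the relevant range, the offspring distance of $\flip_r$ from a parent at distance $d'$ dominates that from a parent at distance $d\le d'$ (via a coupling of the flipped positions, using the hypergeometric picture of Remark~\ref{rem:expectedflips}); since $z\mapsto\max\{d-z,0\}$ is non-increasing, the expected improvement over the threshold is largest when $d'=d$, where it is at most $\Delta(d)$ by definition. Large $r$ (flipping more than half the bits) is handled separately, as it cannot beat $\Delta(d)$ in improving upon a small threshold.

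With this in hand, the upper bound is immediate: $A^*$ has drift exactly $\Delta$, and $\Delta$ is monotonically increasing (a larger distance leaves more room to improve), so Theorem~\ref{dis_U} gives $\E(T(A^*)\mid X_0)\le\sum_{d=1}^{X_0}1/\Delta(d)$. For the lower bound on $A$ I would invoke Theorem~\ref{dis_L'} with $h=\Delta$, supplying a ``large jumps are rare'' input: a function $c(d)=d-s(d)$ such that $X_{t+1}\ge c(X_t)$ holds with probability at least $1-p$ uniformly over all admissible operators. Hypergeometric tail bounds show that from distance $d$ \emph{no} operator can reduce the distance by much more than its typical fluctuation—close to the optimum only a jump of size one is non-negligibly likely (so $s(d)$ can essentially be $1$ there, which is crucial, since this regime carries the entire $n\ln n$ leading term and must be captured losslessly), whereas for $d$ a constant fraction of $n$ the decrease concentrates within $O(\sqrt{d}\,\ln n)$ of its mean. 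Theorem~\ref{dis_L'} then yields $\E(T(A)\mid X_0)\ge g(X_0)-\tfrac{g^2(X_0)p}{1+g(X_0)p}$ with $g(x)=\sum_{i=0}^{x-1}1/\Delta(\mu(i))$ and $\mu(i)=\max\{j:c(j)\le i\}$.

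The step I expect to be the main obstacle is controlling the gap between the two bounds down to $O(n^{2/3}\ln^9 n)$. Chaining them, the error splits into a shift penalty $\sum_d\big(1/\Delta(d)-1/\Delta(\mu(d))\big)$ and the residual failure term $g^2p/(1+gp)$. These pull in opposite directions: a small global failure probability $p$ keeps the second term $o(n)$ but forces the jump threshold $s(d)$ up in the bulk region, enlarging the shift $\mu(d)-d$ and hence the first sum; conversely the near-optimum region demands $s(d)\approx 1$ and thus a tiny $p$. Reconciling these requires splitting the distance axis (and likely the run itself) into a low-distance regime, treated essentially losslessly since there $\Pr[\text{jump}\ge 2]=O((d/n)^2)$, and a high-distance regime, where $\Delta(d)$ is already large enough that a coarser $\sqrt{d}\,\ln n$ shift costs only $o(n)$; the threshold between them is tuned to balance the two error sources, with the accumulated logarithmic factors arising from repeated high-probability concentration estimates and a union bound over the $\Theta(n\ln n)$ steps of a typical run. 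I regard this bookkeeping, rather than the drift inequality itself, as the genuinely delicate part of the argument.
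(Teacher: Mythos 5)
Your overall skeleton (lower bound for an arbitrary algorithm $A$ via Theorem~\ref{dis_L'}, upper bound for $A^*$ via Theorem~\ref{dis_U}, then balancing the two error terms) is the same as the paper's, but the potential you chose breaks the one step on which everything rests. You track the one-sided distance $X_t = n - \max_{s\le t}\OM(x(s))$ and claim that \emph{every} unary unbiased algorithm satisfies $\E(X_t - X_{t+1}\mid X_t = d)\le \Delta(d)$, where $\Delta(d)$ is the drift of $A^*$ from a point at distance $d$. This is false. A unary unbiased algorithm may deliberately \emph{minimize} \OM (which, by unbiasedness, is exactly as easy as maximizing it), until it holds a stored point $x'$ with $\OM(x')=0$; applying $\flip_n$ to $x'$ then produces the optimum with probability one. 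Relative to your potential this is a one-step decrease from $d$ to $0$, whereas $\Delta(d)\approx d/n$ for $d<n/3$, so the claimed inequality is off by a factor of about $n$. The same counterexample kills the stochastic-domination step (the parent at one-sided distance $d'=n$ is the \emph{best} possible parent, not a dominated one), and your parenthetical remark that large $r$ ``cannot beat $\Delta(d)$ in improving upon a small threshold'' is precisely where the argument collapses. This is not a pathological technicality: minimize-then-invert algorithms are legitimate unary unbiased algorithms with runtime $\approx n\ln n$, so they do not contradict the theorem, but any correct lower-bound proof must accommodate them rather than bound their per-step drift by $\Delta$.

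The paper avoids this by measuring progress with the two-sided distance $d(x)=\min\{\OM(x),\, n-\OM(x)\}$ and the potential $X_t=\min_{i\le t} d(x(i))$, the cost of the final inversion being absorbed by Lemma~\ref{lem:BBCdistance}. With this potential the maximal per-step drift $h$ of an arbitrary algorithm is well defined, and the real work---which your proposal implicitly assumes away---is Lemma~\ref{lem:compare_h}: showing $0\le h(k)-\tilde h(k)\le n\exp(-\Omega(n^{0.2}))$ for $k\le n/2-n^{0.6}$, i.e., that mutating a best-so-far point is essentially an optimal choice of parent. This rests on the combinatorial domination statements of Lemmas~\ref{lem:domi} and~\ref{lem:mono_h} and Corollary~\ref{cor:optoperator} (close in spirit to the coupling you sketch, but applied only to parents on the same side) \emph{plus} Chernoff bounds to control offspring that cross to the other side of the two-sided distance; in particular the domination holds only up to an exponentially small error and only in the stated range of $k$, not exactly as you assert. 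Your error bookkeeping in the final paragraph is otherwise consistent with the paper's proof of Theorem~\ref{thm:LBany}: there the jump bound of Lemma~\ref{prob-large-jump} uses a single failure probability $p=n^{-4/3}\ln^7 n$ with three distance regimes (not two), and no union bound over steps is needed, since the failure probability enters only through the term $g^2(X_0)p/(1+g(X_0)p)$ of Theorem~\ref{dis_L'}.
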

For the proof of Theorem~\ref{thm:main22} we derive in Section~\ref{sec:lowerany} a lower bound for the expected runtime of any unary unbiased algorithm, cf. Theorem~\ref{thm:LBany}. We then prove in Section~\ref{sec:driftmaxupper} an upper bound for the expected runtime of $A^*$, cf. Theorem~\ref{thm:UBdriftmax}. For both statements, we use the variable drift theorems presented in Section~\ref{sec:drift}. We therefore need to define suitable drift functions which bound from below the expected progress that can be made by algorithm $A^*$ and from above the maximal expected progress that any unary unbiased algorithm can make at every step of the optimization process. This is the purpose of the remainder of this subsection. 

\subsubsection{The Distance Function}
Before we define the drift functions, we first note that in order to maximize the function \OM, an optimal algorithm may choose to first minimize the function, and to then flip all bits at once to obtain the optimal $\OM$-solution. Instead of regarding the \emph{maximization of the function $\OM$,} we therefore regard in the following the problem of \emph{minimizing the distance function}~$d$, which assigns to each string $x$ the value $d(x):=\min\{n-\OM(x),\OM(x)\}$. The black-box complexities of both problems are almost identical, as the following lemma shows. 

\begin{lemma}\label{lem:BBCdistance}
The unary unbiased black-box complexities of maximizing \onemax is at least as large as that of minimizing $d$ and it is larger by at most one.
\end{lemma}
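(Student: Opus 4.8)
Throughout, I use that a unary unbiased algorithm observes the \OM-value of every point it queries (this is exactly the feedback the preceding discussion relies on, since it allows an algorithm to \emph{minimize} \OM and then flip all bits), and hence can also compute $d(x)=\min\{n-\OM(x),\OM(x)\}$ itself. The two optimization problems then differ only in their target set: maximizing \OM means reaching $\mathbf 1=(1,\dots,1)$, whereas minimizing $d$ means reaching the set $\{\mathbf 0,\mathbf 1\}$ of points with $d$-value~$0$. Writing $C_{\OM}$ and $C_d$ for the two unary unbiased black-box complexities, I will prove the two inequalities $C_d\le C_{\OM}$ and $C_{\OM}\le C_d+1$ separately, in each case using a (near-)optimal algorithm for one problem as an algorithm for the other.

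For $C_d\le C_{\OM}$ the key observation is simply that $\mathbf 1\in\{\mathbf 0,\mathbf 1\}$, so any algorithm that locates $\mathbf 1$ also locates a $d$-optimum, and it does so no later. Concretely, I would take an optimal unary unbiased \OM-maximizer $A^\ast$ and run it unchanged; since its sequence of queries does not depend on which target set we declare, it is a legitimate unary unbiased algorithm for minimizing $d$ as well. Its first hitting time of $\{\mathbf 0,\mathbf 1\}$ is at most its first hitting time of $\mathbf 1$, whose expectation is $C_{\OM}$. Taking the minimum over all $d$-algorithms then yields $C_d\le C_{\OM}$.

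For $C_{\OM}\le C_d+1$ I would reverse the construction: fix an optimal unary unbiased algorithm $B^\ast$ for minimizing $d$ and build an \OM-maximizer $A$ that simulates $B^\ast$ query for query. By Lemma~\ref{lem:characterizationunary} every step of $B^\ast$ is determined by the observed \OM-values and is executed by some $\flip_r$, so the simulation is again unary unbiased. After an expected $C_d$ queries $B^\ast$ reaches a point $x$ with $d(x)=0$, i.e.\ $\OM(x)\in\{0,n\}$, which $A$ recognizes from the observed fitness. If $\OM(x)=n$ then $x=\mathbf 1$ and $A$ is already done; if $\OM(x)=0$ then $x=\mathbf 0$, and one further (deterministic, unbiased) application of $\flip_n$ turns $x$ into $\mathbf 1$. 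Since this extra step occurs at most once, $\E(T(A))\le C_d+1$, giving $C_{\OM}\le C_d+1$.

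The only real subtlety, and the source of the additive ``$+1$'', is this second direction: a $d$-optimal algorithm may well prefer to drive the search to the all-zeros string rather than to $\mathbf 1$, and converting $\mathbf 0$ into $\mathbf 1$ costs exactly one extra full-flip query. The remaining work is purely to check that both constructed algorithms stay within the unary unbiased model, which is immediate from Lemma~\ref{lem:characterizationunary}, as we only reuse unbiased operators ($\flip_r$ and $\flip_n$) and base all decisions on observed fitness values.
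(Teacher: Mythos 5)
Your proposal is correct and takes essentially the same approach as the paper: the first inequality follows because an optimal \onemax-maximizer, which can compute $d$-values from the observed \onemax-values, is itself a legitimate algorithm for minimizing $d$ and hits the larger target set $\{(0,\dots,0),(1,\dots,1)\}$ no later; the second follows by running a $d$-minimizer and spending at most one additional query (an application of $\flip_n$, i.e., querying the bit-wise complement) once a point with $d$-value zero is found. The paper's own proof is just a two-sentence version of exactly this argument.
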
 

\begin{proof}
For the first statement, it suffices to observe that we can simulate the optimization of $d$ when \onemax-values are available. The second statement follows from the already mentioned fact that once we have found a string $x$ of distance value $d(x)=0$, then either $x$ or its bit-wise complement $\bar{x}$ has maximal \onemax-value.
 \end{proof}

\subsubsection{Drift Expressions}

For the definition of the drift functions used in the proofs of Theorems~\ref{thm:LBany} and~\ref{thm:UBdriftmax}, we use the following notation. For a unary unbiased algorithm $A$ we denote by $(x(0),x(1),\ldots,x(t))$ the sequence of the first $t+1$ search points evaluated by $A$. For every such sequence of search points, we abbreviate by 
$$X_t:=\min\{d(x(i))|i \in [0..t]\}$$ 
the quality of a best-so-far solution with respect to the distance function $d$. Note that for all $t\geq 0$ it holds that $X_t \geq X_{t+1}$, i.e., the sequence $(X_t)_{t\geq 0}$ is monotonically decreasing in $t$. For each $k \in [0..n]$ let $\mathcal{H}(t,k)$ be the collection of all sequences $(x(0),x(1),\ldots,x(t))$ of search points for which $X_t(x(0),x(1),\ldots,x(t))=k$. Abusing notation, we write $x \in ((x(0),x(1),\ldots,x(t)))$ when $x=x(i)$ for some $i \in [0..t]$.

Denoting by $\mathcal{U}$ the set of all unary unbiased operators acting on $\{0,1\}^n$, the maximal possible drift that can be achieved by a unary unbiased variation operator when the best-so-far distance is equal to $k \in [0..n]$ is equal to  
	\begin{eqnarray*}
	\hat{h}(k)&:=&\max\left\{\max\left\{k-\E(\min\{k,d(\tau(x))\})\mid \tau \in \mathcal{U},x\in H\right\}\mid t \in \N, H\in\mathcal{H}(t,k)\right\}.
	\end{eqnarray*} 
	Using Lemma~\ref{lem:characterizationunary} it is not difficult to show that
\begin{eqnarray}
		\hat{h}(k)&=&\max\left\{k-\E(\min\{k,d(\flip_{r}(x))\})\mid r\in[0..n], x \in \{0,1\}^n \text{ with } d(x) \ge k \right\}. \label{def:h_hat}
	\end{eqnarray}
	
To obtain a monotonically increasing function (as required by Theorem~\ref{dis_L'}), we set
 	\begin{eqnarray}	
	h(k):=\max\{\hat{h}(i) \mid i \in[0..k] \}. \label{def:h}
	\end{eqnarray}	
Note that $h(k) \ge \hat{h}(k)$ for all $k \in [0..n]$.	

Finally, we set 
\begin{eqnarray}
	\tilde{h}(k)&:=&\max\{\E(\max\{\OM(\flip_r(x))-\OM(x),0\})\mid x \in \{0,1\}^n \text{ with } \OM(x)=n-k, r\in[0..n]\},\label{def:h_tilde}
	\end{eqnarray} 
the maximal $\OM$-drift that can be obtained from a search point whose \OM-value is exactly equal to $n-k$. We certainly have $\tilde{h}(k) \le h(k)$ for all $k\le n/2$. However, we will show in Section~\ref{sec:bestsofar} that for all values $k \le n/2-n^{0.6}$ the difference between the functions $h$ and $\tilde{h}$ is small, showing that we can approximate the maximal drift $h$ by mutating a best-so-far solution. This will be the key step in proving the upper bound for the expected runtime of algorithm $A^*$.  We also notice that $\tilde{h}(k)\ge \max\{\OM(\bar{x})-\OM(x),0\}\ge 2k-n$. Therefore for any $x\in\{0,1\}^n$ with $\OM(x)=n-k$ we have $\OM(x)+\tilde{h}(k)\ge \max\{\OM(x),\OM(\bar{x})\}\ge n/2$, so that $A^*$ very quickly has a search point of function value $\OM(x)\ge n/2$. Informally, the interesting part of the runtime analysis for $A^*$ is therefore the fitness increase from a value around $n/2$ to $n$.

\subsection{A Lower Bound for all Unary Unbiased Algorithms}
\label{sec:lowerany}

Before proving the lower bound, we first introduce the following lemma arguing that the probability to make a large fitness gain is bounded by a small probability, as required by the condition to apply Theorem~\ref{dis_L'}. 

\begin{lemma}\label{prob-large-jump}
	 There exists an $n_0 \in \N$ such that, for all $n \ge n_0$, for all $r\in [0..n]$, and for all $x\in\{0,1\}^n$, it holds that 
	\begin{equation}\label{tilde_c}
	\Pr\left(d\left(\flip_r(x)\right)\ge \tilde{c}\left(d(x)\right)\right)\ge 1-n^{-4/3}\ln^7(n),
	\end{equation}
	where 
		\begin{equation*}
		\tilde{c}:[n] \to [0..n], i \mapsto \tilde{c}(i):=\left\{
		\begin{array}{l@{\:~ \text{ for}~}l@{\:}}
		i-\sqrt{n}\ln n & i\ge  n/6,\\
		i-\ln^2(n) & n^{1/3}\le i < n/6,\\
		i-1 & i< n^{1/3}.\\
		\end{array}
		\right.
		\end{equation*}
\end{lemma}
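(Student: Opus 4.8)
The plan is to bound, for an arbitrary fixed $r$ and $x$, the probability that one application of $\flip_r$ shrinks the distance $d$ by more than $\tilde{c}$ allows, and I would start with two symmetry reductions. Since $\flip_r(\overline{x})=\overline{\flip_r(x)}$ under the natural coupling and $d$ is invariant under complementation, the law of $d(\flip_r(x))$ depends on $x$ only through $d(x)$; hence I may assume $\OM(x)\ge n/2$, so that $x$ has exactly $i:=d(x)\le n/2$ zero-bits. Likewise, flipping a uniformly random $r$-set and then complementing is the same as flipping the complementary (again uniformly random) $(n-r)$-set, so $d(\flip_r(x))\stackrel{d}{=}d(\flip_{n-r}(x))$ and I may assume $r\le n/2$. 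Let $a$ denote the number of the $i$ zero-bits flipped by $\flip_r$; then $a$ is hypergeometric with $\E(a)=ri/n$, the $\OM$-change is $\Delta:=2a-r$, and a short computation gives $d(\flip_r(x))=\min\{i-\Delta,\,n-i+\Delta\}$, whence the distance decrease equals $\max\{\Delta,\,2i-n-\Delta\}$. It therefore suffices to show that, in each of the three ranges of $i$, neither the direct term $\Delta$ nor the complement term $2i-n-\Delta$ exceeds the threshold $i-\tilde{c}(i)$ except with the asserted probability.

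The two larger ranges follow from concentration of $a$ about its mean, using the Hoeffding bound $\Pr(|a-\E(a)|\ge t)\le 2\exp(-2t^2/r)$ valid for the hypergeometric distribution. Because $i\le n/2$ we have $\E(\Delta)=r(2i-n)/n\le 0$, so the direct term exceeding the threshold is a large upward deviation of $\Delta$, and the complement term exceeding it is a correspondingly large downward deviation. For $i\ge n/6$ the threshold is $\sqrt{n}\ln n$, forcing $|a-\E(a)|\ge \tfrac12\sqrt{n}\ln n$ and hence a probability at most $2\exp(-\Omega(\ln^2 n))$, far smaller than $n^{-4/3}\ln^7 n$. For $n^{1/3}\le i<n/6$ the reduction $r\le n/2$ makes the complement term negative (indeed $2i-n-\Delta\le 2i-n+r\le 2i-n/2<0$), so only $\Delta\ge\ln^2 n$ matters; since $\E(a)<r/6$ this requires a deviation of at least $r/3+\tfrac12\ln^2 n$, and Hoeffding again gives $\exp(-\Omega(\ln^2 n))$.

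The decisive range, which determines the exponent $4/3$, is $i<n^{1/3}$, where I must prove that the decrease exceeds $1$ only with probability $n^{-4/3}\ln^7 n$. As before the complement term is negative once $r\le n/2$, so I only need $\Pr(\Delta\ge 2)$. From $\Delta=2a-r$ and $\Delta\le a$, the event $\Delta\ge 2$ demands that at least $k:=\lceil(r+2)/2\rceil\ge 2$ of the at most $i<n^{1/3}$ zero-bits be flipped simultaneously. A union bound over $k$-subsets gives $\Pr(\Delta\ge 2)\le\binom{i}{k}(r/n)^k\le(e\,ir/(kn))^k$, and since $r/k<2$ the base is at most $2i/n<2n^{-2/3}<1$, so the estimate is largest at $k=2$ (attained at $r=2$, where it is exactly $\binom{i}{2}/\binom{n}{2}$), yielding $\Pr(\Delta\ge 2)\le 4e^2 n^{-4/3}\le n^{-4/3}\ln^7 n$ for large $n$.

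The main obstacle is precisely this last range: the bound is essentially tight, so the slack available in the other two ranges is gone, and I cannot replace the exact two-zero-bit collision event by a cruder estimate. What makes the argument clean is the pair of symmetry reductions: assuming $\OM(x)\ge n/2$ turns $d$ into the number of zero-bits, and assuming $r\le n/2$ annihilates the awkward complement term (which would otherwise have to be controlled separately for $r$ close to $n$). With these in place, each range reduces to a one-sided tail of a hypergeometric variable, and the only genuinely delicate estimate is the $\Theta(n^{-4/3})$ collision probability for $i<n^{1/3}$.
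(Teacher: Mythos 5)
Your proof is correct and follows essentially the same route as the paper's: the same two symmetry reductions (assume $\OM(x)\ge n/2$ and $r\le n/2$), the same three-case split along the definition of $\tilde{c}$, concentration of the hypergeometric variable in the two larger ranges, and a direct $O(n^{-4/3})$ estimate in the range $d(x)<n^{1/3}$, which is indeed the case that dictates the $n^{-4/3}\ln^7(n)$ bound. The only (harmless) technical deviations are that you dismiss the complement term $2i-n-\Delta$ deterministically using $r\le n/2$ (the paper instead bounds $\Pr\left(\OM(\flip_r(x))<n/6\right)$ probabilistically by a Chernoff bound), and that in the critical range you use a union bound over $k$-subsets of zero-bits that is uniform in $r$, where the paper applies a multiplicative Chernoff bound for $r\ge 4$ and exact computations for $r\in\{2,3\}$.
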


\begin{proof}
Set $p:=n^{-4/3}\ln^7(n)$. To show~\eqref{tilde_c}, we first note that we can assume without loss of generality that $\OM(x)\ge n/2$. This is due to the symmetry of the distance function. In addition, we can assume that $0<r\le n/2$, because $d(\flip_r(x))$ and $d(\flip_{n-r}(x))$ are identically distributed. 

We make a case distinction according to the size of $d:=d(x)$. For all different cases we note that the event $d(\flip_r(x))<d$ happens in two cases, namely if $\OM(\flip_r(x))>\OM(x)$ or if $\OM(\flip_r(x))<n-\OM(x)$. We denote by $Z$ the number of good flips in an application of $\flip_r$ to $x$; i.e., the number of bits flipping from $0$ to~$1$. $Z$ follows a hypergeometric distribution and $\E(Z)=rd(x)/n$, as discussed in Remark~\ref{rem:expectedflips}. 

\underline{\textbf{Case 1: $d(x)\ge n/6$.}} We first regard the case that $d := d(x)\ge n/6$. The Chernoff bound (e.g., the variants presented in Theorems~1.11 and~1.17 in~\cite{Doerr11bookchapter}) applied to $Z$ show that, for all $\lambda>0$,  
$\Pr(Z>\E(Z)+\lambda)\le \exp(-2\lambda^2/n)$ and 
$\Pr(Z<\E(Z) -\lambda)\le \exp(-2\lambda^2/n)$.
Using that $\OM(\flip_r(x))=\OM(x)+2Z-r$ and that $n-\OM(x)\le \E(\OM(\flip_r(x)))\le \OM(x)$, we obtain that 
\begin{align}
			\Pr\left( \OM(\flip_r(x))<n-\OM(x)-2\lambda\right) \label{flip<E-lambda}
& \le \Pr\left( \OM(\flip_r(x))<\E(\OM(\flip_r(x)))-2\lambda\right)\\
& \nonumber =   \Pr\left( \OM(x)+2Z-r<\OM(x)+E(2Z)-r-2\lambda\right) \\
& \nonumber =		\Pr\left( 2Z<E(2Z)-2\lambda\right)
\le \exp(-2\lambda^2/n),
\end{align}
and that
\begin{align}
\label{flip>E+lambda}
					\Pr\left( \OM(\flip_r(x))>\OM(x)+2\lambda\right)
		& \le	\Pr\left( \OM(\flip_r(x))>\E(\OM(\flip_r(x)))+2\lambda\right)\\
		& \nonumber =		\Pr\left(2Z>E(2Z)+2\lambda\right)
		\le \exp(-2\lambda^2/n).
\end{align}
From these two inequalities we conclude that 
\begin{align*}
\Pr(d(\flip_r(x))<\tilde{c}(d(x))
& =\Pr(d(\flip_r(x))<d-\sqrt{n}\ln n)
\le 2\exp(-\ln^2(n)/2)<p
\end{align*}
for all $d\ge n/6$.

\underline{\textbf{Case 2: $n^{1/3} \le d(x) < n/6$.}} By our assumptions $r\le n/2$ and $\OM(x)\ge n/2$ we obtain from Remark~\ref{rem:expectedflips} that $\E(\OM(\flip_r(x))) = (1-\tfrac{2r}{n})\OM(x) + r \ge (1-\tfrac{2r}{n})\tfrac{n}{2} + r = n/2$. Using Chernoff bounds (e.g., Theorems~1.11 and~1.17 in~\cite{Doerr11bookchapter}) we thus get  
$\Pr(\OM(\flip_r(x))<n/6)\le \exp(-2(n/2-n/6)^2/n)=o(p)$. 

Consider the event $\OM(\flip_r(x))\ge \OM(x)+\ln^2(n)$. It intrinsically requires that $r\ge \ln^2(n)$. We apply the Chernoff bound from Corollary 1.10~(b) in \cite{Doerr11bookchapter} to $Z$ and obtain that
\begin{eqnarray}
\Pr\left(\OM(\flip_r(x))>\OM(x)\right)&=&\Pr\left(Z> \frac r2 \right)
=\Pr\left(Z>\frac{n}{2d}\E(Z)\right)\le\left(\frac{2de}{n}\right)^{r/2}
\label{chernoff_Z}\\
&\le&\left(\frac{e}{3}\right)^{r/2}=O(n^{-\Omega(\ln(n))})=o(p).\nonumber
\end{eqnarray}
Therefore 
\begin{align*}
\Pr(d(\flip_r(x))<\tilde{c}(d(x))
& =\Pr(d(\flip_r(x))<d-\ln^2 n)\\
& \le \Pr(\OM(\flip_r(x))<n/6)+ \Pr\left(\OM(\flip_r(x))>\OM(x)\right)
=o(p)
\end{align*} 
for all $n^{1/3}\le d<n/6$.

\underline{\textbf{Case 3: $d(x) < n^{1/3}$.}} We finally consider the case $d<n^{1/3}$. Applying this condition to the first line of Equation~\eqref{chernoff_Z}, we obtain $\Pr(\OM(\flip_r(x))>\OM(x))\le \Theta(n^{-4/3}) = o(p)$ for all $r\ge 4$. 
For $r<4$ we consider the operators separately and observe that 
\[
\Pr\left(\OM(\flip_3(x))>\OM(x)\right)= \frac{\binom{d}{2}\binom{n-d}{1}+\binom{d}{3}}{\binom{n}{3}}=O(n^{-4/3}) = o(p)
\]
and
\[
\Pr\left(\OM(\flip_2(x))>\OM(x)\right)= \frac{\binom{d}{2}}{\binom{n}{2}}=O(n^{-4/3}) = o(p).
\]
Thus, altogether, we obtain that 
\begin{align*}
\Pr(d(\flip_r(x))<\tilde{c}(d(x))
& =\Pr(d(\flip_r(x))<d-1)\\
& \le\Pr(\OM(\flip_r(x))<n/6)+ \Pr\left(\OM(\flip_r(x))>\OM(x)\right)=o(p)
\end{align*}
for all $r\ge 2$ and $d<n^{1/3}$. Needless to say that $\Pr(d(\flip_1(x))<\tilde{c}(d(x))=0$.
This proves Equation \eqref{tilde_c}.
\end{proof}

Using Lemma~\ref{prob-large-jump} and Theorem~\ref{dis_L'}, we are now ready to show the following lower bound. 
\begin{theorem}\label{thm:LBany}
Let $s:=n/2-n^{0.6}$. The expected runtime of any unary unbiased algorithm $A$ on the \OneMax problem satisfies 
\begin{equation*}
E\left(T_{A}\right)\ge \sum_{x=1}^{s}\frac{1}{h(x)}-\Theta(n^{2/3}\ln^9(n)).
\end{equation*}	 
\end{theorem}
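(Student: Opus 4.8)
The plan is to apply the discrete lower-bound variable drift theorem (Theorem~\ref{dis_L'}) to the process tracking the best-so-far distance to the optimum. Fix an arbitrary unary unbiased algorithm $A$. By Lemma~\ref{lem:BBCdistance} it suffices to bound from below the time to minimize the distance function $d$, since any point of $\OM$-value $n$ has $d$-value $0$. Writing $(x(0),x(1),\ldots)$ for the search points evaluated by $A$ and $X_t:=\min\{d(x(i))\mid i\in[0..t]\}$ for the best-so-far distance, the sequence $(X_t)_{t\ge0}$ is decreasing, takes values in $[0..n]$, and its hitting time $T$ of $0$ satisfies $T\le T_A$; so it suffices to lower bound $\E(T)$.

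To invoke Theorem~\ref{dis_L'} I would use $p:=n^{-4/3}\ln^7(n)$, the monotone drift bound $h$ from~\eqref{def:h}, and for the jump bound the function $\tilde c$ of Lemma~\ref{prob-large-jump}, replaced by its monotone minorant $i\mapsto\min\{\tilde c(j)\mid j\ge i\}$ so that the two region boundaries of $\tilde c$ do not spoil monotonicity (this only decreases $\tilde c$, hence preserves the jump bound, and is genuinely needed: an offspring of a parent just above a boundary can undershoot $\tilde c(X_t)$ without being a large jump relative to its own parent). Both hypotheses are verified using that, by Lemma~\ref{lem:characterizationunary}, in each step $A$ mutates some previously evaluated parent $x$, which necessarily has $d(x)\ge X_t$, by a convex combination of operators $\flip_r$. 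For condition~1, Lemma~\ref{prob-large-jump} bounds the probability that $\flip_r(x)$ falls below $\tilde c(d(x))$ by $p$, uniformly in $r$ and $x$; together with $d(x)\ge X_t$ and monotonicity of the minorant this gives $\Pr(X_{t+1}\ge\tilde c(X_t))\ge1-p$. For condition~2, the definition~\eqref{def:h_hat} of $\hat h$ equals the maximal expected one-step decrease $\E(\max\{X_t-d(\flip_r(x)),0\})$ over all $r\in[0..n]$ and all parents $x$ with $d(x)\ge X_t$; averaging over the random parent and the flip-mixture keeps the drift at most $\hat h(X_t)\le h(X_t)$.

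Theorem~\ref{dis_L'} then yields $\E(T\mid X_0)\ge g(X_0)/(1+g(X_0)p)$ with $g(x)=\sum_{i=0}^{x-1}1/h(\mu(i))$ and $\mu(i)=\max\{j\mid\tilde c(j)\le i\}$. Since $u\mapsto u/(1+up)$ is increasing and, by a Chernoff bound, $X_0=d(x(0))\ge s$ with probability $1-\exp(-\Omega(n^{0.2}))$, I would restrict to this event to get $\E(T)\ge(1-o(1))\,g(s)/(1+g(s)p)$, the lost mass being negligible because $g(s)=O(n\ln n)$ (using $h(i)\ge i/n$). Expanding, $\E(T)\ge g(s)-g^2(s)p-o(1)$, and $g^2(s)p=O(n^2\ln^2n)\,n^{-4/3}\ln^7n=O(n^{2/3}\ln^9n)$, which is one source of the claimed error.

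The remaining, and hardest, step is to replace $g(s)=\sum_{i=0}^{s-1}1/h(\mu(i))$ by the clean sum $\sum_{x=1}^{s}1/h(x)$ at a cost of only $O(n^{2/3}\ln^9n)$. Because $\mu(i)\ge i$ and $h$ is increasing, $g(s)$ undershoots $\sum_{x=1}^{s}1/h(x)$ exactly by the terms $1/h(j)$ of the indices $j\le s$ that are skipped by the map $i\mapsto\mu(i)$; in the unit-step regime (distances below $n^{1/3}$) the index shift of $g$ makes the two sums match termwise, so no error arises there. Since $\tilde c$ advances in unit steps inside each regime and only jumps down at the two boundaries $j\approx n^{1/3}$ and $j\approx n/6$, the skipped indices are confined to two short windows of widths $\approx\ln^2n$ (above $n^{1/3}$) and $\approx\sqrt n\ln n$ (above $n/6$). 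Bounding $1/h$ on these windows---where I expect $h(n^{1/3})=\Theta(n^{-2/3})$ and $h(n/6)=\Theta(1)$, both witnessed by a single flip---gives a dominant contribution $\approx\ln^2n\cdot n^{2/3}=O(n^{2/3}\ln^9n)$ from the lower window and only $O(\sqrt n\ln n)$ from the upper one. Combining these error contributions with $\E(T_A)\ge\E(T)$ proves the claim; the delicate parts are precisely the non-monotonicity fix and the bookkeeping that localizes the skips to regions where $1/h$ is small.
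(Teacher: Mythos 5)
Your proposal is correct and follows essentially the same route as the paper's proof: the same application of Theorem~\ref{dis_L'} with $p=n^{-4/3}\ln^7(n)$, the drift bound $h$, and the monotone minorant $c(i)=\min\{\tilde c(j)\mid j\ge i\}$ of Lemma~\ref{prob-large-jump}'s jump function (needed for exactly the parent-above-$X_t$ reason you identify), the same Chernoff treatment of $X_0\ge s$, and the same two error sources $g^2(s)p=O(n^{2/3}\ln^9 n)$ and the skipped-index windows of widths $\ln^2 n$ and $\sqrt n\ln n$ bounded via $h(x)\ge x/n$. No gaps; the bookkeeping you describe is precisely the paper's final estimate of $g(X_0)$ against $\sum_{x=1}^{s}1/h(x)$.
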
 


\begin{proof}
For convenience, we assume that $n$ is sufficiently large. 
Let $A$ be a unary unbiased algorithm. We recall that by $(x(0),x(1),\dots)$ we denote the sequence of search points evaluated by $A$ and by $X_t=\min\{d(x(i))\mid i \in [0..t]\}$ the best-so-far distance after first $t$ iterations of the main loop. 
Let $T:=\min\{t\in\mathbb{N}\mid X_t=0\}$. Then $T\le T_{A} := \min\{t\in\mathbb{N}\mid\OM(x(t))=n\}$. 
We prove a lower bound on $T$. 

Every unary unbiased black-box algorithm has to create its first search point uniformly at random. This initial search point $x_0$ has expected fitness $\E(\OM(x_0))=n/2$. By Chernoff's bound (we can use, for example, the variant presented in Theorem~1.11 in~\cite{Doerr11bookchapter}) it furthermore holds that $\Pr(X_0<s)=\Pr(|\OM(x_0)-n/2|>n^{0.6})\le 2\exp(-2n^{0.2})$. This probability is small enough such that even optimistically assuming $T_A=0$ whenever $X_0<s$, the contribution of such events affect the lower bound by a term of $O(n^{2/3}\ln^9(n))$. We can therefore safely assume that $X_0\ge s$.

	For all $i \in [n]$ let $c(i):=\min\{\tilde{c}(j) \mid j\ge i\}$, where $\tilde{c}$ is the function defined in Lemma~\ref{prob-large-jump}. For all $r\in[0..n]$, $x\in\{0,1\}^n$, and all distance levels $d'\le d(x)$ it holds that 
	\[c(d')\le \tilde{c}(d(x)) \text{ and }\Pr\left(d(\flip_r(x))>c(d')\right)\ge \Pr\left(d(\flip_r(x))>\tilde c(d(x))\right) \ge 1-p.\] 
	By Lemma \ref{lem:characterizationunary}, this statement can be extended to arbitrary unary unbiased variation operators. Therefore, 
	\[
	\Pr\left(X_{t+1}>c(X_t)\right)\ge 1-p \text{ for all } t \in \N.
	\] 
	We apply Theorem \ref{dis_L'} to $c$ and $h$. We first compute $\mu$ as in Theorem \ref{dis_L'}. By definition,
	$\mu(i)=\max\{x\mid c(x)\le i\}=\max\{x\mid\min\{\tilde{c}(y)\mid y\ge x\}\le i\}=\max\{x\mid \tilde{c}(x)\le i\}$, giving
	\begin{equation*}
	\mu(i):=\left\{
	\begin{array}{l@{\:~ \text{ for}~}l@{\:}}
	i+1 & i< n^{1/3}-\ln^2(n),\\
	i+\ln^2(n) & n^{1/3}-\ln^2(n) \le i < n/6-\sqrt{n}\ln n,\\
	i+\sqrt{n}\ln n & n/6-\sqrt{n}\ln n \le i < n/2-\sqrt{n}\ln n,\\
	\lfloor n/2 \rfloor &  n/2-\sqrt{n}\ln n \le i \le n/2.
	\end{array}
	\right.
	\end{equation*}
	According to Theorem \ref{dis_L'}, we can thus bound $\E(T\mid X_0)$ by
		\[
		E\left(T\mid X_0\right)\ge g(X_0)-\frac{g^2(X_0)p}{1+g(X_0)p} \text{ with }  g(x)=\sum_{i=1}^{x-1}\frac{1}{h(\mu(x))}.
		\]
	Since $h(\mu(x))\ge \tilde{h}(\mu(x))\ge\E(\max\{\OM(\flip_1(x))-\OM(x),0\}\mid x \in \{0,1\}^n \text{ with } \OM(x)=n-\mu(x))=\mu(x)/n \ge x/n$, we obtain $g(X_0)\le\sum_{i=1}^{X_0-1} \frac{n}{i}=O(n\ln(n))$. Therefore $\frac{g^2(X_0)p}{1+g(X_0)p}=O(n^{2/3}\ln^9(n))$ for $p=n^{-4/3}\ln^7(n)$. Using the monotonicity of $h$ and the fact that all summands are positive, we estimate $g(X_0)$ by 
		\begin{eqnarray*}
			\sum_{x=0}^{X_0-1}\frac{1}{h(\mu(x))}
			&\ge&\sum_{x=1}^{n^{1/3}-\ln^2(n)}\frac{1}{h(x)}
			\quad+\sum_{x=n^{1/3}}^{n/6-\sqrt{n}\ln n+\ln^2(n)}\frac{1}{h(x)}
			\quad+\sum_{x=n/6}^{X_0}\frac{1}{h(x)}\\
			&\ge& \sum_{x=1}^{X_0}\frac{1}{h(x)} - \frac{\ln^2(n)}{h(n^{1/3}-\ln^2(n))}- \frac{\sqrt{n}\ln n - \ln^2(n)}{h(n/6-\sqrt{n}\ln n+\ln^2(n))}\\
			&\ge&  \sum_{x=1}^{X_0}\frac{1}{h(x)}-\Theta(n^{2/3}\ln^2(n)).\\
		\end{eqnarray*}
Therefore we obtain $E(T_{A})\ge \E(T\mid X_0\ge s)-O(n^{2/3}\ln^9(n)) \ge \sum_{x=1}^{s}\frac{1}{h(x)}-\Theta(n^{2/3}\ln^9(n)).$

\end{proof}

\subsection{Upper Bound for the Drift Maximizer}\label{sec:driftmaxupper}
The lower bound in Theorem~\ref{thm:LBany} also holds for drift-maximizer $A^*$ described in the beginning of this section. We next show that $A^*$ achieves this runtime bound apart from the lower order term $\Theta(n^{2/3}\ln^9(n))$. 

\begin{theorem}\label{thm:UBdriftmax}
	Let $s:=n/2-n^{0.6}$. The expected runtime of algorithm $A^*$ on \OneMax 
	satisfies 
	\begin{equation*}
	E\left(T_{A^*}\right)\le \sum_{x=1}^{s}\frac{1}{h(x)}+\Theta(n^{0.6}).
	\end{equation*}	
\end{theorem}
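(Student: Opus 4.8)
The plan is to analyze $A^*$ through the distance process $X_t := d(x(t)) = n - \OM(x(t))$, where $x(t)$ is the search point $A^*$ keeps after iteration $t$. Once $\OM(x(t)) \ge n/2$, selection by $\OM$-value guarantees that $\OM(x(t))$ is non-decreasing and stays at least $n/2$, so that $X_t = n - \OM(x(t))$ is a monotonically decreasing process with target $0$, and $X_t = 0$ coincides with $\OM(x(t)) = n$. The first step is to identify its drift. When $\OM(x) = n-k \ge n/2$, the offspring $y = \flip_{R(\OM(x))}(x)$ is accepted exactly when $\OM(y) \ge \OM(x)$, in which case $\OM(y) \ge n/2$ and hence $d(x) - d(y) = \OM(y) - \OM(x)$, while a rejected offspring leaves $x$, and hence $X_t$, unchanged. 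By the definition of $R$ in~\eqref{def_R} and of $\tilde h$ in~\eqref{def:h_tilde}, this gives $\E(X_t - X_{t+1} \mid X_t = k) = \E(\max\{\OM(y) - \OM(x), 0\}) = \tilde h(k)$. Thus $A^*$ realizes on every level exactly the best-so-far drift $\tilde h$.

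With the drift in hand, I would invoke the discrete upper-bound drift theorem (Theorem~\ref{dis_U}). Since that theorem requires a monotonically increasing drift bound, I first argue that $\tilde h$ is increasing (the farther a best-so-far point lies from the optimum, the larger the achievable expected gain), or else replace it by its monotone lower envelope $x \mapsto \min_{j \ge x} \tilde h(j)$, which agrees with $\tilde h$ on the relevant range. This yields $\E(T_{A^*} \mid X_0) \le \sum_{x=1}^{X_0} 1/\tilde h(x)$, where deterministically $X_0 = d(x(0)) \le n/2$.

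It remains to convert this into the claimed bound. To handle the start, let $\tau$ be the first iteration with $\OM(x(\tau)) \ge n/2$: while $\OM < n/2$ the $\OM$-increasing drift is at least $2k-n$ (cf.\ the estimate just after~\eqref{def:h_tilde}), so the additive drift theorem~\ref{THM:ADDITIVEDRIFT} bounds $\E(\tau)$ by an $O(n^{0.6})$ term, and from $\tau$ on the monotone analysis above applies with $X_\tau \le n/2$. I then split the sum at $s = n/2 - n^{0.6}$. On the tail $s < x \le n/2$ there are only $n^{0.6}$ levels, and on each of them $\tilde h(x) = \Omega(\sqrt{n})$, so the tail contributes only $O(n^{0.6}/\sqrt{n}) = O(n^{0.1})$. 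On the head $1 \le x \le s$, the comparison between $\tilde h$ and $h$ established in Section~\ref{sec:bestsofar} gives $\tilde h(x) \ge h(x) - (\text{small})$, from which I would deduce $\sum_{x=1}^{s} 1/\tilde h(x) \le \sum_{x=1}^{s} 1/h(x) + O(n^{0.6})$. Adding the three contributions gives the theorem.

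The hard part is this last comparison: turning the pointwise closeness of $\tilde h$ and $h$ on $[1,s]$ (the statement deferred to Section~\ref{sec:bestsofar}, that mutating a best-so-far solution is almost as good as the unrestricted optimal drift as long as the distance stays below $n/2 - n^{0.6}$) into a bound on the accumulated error $\sum_{x=1}^{s}\big(1/\tilde h(x) - 1/h(x)\big)$. Because $\sum_{x=1}^{s} 1/h(x) = \Theta(n \ln n)$ is large, even a uniformly tiny relative gap $h(x)/\tilde h(x) - 1$ must be controlled with care to keep the total below $O(n^{0.6})$; this is exactly where the quantitative estimates on $h - \tilde h$ are needed, together with the structural fact that the gap only becomes sizable within distance $n^{0.6}$ of $n/2$, which is precisely the range excluded by the cutoff $s$. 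The remaining ingredients—the monotonicity of $\tilde h$ and the short initial climb to $\OM \ge n/2$—are routine by comparison.
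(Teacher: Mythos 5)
Your route is in essence the paper's: the per-iteration drift of $A^*$ at fitness distance $k$ is exactly $\tilde h(k)$ by \eqref{def_R} and \eqref{def:h_tilde}, Theorem~\ref{dis_U} applies because $\tilde h$ is monotone (Corollary~\ref{cor:optoperator}), the sum is split at $s$, and Lemma~\ref{lem:compare_h} handles the head. Two quantitative remarks on those parts. First, given Lemma~\ref{lem:compare_h}, the head comparison you defer as ``the hard part'' is actually immediate: since $\tilde h(x),h(x)\ge x/n\ge 1/n$, one gets $1/\tilde h(x)-1/h(x)=(h(x)-\tilde h(x))/(\tilde h(x)h(x))\le n^3\exp(-\Omega(n^{0.2}))$, so the accumulated head error is $o(1)$, not merely $O(n^{0.6})$; the $\Theta(n^{0.6})$ of the theorem comes entirely from the levels above $s$. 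Second, your claim $\tilde h(x)=\Omega(\sqrt n)$ for $s<x\le n/2$ is false near $s$: at $x=s$ the gain of an $r$-bit flip has mean $-2rn^{-0.4}$ against standard deviation $O(\sqrt r)$, so only $r=O(n^{0.8})$ is useful and $\tilde h(s)=\Theta(n^{0.4})$. This slip is harmless, because the trivial bound $\tilde h(x)\ge x/n\ge 1/2-o(1)$ already makes the $n^{0.6}$ tail levels contribute $O(n^{0.6})$, which is all the theorem needs (and is what the paper uses).

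The genuine gap is your treatment of the initial phase. You bound $\E(\tau)$, the time until $\OM\ge n/2$, via Theorem~\ref{THM:ADDITIVEDRIFT}, citing only the drift bound $2k-n$. Over the states $k>n/2$ this bound can be as small as $1$ or $2$ (for $k=\lceil n/2\rceil+O(1)$), so the additive drift theorem yields only $\E(\tau\mid X_0)\le (X_0-n/2)/\Theta(1)=O(n)$, an error term that would swamp the claimed $\Theta(n^{0.6})$. To salvage this you would need, at least, the concentration of the random initial fitness ($X_0\sim\Bin(n,1/2)$, hence $\E(\max\{X_0-n/2,0\})=O(\sqrt n)$), which you never invoke; and you would further have to address that additive drift must be applied to the truncated process $\max\{X_t-n/2,0\}$, whose one-step decrease is $\min\{X_t-X_{t+1},X_t-n/2\}$ and whose drift is therefore \emph{not} bounded below by the drift of $X_t$ (the drift-maximizing mutation in this regime can flip nearly all bits, producing deep overshoots, so rare large jumps could carry most of the raw drift). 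The paper sidesteps all of this: it pessimistically sets $\OM(x(0))=0$, applies Theorem~\ref{dis_U} once over the full range of levels $[1..n]$, and bounds the entire tail by $\sum_{x=s}^{n}1/\tilde h(x)\le\sum_{x=s}^{n/2+n^{0.6}}n/x+\sum_{x=n/2+n^{0.6}}^{n}1/(2x-n)=\Theta(n^{0.6})$, using only $\tilde h(x)\ge x/n$ and $\tilde h(x)\ge 2x-n$; no property of the initial distribution and no separate additive-drift phase is needed.
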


From the variable drift theorem, Theorem~\ref{dis_U}, we easily get $\sum_{x=1}^{s}\frac{1}{\tilde{h}(x)}$ as upper bound for the expected runtime of $A^*$. We therefore need to show that the difference between $h(X_t)$ and $\tilde{h}(X_t)$ is small. This is the purpose of the next subsection.

\subsubsection{Maximizing Drift by Mutating a Best-So-Far Solution}
\label{sec:bestsofar}

As mentioned above, we show that the expected drift of any unary unbiased algorithm cannot be significantly better than that of $A^*$. The main result of this subsection is the following lemma. 

\begin{lemma}\label{lem:compare_h}
For sufficiently sufficiently large $n$ and $k\le n/2-n^{0.6}$ it holds that $0\le h(k)-\tilde{h}(k)\le n\exp(-\Omega(n^{0.2}))$.
\end{lemma}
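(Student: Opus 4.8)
The plan is to prove the two inequalities separately. The left inequality $\tilde{h}(k)\le h(k)$ is short: fix a point $x$ with $\OM(x)=n-k$, so that $d(x)=k$ (using $k\le n/2$). For any $r$ and any outcome, if $\OM(\flip_r(x))>\OM(x)=n-k\ge n/2$ then $d(\flip_r(x))=n-\OM(\flip_r(x))$ and hence $(k-d(\flip_r(x)))^+=\OM(\flip_r(x))-\OM(x)$, while if $\OM(\flip_r(x))\le\OM(x)$ the $\OM$-gain is non-positive. Thus $(k-d(\flip_r(x)))^+\ge(\OM(\flip_r(x))-\OM(x))^+$ pointwise; taking expectations and the maximum over $r$ shows $\hat{h}(k)\ge\tilde{h}(k)$, and $h(k)=\max_{i\le k}\hat{h}(i)\ge\hat{h}(k)\ge\tilde{h}(k)$.

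For the right inequality I would bound, for every level $i\le k$ and every admissible pair $(r,x)$ with $d(x)\ge i$, the capped drift $\E\big((i-d(\flip_r(x)))^+\big)$ by $\tilde{h}(i)+n\exp(-\Omega(n^{0.2}))$, and then maximize. By the symmetry of $d$ I may assume $\OM(x)\ge n/2$, and since $\flip_r(x)$ and $\flip_{n-r}(x)$ induce the same law of $d$ I may assume $r\le n/2$; Remark~\ref{rem:expectedflips} then gives $\E(\OM(\flip_r(x)))=(1-2r/n)\OM(x)+r\ge n/2$. The event $d(\flip_r(x))<i$ splits into an \emph{up-crossing} part ($\OM(\flip_r(x))>n-i$) and a \emph{down-crossing} part ($\OM(\flip_r(x))<i$).

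The down-crossing part is negligible: reaching $\OM(\flip_r(x))<i\le n/2-n^{0.6}$ is a deviation of at least $n^{0.6}$ below a mean that is at least $n/2$, and as the number of flipped zero-bits is hypergeometric with variance $O(n)$, the same Chernoff estimate as in the proof of Lemma~\ref{prob-large-jump} bounds this probability by $\exp(-\Omega(n^{0.2}))$; multiplied by the maximal value $i\le n$ this contributes at most $n\exp(-\Omega(n^{0.2}))$. On the up-crossing part $(i-d(\flip_r(x)))^+=(\OM(\flip_r(x))-(n-i))^+$, so the up-contribution equals the expected overshoot $\E\big((\OM(\flip_r(x))-(n-i))^+\big)$ of the threshold $n-i$. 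I would then show that, after optimizing over $r\le n/2$, this overshoot is largest when the parent sits exactly at distance $i$, where it equals $\tilde{h}(i)$. Combining the two parts gives $\hat{h}(i)\le\tilde{h}(i)+n\exp(-\Omega(n^{0.2}))$; and since $\tilde{h}$ is monotonically increasing—a parent with more zero-bits makes $\flip_r$ flip a stochastically larger number of zero-bits, so $\E\big((\OM(\flip_r(x))-\OM(x))^+\big)$ grows with the distance, pointwise in $r$—the envelope obeys $h(k)=\max_{i\le k}\hat{h}(i)\le\tilde{h}(k)+n\exp(-\Omega(n^{0.2}))$.

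The main obstacle is the starting-point monotonicity of the up-crossing: that among all parents of distance at least $i$, distance exactly $i$ maximizes $\max_{r\le n/2}\E\big((\OM(\flip_r(x))-(n-i))^+\big)$. This is delicate because a parent farther from the optimum has more zero-bits and can overshoot the threshold by flipping many of them simultaneously, so the comparison fails for a fixed $r$ and survives only after re-optimizing the mutation strength for the closer parent. I would split into two regimes. When the parent distance $d$ exceeds $i$ by more than a $\sqrt{n}\,\mathrm{polylog}(n)$ margin, the required $\OM$-gain is so large that the whole up-contribution is already $\exp(-\Omega(n^{0.2}))$ by the same tail bound, so only nearby distances need the genuine comparison; for those I would build an explicit coupling of the underlying hypergeometric variables—decreasing the number of flipped bits by exactly $d-i$ as one turns $d-i$ zero-bits of the parent into one-bits—to show the overshoot does not decrease when the parent is moved in to distance $i$. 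Verifying this coupling, and confirming that every error term collapses to $n\exp(-\Omega(n^{0.2}))$ rather than leaving a polynomial loss, is where the real work lies.
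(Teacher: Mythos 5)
Your plan follows the same route as the paper's proof: the identical symmetry reductions (assume $\OM(x)\ge n/2$ and $r\le n/2$), the same split of the event $d(\flip_r(x))<i$ into the up-crossing $\OM(\flip_r(x))>n-i$ and the down-crossing $\OM(\flip_r(x))<i$, the same Chernoff bound killing the down-crossing (this is exactly where the paper's $n\exp(-\Omega(n^{0.2}))$ term comes from), and the same final step $h(k)=\max_{i\le k}\hat h(i)\le\max_{i\le k}\tilde h(i)+n\exp(-\Omega(n^{0.2}))=\tilde h(k)+n\exp(-\Omega(n^{0.2}))$ via monotonicity of $\tilde h$ (your hypergeometric stochastic-domination argument for that monotonicity is correct, and arguably cleaner than the paper's Lemma~\ref{lem:mono_h}). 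The genuine gap is that the step you explicitly defer --- that replacing a parent at distance $d\ge i$ with radius $r$ by a parent at distance exactly $i$ with radius $r'=\max\{r-(d-i),0\}$ cannot decrease the expected overshoot of the threshold $n-i$ --- is not a routine verification but the technical heart of the whole lemma. In the paper it occupies Lemma~\ref{lem:domi} and Corollary~\ref{cor:optoperator}: the single-step claim is the pointwise probability-mass comparison $\Pr\left(\OM(\flip_r(x))=\OM(y)+t\right)\le\Pr\left(\OM(\flip_{r-1}(y))=\OM(y)+t\right)$ for all $t\ge1$ when $\OM(y)=\OM(x)+1$, proven by writing both sides as hypergeometric probabilities and showing that a ratio of binomial products is at most $1$ (this is where the hypotheses $r\le n/2$ and $d\le n/2$ enter), followed by induction on $\OM(y)-\OM(x)$. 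Your proposed alternative, an explicit coupling, is not obviously available: what is needed is a comparison of upper-tail mass functions, and while this implies stochastic domination of the overshoots (so a coupling exists abstractly), constructing one directly runs into the asymmetry that when the bit distinguishing $x$ from $y$ is not among the $r$ flipped positions there is no canonical $(r-1)$-set for the closer parent that preserves the correct uniform marginal. Since you supply neither the computation nor the coupling, the lemma remains unproven on your route.

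A second, fixable flaw is quantitative. With a margin $d-i>\sqrt n\,\polylog(n)$, the hypergeometric tail bound $\Pr(Z>\E(Z)+\lambda)\le\exp(-2\lambda^2/n)$ gives only $\exp(-\Omega(\log^2 n))$ for the far regime, so its contribution to the capped drift is $n\exp(-\Omega(\log^2 n))$, which is superpolynomially small but much \emph{larger} than the claimed $n\exp(-\Omega(n^{0.2}))$; your regime split as stated would therefore prove a weaker bound than the lemma asserts. To stay within the stated error term you would need a margin of order $n^{0.6}$; better still, the paper's induction handles all $d\ge i$ uniformly, so once the comparison lemma is in hand no regime split is needed at all.
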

 
We start our proof of Lemma~\ref{lem:compare_h} by observing that the expected $\OM$-value of the search point obtained from mutating and selecting the best of parent and offspring is strictly increasing with the quality of the parent. The proof is by induction. The base case is covered by the following lemma. 

\begin{lemma}\label{lem:domi}\label{LEM:DOMI}
	Let $x,y\in \{0,1\}^n$ with $\OM(y)=\OM(x)+1\ge n/2$ and let $r\in [0..n/2]$. 
	For all $t \ge 1$ it holds that
	\begin{equation}\label{ineq:expoffspring}
	\Pr\left(\OM(\flip_r(x))=\OM(y)+t\right) \le \Pr\left(\OM(\flip_{r-1}(y))=\OM(y)+t\right).
	\end{equation}
\end{lemma}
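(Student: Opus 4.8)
The plan is to reduce the statement to an explicit comparison of hypergeometric probabilities and then to a single polynomial inequality which, somewhat surprisingly, factors cleanly. Since $\flip_r$ and $\OM$ depend on a bit string only through its number of ones, I would write $\OM(x) = m$ and $\OM(y) = m+1$, and note the hypothesis $m+1 \ge n/2$. Applying $\flip_r$ to $x$ and obtaining the value $\OM(y) + t = m + 1 + t$ forces flipping exactly $a := (r + 1 + t)/2$ zeros and $b := r - a$ ones of $x$; if $r + 1 + t$ is odd no such outcome exists, the left-hand side is $0$, and the claim is trivial. Likewise, obtaining $\OM(\flip_{r-1}(y)) = m + 1 + t$ forces flipping exactly $a - 1$ zeros and the \emph{same} number $b = r - a$ of ones of $y$. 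Using Remark~\ref{rem:expectedflips} (i.e.\ the hypergeometric law) this turns \eqref{ineq:expoffspring} into
\[
\frac{\binom{n-m}{a}\binom{m}{b}}{\binom{n}{r}} \le \frac{\binom{n-m-1}{a-1}\binom{m+1}{b}}{\binom{n}{r-1}}.
\]

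Assuming the left-hand side is positive (otherwise we are done), every binomial coefficient involved is positive, so I would clear denominators and cancel common factors via the three elementary ratio identities $\binom{n-m-1}{a-1}/\binom{n-m}{a} = a/(n-m)$, $\binom{m+1}{b}/\binom{m}{b} = (m+1)/(m+1-b)$, and $\binom{n}{r}/\binom{n}{r-1} = (n-r+1)/r$. After substituting $b = r - a$, the inequality becomes
\[
a(m+1)(n-r+1) \ge r(n-m)(m+1-r+a).
\]

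The key step is to observe that the difference of the two sides factors as
\[
a(m+1)(n-r+1) - r(n-m)(m+1-r+a) = (m+1-r)\,\bigl[a(n+1) - r(n-m)\bigr],
\]
which one checks by expanding into monomials in $a,r,m,n$: the mixed terms $\pm arm$ cancel, the coefficient of $n$ collects to $(a-r)(m+1-r)$, and the remaining constant term collects to $(m+1-r)(a+rm)$, which recombine to the stated product. It then remains to verify that both factors are non-negative. The first satisfies $m+1-r \ge 0$ because $m+1 = \OM(y) \ge n/2 \ge r$. For the second, the hypotheses $t \ge 1$ and $\OM(y) = m+1 \ge n/2$ give $a = (r+1+t)/2 \ge (r+2)/2$ and $n - m \le (n+2)/2$, whence $a(n+1) \ge \tfrac12(r+2)(n+1) \ge \tfrac12 r(n+2) \ge r(n-m)$, where the middle inequality uses $(r+2)(n+1) - r(n+2) = 2n + 2 - r > 0$ (valid since $r \le n/2$).

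The only genuine obstacle is spotting the factorization; once it is in hand, the two non-negativity checks are immediate, and notably each of the three structural hypotheses ($\OM(y) \ge n/2$, $r \le n/2$, $t \ge 1$) is used exactly once. No coupling or stochastic-domination argument seems necessary, since the pointwise mass-function comparison follows directly from the closed-form hypergeometric expressions.
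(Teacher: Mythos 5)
Your proof is correct and takes essentially the same approach as the paper's: both write the two probabilities as hypergeometric mass functions, cancel via the same three binomial ratio identities to reach a single polynomial inequality, and settle it by the identical factorization --- under the substitution $d=n-m-1$, $i=a-1$, $j=r-a$, your $(m+1-r)\bigl[a(n+1)-r(n-m)\bigr]$ is exactly the paper's $(r+d-n)(n+in-rd-j)$ up to an overall sign. The only cosmetic difference is the final arithmetic used to certify that the factor $a(n+1)-r(n-m)$ is non-negative (the paper splits it as $(n-j)+(in-rd)$, you use a chain of estimates from $a\ge(r+2)/2$ and $n-m\le(n+2)/2$).
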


\begin{proof}
	We first notice that for $t>r-1$ both two probabilities are zero. We can therefore assume that $1\le t\le r-1$.  
	Let $d:=n-\OM(y)$, and let $i$ be the number of zeros in $y$ that $\flip_{r-1}$ flips from zero to one. Then there are $r-1-i$ ones that flip to zero. We thus have $\OM(\flip_{r-1}(y))=\OM(y)+t$ if and only if $i-(r-1-i)=t$; i.e., if and only if $i=(t+r-1)/2$. By the same reasoning $\OM(\flip_r(x))=\OM(y)+t$ if and only if $i'-(r-i')=t+1$ for $i'$ being the number of zeros flipped by $\flip_r$. This implies $i'=(r-1+t)/2=i+1$. We thus obtain 
	\begin{align*}
	 \Pr\left(\OM(\flip_{r-1}(y))=\OM(y)+t\right) 
			 &=\frac{\binom{d}{i}\binom{n-d}{r-1-i}}{\binom{n}{r-1}} \\
	 \Pr\left(\OM(\flip_r(x))=\OM(y)+t\right)
			 &=\frac{\binom{d+1}{i+1}\binom{n-d-1}{r-(i+1)}}{\binom{n}{r}}
	\end{align*}

	To show equation~\eqref{ineq:expoffspring}, we abbreviate $j:=r-1-i$ and use the facts that 
	$\binom{d+1}{i+1}=\binom{d}{i}\frac{d+1}{i+1}$,
	$\binom{n-d}{j}=\binom{n-d-1}{j}\frac{n-d}{n-d-j}$, and
	$\binom{n}{i+j+1}=\binom{n}{i+j}\frac{n-i-j}{i+j+1}$ to obtain that 
	\[
	\frac{\frac{\binom{d+1}{i+1}\binom{n-d-1}{r-(i+1)}}{\binom{n}{r}}}{\frac{\binom{d}{i}\binom{n-d}{r-1-i}}{\binom{n}{r-1}}}
	=\frac{(d+1)(n-d-j)(1+i+j)}{(i+1)(n-d)(n-i-j)}.
	\]
	We aim to show the above ratio less or equal to $1$. To this end, we compute the difference between the numerator and the denominator, and obtain $(d+1)(n-d-j)(1+i+j)-(i+1)(n-d)(n-i-j)=(1+i+j+d-n)(n+in-(1+i+j)d)-j)=(r+d-n)(n+in-rd-j)$. 
The first factor in this expression is negative, since both $r\le n/2$ and $d\le n/2$. The second factor is positive, because $n>j$, $i>r/2$ and $d\le n/2$ implying that $in-rd > (r/2)n - r (n/2) \ge 0$. 
 \end{proof}

We now regard the case that the same number $r$ of bits are flipped in the two strings~$x$ and~$y$.  
\begin{lemma}\label{lem:mono_h}
	Let $x,y\in \{0,1\}^n$ with $\OM(y)=\OM(x)+1$ and let $r\in [0..n]$. 
	It holds that 
	\begin{equation}\label{ineq:expprogress}
	\E\left(\max\{\OM(\flip_{r}(x))-\OM(x),0\}\right)\ge\E\left(\max\{\OM(\flip_{r}(y))-\OM(y),0\}\right).
	\end{equation}
\end{lemma}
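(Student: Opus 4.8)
The plan is to reduce the claim to a monotonicity statement about a single hypergeometric random variable and then to settle it by an explicit coupling. First I would rewrite the quantity whose expectation is taken. If $\flip_r$ is applied to a string $z$ having $n-\OM(z)$ zero-bits, and $Z$ denotes the number of those zero-bits that get flipped (the \emph{good flips}), then exactly $r-Z$ one-bits are flipped, so $\OM(\flip_r(z))-\OM(z)=2Z-r$. Hence the left-hand side of~\eqref{ineq:expprogress} equals $\E(\max\{2Z_x-r,0\})$ and the right-hand side equals $\E(\max\{2Z_y-r,0\})$, where $Z_x$ and $Z_y$ count the good flips for the parents $x$ and $y$, respectively.

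Next I would record the laws of $Z_x$ and $Z_y$. By Remark~\ref{rem:expectedflips}, both are hypergeometrically distributed, and their distributions depend only on the numbers of zero-bits $d_x:=n-\OM(x)$ and $d_y:=n-\OM(y)=d_x-1$, not on the positions of these bits. Since $\OM(y)=\OM(x)+1$, the better parent $y$ has exactly one zero-bit fewer than $x$. This is the structural fact I would exploit: the worse parent has strictly more room for a fitness gain, so its number of good flips should stochastically dominate that of $y$.

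The core of the argument is a coupling. Because the distribution of the number of good flips depends only on the count of zero-bits, I may pick convenient representatives: let $x$ have its zeros in positions $\{1,\dots,d_x\}$ and let $y$ have its zeros in positions $\{1,\dots,d_x-1\}$, so that the zero-set of $y$ is nested inside that of $x$. Now I draw a single uniformly random $r$-subset $S$ of $[n]$ and use it for both strings. Then $Z_x=|S\cap\{1,\dots,d_x\}|$ and $Z_y=|S\cap\{1,\dots,d_x-1\}|$, whence $Z_x=Z_y+\mathbf{1}[d_x\in S]\ge Z_y$ holds surely under the coupling. Since the map $z\mapsto\max\{2z-r,0\}$ is nondecreasing, this gives $\max\{2Z_x-r,0\}\ge\max\{2Z_y-r,0\}$ pointwise, and taking expectations proves~\eqref{ineq:expprogress}.

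I do not expect any serious obstacle; the only point needing care is the reduction that allows me to assume the zero-bits of $y$ are nested inside those of $x$. This is legitimate precisely because $\flip_r$ selects its $r$ positions uniformly at random without replacement, so the distribution of the number of good flips is invariant under permutations of the bit positions and depends on the argument only through its $\OM$-value (this is the same symmetry underlying Lemma~\ref{lem:characterizationunary}). I note that, unlike Lemma~\ref{lem:domi}, the argument needs no side condition such as $r\le n/2$ or $\OM(y)\ge n/2$, since the coupling yields pointwise domination for every $r\in[0..n]$.
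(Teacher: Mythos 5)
Your proof is correct and takes essentially the same approach as the paper: the paper likewise reduces by symmetry to representatives $x$ and $y$ differing in a single bit position and couples the two offspring by flipping the same random $r$-set $S$ in both strings, obtaining a pointwise comparison of the fitness gains. Your reformulation via the good-flip counts, $Z_x = Z_y + \mathbf{1}[d_x \in S]$ together with monotonicity of $z \mapsto \max\{2z-r,0\}$, is an equivalent (and slightly cleaner) way of stating the paper's key inequality $\max\{\OM(\flip_r(x)),\OM(x)\}-\max\{\OM(\flip_r(y)),\OM(y)\}\ge -1$, which after adding $\OM(y)-\OM(x)=1$ expresses exactly the same pointwise domination of gains.
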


\begin{proof}
	Since permutation on bit-positions does not affect the analysis, we assume that $x$ and $y$ are of the following form. 
	\begin{equation}\label{form_of_x_y}
	x =\underbrace{11\cdots 11}_{\frac{n}{2}+\delta}  \underbrace{00\cdots 00}_{\frac{n}{2}-\delta-1} 0 \text{ and }	
	y =\underbrace{11\cdots 11}_{\frac{n}{2}+\delta}  \underbrace{00\cdots 00}_{\frac{n}{2}-\delta-1} 1.
	\end{equation}
	For any $r$-sized subset $S$ of $[n]$ let $x_S$ ($y_S$) denote the offspring of $x$ ($y$) in which the $r$ positions in $S$ are flipped. Then $x_S$ and $y_S$ differ only in the last bit and we have $\OM(x_S)-\OM(y_S)\in\{-1,1\}$ for all $S$. Therefore 
	\begin{equation*}
	\max\{\OM(\flip_{r}(x)),\OM(x)\}-\max\{\OM(\flip_{r}(y)),\OM(y)\}\ge -1 \text{ for all } S
	\end{equation*}
	Using that $\OM(y)-\OM(x)=1$ we obtain
	\begin{eqnarray*}
	&&	\E\left(\max\{\OM(\flip_{r}(x))-\OM(x),0\}\right)-\E\left(\max\{\OM(\flip_{r}(y))-\OM(y),0\}\right)\\
	&=&\E\left(\max\{\OM(\flip_{r}(x)),\OM(x)\}-\OM(x)\right)-\E\left(\max\{\OM(\flip_{r}(y)),\OM(y)\}-\OM(y)\right)\\
	&=&\E\left(\max\{\OM(\flip_{r}(x)),\OM(x)\}-\max\{\OM(\flip_{r}(y)),\OM(y)\}\right)+1\ge 0.
	\end{eqnarray*}
	  \end{proof}

We now extend the last two lemmas to the case $\OM(y)-\OM(x)>1$. 

\begin{corollary}\label{cor:optoperator}
	Let $x,y\in \{0,1\}^n$ with $\OM(y)>\OM(x)$.
	\begin{enumerate}
	\item if $\OM(x)\ge n/2$, $r \in [0..n/2]$, and $r'=\max\{r-(\OM(y)-\OM(x)),0\}$. Then for all $t\in\mathbb{N}_{\ge 1}$  we have
	\begin{align}\label{33eq}
	\Pr\left(\OM(\flip_r(x))=\OM(y)+t\right) \le \Pr\left(\OM(\flip_{r'}(y))=\OM(y)+t\right).
	\end{align}
	\item It also holds that 
	$$\tilde{h}(\OM(x))\ge \tilde{h}(\OM(y)).$$
	\end{enumerate}
	
\end{corollary}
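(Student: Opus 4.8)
The plan is to derive both parts as straightforward extensions of the two unit‑gap lemmas already proved (Lemma~\ref{lem:domi} for the offspring‑distribution domination and Lemma~\ref{lem:mono_h} for the same‑$r$ drift comparison), chaining them up to an arbitrary fitness gap $\Delta := \OM(y)-\OM(x)$. Part~1 I would handle by induction on $\Delta$, and Part~2 I would obtain from Lemma~\ref{lem:mono_h} by maximizing over the flip count and telescoping over unit increments of the parent's $\OM$-value.

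For Part~1, the base case $\Delta=1$ is exactly Lemma~\ref{lem:domi} (there $r'=\max\{r-1,0\}=r-1$ for $r\ge 1$; for $r=0$ the left-hand side of \eqref{33eq} is $\Pr(\OM(x)=\OM(y)+t)=0$ for all $t\ge1$, so the bound is trivial). For the inductive step I would assume the claim for gap $\Delta$ and take $x,y$ with $\OM(y)=\OM(x)+\Delta+1$. Inserting an intermediate point $z$ with $\OM(z)=\OM(x)+1$ (which exists since $\OM(x)+1\le\OM(y)\le n$, and satisfies $\OM(z)=\OM(x)+1\ge n/2$), I apply Lemma~\ref{lem:domi} to the pair $(x,z)$ with flip count $r\in[0..n/2]$ and target offset $s:=\OM(y)+t-\OM(z)=\Delta+t\ge 1$, obtaining
\begin{equation*}
\Pr(\OM(\flip_r(x))=\OM(y)+t)\le \Pr(\OM(\flip_{r-1}(z))=\OM(y)+t).
\end{equation*}
Then I apply the induction hypothesis to $(z,y)$, whose gap is $\OM(y)-\OM(z)=\Delta$, with the decremented flip count $r-1\in[0..n/2]$; this yields
\begin{equation*}
\Pr(\OM(\flip_{r-1}(z))=\OM(y)+t)\le \Pr(\OM(\flip_{r''}(y))=\OM(y)+t),
\end{equation*}
where $r''=\max\{(r-1)-\Delta,0\}=\max\{r-(\Delta+1),0\}=r'$. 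Chaining the two inequalities closes the induction, and the case $r=0$ (hence $r'=0$) is again trivial since the left-hand side vanishes.

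For Part~2 I would use Lemma~\ref{lem:mono_h} rather than the domination of Part~1. For every fixed $r\in[0..n]$ and every pair with $\OM(y)=\OM(x)+1$, Lemma~\ref{lem:mono_h} gives $\E(\max\{\OM(\flip_r(x))-\OM(x),0\})\ge \E(\max\{\OM(\flip_r(y))-\OM(y),0\})$. Taking the maximum over $r\in[0..n]$ on both sides shows that the maximal $\OM$-drift attainable from a parent is monotonically non-increasing when the parent's $\OM$-value increases by one. Telescoping this over the consecutive $\OM$-values between $\OM(x)$ and $\OM(y)$ extends the comparison from a unit gap to the general case $\OM(y)>\OM(x)$ and yields the asserted monotonicity of $\tilde{h}$.

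I expect the only genuinely delicate points to lie in Part~1 and to be purely of a bookkeeping nature: one must check that the range hypotheses survive each reduction step, namely that $\OM(z)=\OM(x)+1\ge n/2$, that the decremented flip count $r-1$ stays in $[0..n/2]$, and that the shifted target $s=\Delta+t$ remains at least $1$ so that Lemma~\ref{lem:domi} is applicable, and that the index identity $\max\{(r-1)-\Delta,0\}=r'$ holds together with the boundary case $r=0$. No probabilistic estimate beyond the two unit‑gap lemmas is needed, so the main obstacle is simply keeping this index arithmetic consistent throughout the induction.
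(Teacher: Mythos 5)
Your proposal is correct and takes essentially the same approach as the paper: Part~1 is proved there by combining Lemma~\ref{lem:domi} with an induction over $\OM(y)-\OM(x)$ (the paper disposes of the case $r'=0$ directly, while you absorb it via the trivial $r=0$ case and the base case), and Part~2 is proved there by applying Lemma~\ref{lem:mono_h} to a drift-maximizing $r$ for the fitter parent and inducting over the gap, which matches your maximize-then-telescope argument. Your write-up merely makes explicit the index bookkeeping and range checks that the paper leaves implicit.
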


\begin{proof}
	To see the first statement, we first regard the case that $r'=0$. In this case, we have $r\le \OM(y)-\OM(x)$, so that the probability that $\OM(\flip_r(x))=\OM(y)+t$ is zero for $t>0$. For $t=0$ the statement trivially holds, since the right-hand side of~\eqref{33eq} is equal to one. For $r'>0$ the first statement follows from Lemma~\ref{LEM:DOMI} and an induction over $\OM(y)-\OM(x)$.

To prove the second statement we first assume that $\OM(y)-\OM(x)=1$. 
Let $r$ be the value that maximizes $\E\left(\max\{\OM(\flip_{r}(y))-\OM(y),0\}\right)$. Using Lemma~\ref{lem:mono_h} we obtain 
\begin{align*}
		\tilde{h}(\OM(y)) 
& = \E\left(\max\{\OM(\flip_{r}(y))-\OM(y),0\}\right)\\
& \le \E\left(\max\{\OM(\flip_{r}(x))-\OM(x),0\}\right) 
	\le \tilde{h}(\OM(x)).
\end{align*}
The general statement now follows by induction over $\OM(y)-\OM(x)$.
  
\end{proof}

We are now ready to prove the main result of this subsection, Lemma~\ref{lem:compare_h}.

\begin{proof}[Proof of Lemma~\ref{lem:compare_h}]
	Let $k\le n/2-n^{0.6}$, $x$ a search point with $d(x)\ge k$ and let $r \in [0..n]$. Since all random variables 
	$d(\flip_r(x))$, 
	$d(\flip_{n-r}(x))$, 
	$d(\flip_r(\bar{x}))$, and 
	$d(\flip_{n-r}(\bar{x}))$ are identically distributed, we can assume without loss of generality that 
	$\OM(x)\ge n/2$ and that $r\le n/2$. 
	
	Using this and the observations made in Remark~\ref{rem:expectedflips}, we easily see that 
	\begin{align*}
	\E(\OM(\flip_r(x)))
	& =		\OM(x)-r \tfrac{\OM(x)}{n}+r\tfrac{n-\OM(x)}{n}
	=		\OM(x)(1-r/n)+(n-\OM(x))r/n 
	\ge n/2.
	\end{align*}
	This shows that $\E(\OM(\flip_r(x)))-n^{0.6} \ge n/2-n^{0.6}\ge k$. Together with a Chernoff bound applied to $\OM(\flip_r(x))$ we thus obtain 
	\begin{align}\label{eq:3345}
	\Pr\left(\OM(\flip_r(x))<k \right)
	& \le\Pr\left(\OM(\flip_r(x))<\E(\OM(\flip_r(x)))-n^{0.6}\right)=\exp(-\Omega(n^{0.2})).
	\end{align}
	We first aim at bounding $\hat{h}(k)$. To this end, we use the estimate~\ref{eq:3345} to obtain 
	\begin{align*}
	& \E\left(k-\min\{k,d(\flip_{r}(x))\}\right)\\
	& \quad =
	\E\left(\OM(\flip_r(x))-(n-k)\mid \OM(\flip_r(x))>n-k\right)\Pr\left(\OM(\flip_r(x))>n-k\right) \\
	&\quad\quad	+\E\left(k-\OM(\flip_r(x))\mid \OM(\flip_r(x))<k\right)\Pr\left(\OM(\flip_r(x))<k\right)\\
	& \quad	\le 
	\E\left(\max\{\OM(\flip_r(x))-(n-k),0\}\right)+n\exp(-\Omega(n^{0.2})).
	\end{align*}
	Let $x'$ be a search point with $\OM(x')=n-k$, and let $r'=\max\{r-(d(x)-d(x')),0\}$. According to Corollary~\ref{cor:optoperator} it holds for all $i \in \N$ that 
	\[
	\Pr(\OM(\flip_r(x))=n-k+i)\le \Pr(\OM(\flip_{r'}(x'))=n-k+i).
	\]
 Using that 
 $\E\left(\max\{\OM(\flip_r(x))-(n-k),0\}\right)=\sum_{i\ge 1} i\Pr(\OM(\flip_r(x))=n-k+i)$ we obtain
    \[
 \E\left(k-\min\{k,d(\flip_{r}(x))\}\right)\le 	\E\left(\max\{\OM(\flip_{r'}(x'))-(n-k),0\}\right)+n\exp(-\Omega(n^{0.2})).
    \]
Referring to the definition of $\hat{h}$ and $\tilde{h}$ in equations  \eqref{def:h_hat} and \eqref{def:h_tilde}, and using the symmetries mentioned in the beginning of this proof, we bound 
	\begin{eqnarray*}
	\hat{h}(k)&=&\max\{k-\E\left(\min\{k,d(\flip_{r}(x))\}\right)\mid r\in[0..n], x \in \{0,1\}^n \text{ with } d(x) \ge k \}\\
	&=&\max\{k-\E\left(\min\{k,d(\flip_{r}(x))\}\right)\mid r\in[0..n/2], x \in \{0,1\}^n \text{ with } n/2\le \OM(x)\le n- k\}\\
	&\le &\max\{\E\left(\max\{\OM(\flip_{r'}(x'))-(n-k),0\}\right) \mid r'\in[0..n/2], x' \in \{0,1\}^n \text{ with } \OM(x')= n- k\}
	\\&&+n\exp(-\Omega(n^{0.2}))\\
	& \le & 
	 \tilde{h}(k)+n\exp(-\Omega(n^{0.2})).
	\end{eqnarray*}
	According to the definition of $h(k)$ in Equation \eqref{def:h}, we obtain 
	\begin{eqnarray*}
	h(k)=\max\{\hat{h}(i) \mid i \in[0..k] \}\le \max\{\tilde{h}(i) \mid i \in[0..k] \}+n\exp(-\Omega(n^{0.2})=\tilde{h}(k)+n\exp(-\Omega(n^{0.2}),
	\end{eqnarray*}
	where the last equality uses the monotonicity of $\tilde{h}(k)$ with respect to $k$ shown in Corollary~\ref{cor:optoperator}.
	  \end{proof}

As we will see in the next subsection, the $n\exp(-\Omega(n^{0.2}))$ term in this bound accounts for an additive $O(1)$ error in the runtime estimate only.

\subsubsection{Proof of Theorem~\ref{thm:UBdriftmax}}

With Lemma~\ref{lem:compare_h} at hand, we are now ready to prove Theorem~\ref{thm:UBdriftmax}.

\begin{proof}[Proof of Theorem~\ref{thm:UBdriftmax}]
As mentioned above, we easily obtain from Theorem~\ref{dis_U} that 
	\begin{equation}\label{sum_tilde_h}
		E\left(T_{A^*}\mid x(0)\right)\le \sum_{x=1}^{n-\OM(x(0))}\frac{1}{\tilde{h}(x)}.
	\end{equation}
	According to Lemma~\ref{lem:compare_h} it holds that $0<h(x)-\tilde{h}(x)\le n\exp(-\Omega(n^{0.2}))$ for $x\le n/2-n^{0.6}$. Using again that flipping a single bit on a bit string with $x$ zeros gives an expected progress in the \OM-value of $x/n$, we recall that $h(x)\ge \tilde{h}(x)\ge x/n$ for all $x\in[n/2]$. Therefore, 
	\[
	\frac{1}{\tilde{h}(x)}-\frac{1}{h(x)}=\frac{h(x)-\tilde{h}(x)}{\tilde{h}(x)h(x)}\le \frac{n\exp(-\Omega(n^{0.2}))}{(x/n)(x/n)}\le n^3\exp(-\Omega(n^{0.2})) \text{ for all } 1 \le x\le s. 
	\]
	Replacing $\tilde{h}(x)$ by $h(x)$ in inequality~\eqref{sum_tilde_h} and pessimistically assuming $\OM(x(0))=0$ we thus obtain 
	\begin{eqnarray*}
		E\left(T_{A^*}\right)&\le& \sum_{x=1}^{s}\left(\frac{1}{h(x)}+n^3\exp(-\Omega(n^{0.2}))\right)+\sum_{x=s}^{n}\frac{1}{\tilde{h}(x)}=\sum_{x=1}^{s}\frac{1}{h(x)}+\sum_{x=s}^{n}\frac{1}{\tilde{h}(x)}+O(1).
	\end{eqnarray*}
	Using that $\tilde{h}(x)\ge x/n$ for all $0<x\le n$ and $\tilde{h}(x)\ge 2x-n$ for all $n/2<x\le n$, we conclude
	\[
	\sum_{x=s}^{n}\frac{1}{\tilde{h}(x)}\le \sum_{x=s}^{n/2+n^{0.6}}\frac{n}{x}+\sum_{x=n/2+n^{0.6}}^{n}\frac{1}{2x-n}\le \frac{2n^{1.6}}{s}+\frac{n/2}{2n^{0.6}}=\Theta(n^{0.6}).
	\]
  \end{proof}

Theorem~\ref{thm:main22} follows from Theorems~\ref{thm:LBany} and~\ref{thm:UBdriftmax} by observing that $\Theta(n^{0.6})=o(n^{2/3}\ln^9(n))$.
\section{Fitness-Dependent Mutation Strength}
\label{sec:fitness-dependent}\label{SEC:FITNESS-DEPENDENT}

In the previous section we have seen that in order to compute the expected runtime of a best possible unary unbiased black-box algorithm for \onemax we can regard the algorithm $A^*$ that maximizes at any point in time the fitness drift. By Theorems~\ref{thm:LBany} and~\ref{thm:UBdriftmax} this algorithm cannot be worse (in expectation) than an optimal unary unbiased one by more than an additive $\Theta(n^{2/3}\ln^9(n))$ term. 

In this section we give a relatively concise description of $A^*$, i.e., we compute approximately the number of bits that need to be flipped in order to maximize the fitness drift. Since we are here talking about the drift in the fitness, it will be convenient to denote in this section by $d(x)=n-\OM(x)$ the fitness distance to the target. We also denote by
 \begin{equation}
 	R_{\opt}(d,n):=\min \left\{ \arg\max \E_{y \assign \flip_r(x)}\big(\max\{ \OM(y)-\OM(x),0 \}\big) \mid r \in [0..n],\OM(x)=n-d \right\},\label{def_R_dis}
 \end{equation}
the number of bits that need to be flipped in a search point $x\in\{0,1\}^n$ with $\OM(x)=n-d$ such that the expected drift $\E(\max\{0,\OM(\flip_{R_{\opt}(d,n)}(x))-\OM(x)\})$  is maximized (breaking ties by flipping fewer bits).  

The exact analysis of $R_{\opt}$ is rather tedious, as we will demonstrate below. Luckily, it turns out that we can safely approximate this point-wise drift maximizing function $R_{\opt}(d,n)$ by some a function $\Rapp:[0,1]\to [0..n]$ which maps the relative fitness distance $d(x)/n$ to a mutation strength. Since $\Rapp$ is much easier to work with, this is the focus of Section~\ref{sec:driftapprox}. For the approximation $\Rapp$ we make use of the fact that for values of~$r$ that are reasonably small compared to the problem dimension $n$ and the current fitness distance $d(x)$, the expected drift 
$\E(\max\{0,\OM(\flip_{R_{\opt}(d,n)}(x))-\OM(x)\})$ 
is almost determined by the relative fitness distance $d(x)/n$. For very small $d(x)=o(n)$ the fitness drift of flipping $R_{\opt}(d/n)=1$ bit is exactly $d(x)/n$, without any estimation error. We will also see in Section~\ref{sec:exact_drift} that it suffice to regard constant values~$r$. 

Once the approximation of the function $R_{\opt}$ by $\Rapp$ is established, we demonstrate in Section~\ref{sec:Ropt} a few properties of these two functions that will be useful in our subsequent computations; in particular for the numerical approximation of $\Rapp$, which is carried out in Section~\ref{sec:numerics}. Most importantly, we shall see that $\Rapp$ is monotone, i.e., the number of bits to flip in order to maximize the approximated point-wise drift decreases with increasing fitness. We also show that both $R_{\opt}$ and $\Rapp$ take only odd values, implying that flipping an even number of bits is suboptimal in all stages of the optimization process. 


To ease the computation of $\tilde{R}_{\opt}$, we analyze in detail the mutation rate for search points $x(t)$ with fitness distance $X_t\le (1/2-\eps)n$, where the constant $\eps$ satisfies $0<\eps<1/2$. Notice that by selecting between parent and its offspring at the end of each iteration in Algorithm~\ref{alg:algo} we have $\OM(x(t))=n-X_t$. For the remaining fitness distances, we simply take $\Rapp(1/2-\eps)$ for all $(1/2-\eps)n<X_t\le n/2$ and $n$ for all $X_t>n/2$. A detailed definition of $\Rappe$ is provided in equation \eqref{def_Rapp}. We will prove in Theorem~\ref{thm:approx_h} that our adhoc definition of $\Rappe(p)$ for $p \ge 1/2-\eps$ only causes an error term of $O(\eps n)$ in the runtime. 

\subsection{The Exact Fitness Drift \texorpdfstring{$B(n,d,r)$}{B(n,d,r)}} \label{sec:exact_drift}

In this subsection, we compute the exact fitness gain obtained from flipping $r$ bits. We shall then argue that once we have a fitness of at least $(\frac 12 + \eps) n$ for some constant $\eps$, the maximal fitness drift stems from flipping some constant number of bits.

Let $x$ be a binary string of length $n$ with fitness distance $d$, that is, with \onemax-value $n-d$. By the symmetry of the \onemax function, the expected progress of flipping $r$ bits in $x$ does not depend on the structure of $x$ but only on its fitness. We can therefore define the expected fitness gain from flipping $r$ random bits in $x$ by
	\[
B(n,d,r):=\E\left(\max\{0,d-d(\flip_r(x))\} \mid \OM(x)=n-d \right).
	\]

To compute $B(n,d,r)$ arithmetically, let us assume that $i \in [0..r]$ is the number of bits flipped from $0$ to $1$. Then $r-i$ bits have flipped in the opposite direction from $1$ to $0$, resulting in a progress of $i-(r-i)$. This progress is positive if and only if $i>r-i$, i.e., if and only if $i>r/2$. 
The probability for $i$ bits flipping in the ''good'' direction is $\binom{d}{i}\binom{n-d}{r-i}/\binom{n}{r}$. 
We therefore obtain
\begin{eqnarray*}
	B(n,d,r)=\sum_{i=\lceil r/2 \rceil}^{r}
	\frac{\binom{d}{i}\binom{n-d}{r-i}\left(2i-r\right)}{\binom{n}{r}}.
	\end{eqnarray*}

We show that the maximal fitness drift is obtained from flipping a constant number of bits once we have a fitness of at least $(\frac 12 + \eps)n$. The main argument is that flipping a single random bit already gives a better expected fitness gain than flipping many bits, which is due to the fact that when flipping many bits, the strong concentration of the hypergeometric distributions renders it highly unlikely that a fitness gain is obtained at all.

\begin{lemma}\label{lem:constantr}
	Let $0<\eps<1/2$ and $\alpha:=2\log(4/(\eps^2(1/2-\eps))$. Then for all $n\in\N$, $d\le(1/2-\eps)n$, and all $r\ge 2\alpha/\eps^2$, we have $B(n,d,r)<B(n,d,1)/2$. 
\end{lemma}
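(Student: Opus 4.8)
The plan is to recast $B(n,d,r)$ as the expected positive part of a single hypergeometric variable and then estimate its tail against the $r=1$ value. First I would observe that for $r=1$ only the summand $i=1$ survives in the formula for $B$, so $B(n,d,1)=d/n$ and the claim is exactly $B(n,d,r)<d/(2n)$. Writing $Z$ for the number of $0$-bits that $\flip_r$ turns into $1$-bits, Remark~\ref{rem:expectedflips} gives that $Z$ is hypergeometric with $\E(Z)=rd/n$, and since $\OM(\flip_r(x))-\OM(x)=2Z-r$ we get $B(n,d,r)=\E(\max\{2Z-r,0\})$. Because $Z\le d$, this quantity is $0$ whenever $2d\le r$, in which case the claim is immediate (the right-hand side $d/(2n)$ is positive for $d\ge 1$). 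Hence I may assume $2d>r$ and, setting $q:=d/n\in(0,1/2-\eps]$, it remains to show $\E(\max\{2Z-r,0\})<q/2$.

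For the estimate I would bound the positive part crudely by $\max\{2Z-r,0\}\le r$ on the event $\{Z>r/2\}$ and by $0$ elsewhere, so that $B(n,d,r)\le r\,\Pr(Z>r/2)$; equivalently one may use the tail sum $B(n,d,r)=\sum_{j\ge 1}\Pr(Z\ge (r+j)/2)$. Since $\E(Z)=rq\le r(1/2-\eps)$, reaching level $r/2$ forces a deviation of at least $r\eps$ above the mean, so every probability above is governed by a Chernoff tail. The delicate point is that the target $q/2$ shrinks to order $1/n$ for the smallest admissible distance $d\approx r/2$, so I must retain the proportionality in $q$ for small densities; I therefore split at a constant threshold $q_0=\Theta(1)$ (any value below $1/(2e)$ works).

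In the small-density regime $q\le q_0$ I would apply the multiplicative Chernoff bound for hypergeometric variables (Corollary~1.10(b) in~\cite{Doerr11bookchapter}, exactly as in the proof of Lemma~\ref{prob-large-jump}), $\Pr(Z\ge r/2)\le\big(e\,\E(Z)/(r/2)\big)^{r/2}=(2eq)^{r/2}$. As $2eq\le 2eq_0\le 1$, one factor of $q$ can be pulled out, $(2eq)^{r/2}\le 2eq\,(2eq_0)^{r/2-1}$, giving $B(n,d,r)\le 2er\,(2eq_0)^{r/2-1}\,q$, which is below $q/2$ as soon as $4er\,(2eq_0)^{r/2-1}<1$, i.e.\ for all $r$ above an absolute constant. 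In the complementary regime $q_0<q\le 1/2-\eps$ the target $q/2>q_0/2$ is a fixed positive constant, so a $q$-free estimate suffices: applying the additive Hoeffding bound $\Pr(Z\ge\E(Z)+\lambda)\le\exp(-2\lambda^2/r)$ term-by-term in the tail sum with $\lambda=r\eps+j/2$ and summing the resulting geometric series yields $B(n,d,r)\le\exp(-2r\eps^2)/(2\eps)$, which drops below $q_0/2$ once $r\ge\frac{1}{2\eps^2}\log\!\big(1/(\eps q_0)\big)$.

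Finally I would check that both lower bounds on $r$ are dominated by the hypothesis $r\ge 2\alpha/\eps^2=\frac{4}{\eps^2}\log\!\big(4/(\eps^2(1/2-\eps))\big)$; this bookkeeping is what fixes the precise constant $\alpha$ (and selects $q_0$), and is routine. The genuine obstacle is \emph{conceptual} rather than computational: no single Chernoff estimate works over the whole range $0<d\le(1/2-\eps)n$, because the additive bound discards the factor $d/n$ that the right-hand side carries---fatal for small $d$---while the multiplicative bound has base $2eq\to e>1$ as $q\to 1/2$ and becomes vacuous there. Splitting at a constant density reconciles the two behaviours, and is the only real idea the argument needs.
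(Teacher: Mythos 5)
Your proof is correct, but it takes a genuinely different route from the paper's. Both arguments share the same forced starting point---$B(n,d,1)=d/n$, and $B(n,d,r)\le r\Pr(Z>r/2)$ for the hypergeometric variable $Z$ of good flips---but they diverge in how they cover the whole range $0<d\le(1/2-\eps)n$. The paper uses a \emph{single} sharper multiplicative Chernoff bound, $\Pr(Z>r/2)\le\left(2pe/e^{2p}\right)^{r/2}$ with $p=d/n$, i.e.\ it retains the factor $e^{-pr}$ that your $(2eq)^{r/2}$ bound discards, and then bounds the \emph{ratio} $B(n,d,r)/B(n,d,1)\le r(2e)^{r/2}p^{r/2-1}e^{-pr}$; a derivative computation shows this ratio bound is increasing in $p$ on the admissible range, so the entire lemma reduces to the single worst case $p=1/2-\eps$, where a Taylor expansion gives $2pe/e^{2p}<1-\eps^2$ and the definition of $\alpha$ closes the chain exactly at $(1/2-\eps)/2$. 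Thus your closing claim that ``no single Chernoff estimate works over the whole range'' holds for the two bounds in your toolkit, but not universally: the paper's retained $e^{-pr}$ factor together with the monotonicity-in-$p$ reduction is precisely an alternative resolution of the small-$d$ versus large-$d$ tension. What your two-regime split buys is modularity: only off-the-shelf tail bounds (weak multiplicative Chernoff below a constant density $q_0$, additive Hoeffding with a geometric tail sum above it), no derivative or Taylor computations, plus a clean disposal of very small $d$ via $B(n,d,r)=0$ when $2d\le r$. What it costs is slack in the constants: your two thresholds on $r$ (an absolute constant in the small-density regime, and $\tfrac{1}{2\eps^2}\log(1/(\eps q_0))$ in the large-density regime) are not calibrated to $\alpha$, so the deferred ``bookkeeping'' is the one place where the hypothesis $r\ge 2\alpha/\eps^2$ actually enters. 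It does go through---for instance with $q_0=1/(4e)$ one has $2\alpha/\eps^2\ge\tfrac{4}{\eps^2}\log(8/\eps^2)$, which dominates both thresholds for every $\eps\in(0,1/2)$---but in a final write-up this verification should be spelled out rather than asserted, since without it the stated constant $\alpha$ plays no role in your argument.
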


\begin{proof}
	Let $Z$ denote the number of ``good'' flips (i.e., the number of bits flipping from $0$ to $1$). As discussed in Remark~\ref{rem:expectedflips}, the random variable $Z$ follows a hypergeometric distribution with mean value $\E(Z)=dr/n=pr<r/2$, where we abbreviate $p:=d/n$. Applying the Chernoff bound presented in Theorem 1.9 (b) in~\cite{Doerr11bookchapter} to $Z$, we obtain 
	\begin{eqnarray}
	&&\Pr\left(Z>r/2\right)
	=\Pr\left(Z> \E(Z)/(2p)\right)
	\le\left(\frac{e^{1/(2p)-1}}{(1/(2p))^{1/(2p)}}\right)^{\E(Z)}\nonumber\\
	&=&\left((2p)^{1/(2p)} e^{1/(2p)-1}\right)^{rp}
	=\left(\frac{(2pe)^{1/(2p)}}{e}\right)^{rp}
	=\left(\frac{2pe}{e^{2p}}\right)^{r/2}.\label{eq_const_r}
	\end{eqnarray}
	We then regard $B(n,d,r)/(B,n,d,1)\le r\Pr(Z>r/2)/(d/n)=(r/p)\left(\frac{2pe}{e^{2p}}\right)^{r/2}=r(2e)^{r/2}p^{r/2-1}e^{-pr}$. We notice that for fixed $r\ge 2\alpha/\eps^2$ and $0<p<1/2-\eps$,
	\begin{align*}
		(p^{r/2-1}e^{-pr})'&=(r/2-1)p^{r/2-2}e^{-pr}-rp^{r/2-1}e^{-pr}=p^{r/2-2}e^{-pr}(r/2-1-rp)\\
		&\ge p^{r/2-2}e^{-pr}(2\alpha/\eps-1)>0,
	\end{align*} 
	thus it remains to check the statement for $d=(1/2-\eps)n$.
	Using the Taylor expansion $e^{2\delta}=1+2\delta+2\delta^2+(4/3)\delta^3+O(\delta^4)<1+2\delta+3\delta^2$ we see that $\lim_{p\to 1/2-\eps} 2pe/e^{2p}=\lim_{\delta\to\eps}2(1/2-\delta)e/e^{2(1/2-\delta)}=\lim_{\delta\to\eps}(1-2\delta)e^{2\delta}<\lim_{\delta\to\eps}(1-2\delta)(1+2\delta+3\delta^2)=1-\eps^2-6\eps^3<1-\eps^2$.
	Therefore we obtain $B(n,d,r)\le r\Pr(Z>r/2)\le r(2pe/e^{2p})^{r/2}< r(1-\eps^2)^{r/2}$. Using the fact that $(1-\eps^2)^{1/\eps^2}<1/e$, $r\ge 2\alpha/\eps^2>2/\eps^2$, and $r(1-\eps^2)^{r/2}$ monotonically decreases when $r>2/\eps^2$, we obtain $B(n,d,r)< r\exp(-\alpha)\le 2\alpha \exp(-\alpha)/\eps^2$. Since $\log(\alpha\exp(-\alpha))=\log(\alpha)-\alpha<-\alpha/2$, then 
	$B(n,d,r)<2\alpha \exp(-\alpha)/\eps^2<2\exp(-\alpha/2)/\eps^2=(2/\eps^2)(\eps^2(1/2-\eps)/4)=(1/2-\eps)/2=B(n,d,1)/2$ for $d=(1/2-\eps)n$.
	 \end{proof}

\subsection{Approximating \texorpdfstring{$B(n,d,r)$}{B(n,d,r)} via \texorpdfstring{$A(r,\frac dn,1-\frac dn)$}{A(r,.,.)}} \label{sec:driftapprox}

When $n$ and $d$ are large compared to $r$, the expected progress $B(n,d,r)$ is almost determined by $d/n$. This inspires the following definition of $A(r,p,q)$ which will have the property that $A(r,\frac dn,1-\frac dn)$ is a good approximation of $B(n,d,r)$. The definition for general $p$ and $q$ instead of $p$ and $q = 1-p$ will be useful in the following proofs. 
\begin{definition}
	For all $r\in\mathbb{N}$, $p\in[0,1]$, and $q\in[0,1]$, let
		\begin{equation}\label{defineA}
		A(r,p,q):=\sum_{i=\lceil r/2 \rceil}^{r} \binom{r}{i}(2i-r)p^iq^{r-i}.
		\end{equation}
\end{definition}

The following Theorem~\ref{THM:APPROXBA} makes precise how well for $p=d/n$ and $q=1-p$ the value $A(r,p,q)$  approximates the expected progress $B(n,d,r)$. 

\begin{theorem}
	\label{thm:approxBA}\label{THM:APPROXBA}
	Let $0<\eps<1/2$ and  $\alpha=2\log(4/(\eps^2(1/2-\eps))$ (as in Lemma~\ref{lem:constantr}). Then for all $n\in \N$ large enough, all $r<2\alpha/\eps^2$, and all $2r \le d\le (1/2-\eps)n$, we have
	\begin{equation}\label{o1}
	\left|A\left(r,\frac{d}{n},\frac{n-d}{n}\right)-B(n,d,r)\right|< \frac{3r^3}{d}.
	\end{equation}
\end{theorem}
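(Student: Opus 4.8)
The plan is to compare $A$ and $B$ coefficient by coefficient. Writing $p=d/n$ and $q=(n-d)/n$, both sums run over the same index set $i\in\{\lceil r/2\rceil,\dots,r\}$ with the same weights $2i-r$, and they differ only in the probability factors: in $B(n,d,r)$ the coefficient of $(2i-r)$ is the hypergeometric probability $h_i:=\binom{d}{i}\binom{n-d}{r-i}/\binom{n}{r}$ of flipping exactly $i$ of the $d$ wrong bits when $r$ positions are drawn without replacement, while in $A(r,p,q)$ it is the binomial probability $b_i:=\binom{r}{i}p^iq^{r-i}$ of the same event when positions are drawn with replacement. Hence Theorem~\ref{thm:approxBA} is a quantitative form of the classical fact that the hypergeometric distribution is close to the binomial one once the population $n$ is large, the weights $2i-r\le r$ being harmless.

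First I would rewrite $h_i$ through falling factorials $x^{\underline k}:=x(x-1)\cdots(x-k+1)$ as
\[
h_i=\binom{r}{i}\,\frac{d^{\underline i}\,(n-d)^{\underline{r-i}}}{n^{\underline r}}=b_i\,\rho_i,\qquad \rho_i:=\frac{\prod_{j=0}^{i-1}\!\left(1-\tfrac jd\right)\,\prod_{j=0}^{r-i-1}\!\left(1-\tfrac{j}{n-d}\right)}{\prod_{j=0}^{r-1}\!\left(1-\tfrac jn\right)}.
\]
This isolates the entire discrepancy into the single factor $\rho_i$, so that $|b_i-h_i|=b_i\,|\rho_i-1|$, and since $A-B=\sum_i(2i-r)(b_i-h_i)$ we get
\[
|A(r,p,q)-B(n,d,r)|\le \sum_{i=\lceil r/2\rceil}^{r}(2i-r)\,b_i\,|\rho_i-1|.
\]

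The crux is the per-term estimate $|\rho_i-1|\le 2r^2/d$, which I would derive from the elementary bound $0\le 1-\prod_{j=0}^{k-1}(1-j/m)\le \sum_{j=0}^{k-1}j/m\le r^2/(2m)$ applied with $m\in\{d,\,n-d,\,n\}$ (all factors are positive and at least $1/2$ here because $2r\le d$). The hypothesis $d\le(1/2-\eps)n$ gives $n-d\ge d$, hence $r^2/(n-d)\le r^2/d$, and for $n$ large the denominator satisfies $\prod_{j<r}(1-j/n)\ge 1/2$, so $\rho_i$ stays within $2r^2/d$ of $1$. This is the step that uses all the hypotheses and is the main obstacle, although it is only a careful bookkeeping of three products rather than anything deep.

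Finally I would combine the pieces: using $2i-r\le r$ and $\sum_{i}b_i\le(p+q)^r=1$,
\[
|A(r,p,q)-B(n,d,r)|\le \frac{2r^2}{d}\,\sum_{i=\lceil r/2\rceil}^{r}(2i-r)\,b_i\le \frac{2r^2}{d}\cdot r\cdot 1=\frac{2r^3}{d}<\frac{3r^3}{d},
\]
the slack factor $3$ comfortably absorbing the crude constants (in particular the estimate $1/\prod_{j<r}(1-j/n)\le 2$). Note that the upper bound $r<2\alpha/\eps^2$ from Lemma~\ref{lem:constantr} plays no role in the inequality itself; it only guarantees that this regime ($r=O(1)$) is the relevant one, in which $3r^3/d=o(1)$.
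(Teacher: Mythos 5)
Your proof is correct, but it takes a genuinely different route from the paper's. The paper proceeds in two stages: a sandwich lemma (Lemma~\ref{lem:HilfeBA}) showing
$A\left(r,\frac{d}{n},\frac{n-d}{n-r}\right)\ge B(n,d,r)\ge A\left(r,\frac{d-r}{n},\frac{n-d-r}{n}\right)$
via falling-factorial estimates, followed by bounds on the partial derivatives, $\frac{\partial A}{\partial p}<\frac{r^2}{p}$ and $\frac{\partial A}{\partial q}<\frac{r^2}{2q}$, which convert the $O(r/n)$-perturbations of the arguments into a change of at most $\frac{3r^3}{d}$ in the value of $A$. You instead compare the two probability distributions term by term, isolating the entire discrepancy in the ratio $\rho_i$ of hypergeometric to binomial weights and bounding $|\rho_i-1|\le 2r^2/d$ by Weierstrass-type product inequalities. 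Your route is more elementary and self-contained: it needs no calculus, never uses the two-parameter version of $A$ with $p+q\neq 1$ (which the paper introduces precisely so that its sandwich can be stated), and it even yields the slightly better bound $2r^3/d$. What the paper's detour buys is reusability: the one-sided bounds of Lemma~\ref{lem:HilfeBA} are invoked again later (for instance in the proof of Lemma~\ref{lem:R_dis=1}, to bound $B(n,d,r)<A(r,d/n,1)$), so the same falling-factorial estimates serve both the theorem and a subsequent argument.

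One small inaccuracy in your closing remark: the hypothesis $r<2\alpha/\eps^2$ is not entirely cosmetic for your argument. Your step $\prod_{j<r}(1-j/n)\ge 1/2$ requires $r=O(\sqrt{n})$, and the constraints $2r\le d\le(1/2-\eps)n$ alone allow $r$ as large as $n/4$, for which that product is exponentially small and the bound $\rho_i\le 1/D$ becomes useless. It is exactly the constant bound on $r$, combined with ``$n$ large enough'', that legitimizes this step, so this hypothesis does (light) work in your proof as well.
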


The first step in the proof of Theorem~\ref{THM:APPROXBA} is the following statement, which compares suitable $A$-values with $B$-values. Note that here we profit from the general definition of $A(r,p,q)$ instead of the special case $A(r,p,1-p)$.

\begin{lemma} \label{lem:HilfeBA}
	Let $n\in \N$ be sufficiently large and $1\le r\le d\le (1/2-\eps)n$ with $0<\eps<1/2$. 
	It holds that 
	\begin{equation} \label{AB}
	A\left(r,\frac{d}{n},\frac{n-d}{n-r}\right)\ge B(n,d,r) \ge A\left(r,\frac{d-r}{n},\frac{n-d-r}{n}\right).
	\end{equation}
\end{lemma}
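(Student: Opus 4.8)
The plan is to prove both inequalities in~\eqref{AB} by a term-by-term comparison of the sums defining $B(n,d,r)$ and $A$. Recall that
\[
B(n,d,r)=\sum_{i=\lceil r/2\rceil}^{r}\frac{\binom{d}{i}\binom{n-d}{r-i}}{\binom{n}{r}}(2i-r)
\quad\text{and}\quad
A(r,p,q)=\sum_{i=\lceil r/2\rceil}^{r}\binom{r}{i}(2i-r)\,p^iq^{r-i}.
\]
Throughout the summation range we have $2i-r\ge 0$, so each summand carries a nonnegative weight $\binom{r}{i}(2i-r)$. Consequently it suffices to establish, for every fixed $i$ with $\lceil r/2\rceil\le i\le r$, the two-sided estimate
\[
\left(\frac{d-r}{n}\right)^i\left(\frac{n-d-r}{n}\right)^{r-i}
\le
\frac{(d)_i\,(n-d)_{r-i}}{(n)_r}
\le
\left(\frac{d}{n}\right)^i\left(\frac{n-d}{n-r}\right)^{r-i},
\]
where $(x)_k:=x(x-1)\cdots(x-k+1)$ denotes the falling factorial. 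Here I use the standard identity $\binom{d}{i}\binom{n-d}{r-i}/\binom{n}{r}=\binom{r}{i}(d)_i(n-d)_{r-i}/(n)_r$, whose $\binom{r}{i}$-factor matches exactly the one in the corresponding term of $A$. Note that $r\le d\le(1/2-\eps)n$ guarantees $d-r\ge 0$ and $n-d-r\ge 2\eps n>0$, so all quantities above are nonnegative and well defined.

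The key step is to split the central ratio into a product of exactly $r$ elementary fractions. Writing $(n)_r=\prod_{l=0}^{i-1}(n-l)\cdot\prod_{l=i}^{r-1}(n-l)$ and pairing the first $i$ denominators with the factors of $(d)_i$ and the remaining $r-i$ denominators with the factors of $(n-d)_{r-i}$, I obtain
\[
\frac{(d)_i\,(n-d)_{r-i}}{(n)_r}
=\prod_{j=0}^{i-1}\frac{d-j}{n-j}\;\cdot\;\prod_{k=0}^{r-i-1}\frac{n-d-k}{n-i-k}.
\]
For the upper bound I estimate each factor of the first product by $\frac{d-j}{n-j}\le\frac{d}{n}$ (valid since $d\le n$), and each factor of the second product by $\frac{n-d-k}{n-i-k}\le\frac{n-d}{n-r}$, which holds because $n-i-k\ge n-r+1>n-r$ for $k\le r-i-1$. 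For the lower bound I instead enlarge the denominators: $\frac{d-j}{n-j}\ge\frac{d-j}{n}\ge\frac{d-r}{n}$ and $\frac{n-d-k}{n-i-k}\ge\frac{n-d-k}{n}\ge\frac{n-d-r}{n}$, using $j,k\le r$. Taking the respective products over $j$ and $k$ yields the two displayed bounds, and summing them against the nonnegative weights $\binom{r}{i}(2i-r)$ gives~\eqref{AB}.

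The one point requiring genuine care is the asymmetry between the two sides: the upper bound carries the denominator $n-r$ in its second factor, whereas the lower bound uses $n$ throughout. This is forced by the direction in which each elementary fraction moves when $n-j$ or $n-i-k$ is replaced by the constant $n$ or $n-r$, and the pairing above is chosen precisely so that every fraction admits a bound of the desired product form. The remaining verifications (that $\frac{d-j}{n-j}$ is bounded by its value at $j=0$, and the elementary denominator comparisons) are routine and rely only on $r\le d\le(1/2-\eps)n<n$.
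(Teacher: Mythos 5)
Your proof is correct and takes essentially the same route as the paper's: both compare the two sums term by term via the identity $\binom{d}{i}\binom{n-d}{r-i}/\binom{n}{r}=\binom{r}{i}\,(d)_i(n-d)_{r-i}/(n)_r$ and then bound the falling-factorial ratio from above and below by the corresponding products of powers. Your pairing into $r$ elementary fractions is merely a more explicit organization of the paper's aggregate estimates $(d)_i/(n)_i\le (d/n)^i$, $(n)_r\ge (n)_i(n-r)^{r-i}$, $(n-d)_{r-i}\le (n-d)^{r-i}$ for the upper bound and $(n)_r\le n^r$, $(d)_i\ge (d-r)^i$, $(n-d)_{r-i}\ge (n-d-r)^{r-i}$ for the lower bound.
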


\begin{proof}
	For any two positive integers $r$ and $i \leq r$ we abbreviate $$(r)_i:=r(r-1)\ldots(r-i+1)=\prod_{j=0}^{i-1}{(r-j)}.$$
	With this notation, we can express $B(n,m,r)$ as
	\begin{eqnarray*}
		B(n,d,r) 
		=
		\sum_{i=\lceil  r/2 \rceil}^{r}
		\frac{\binom{d}{i}\binom{n-d}{r-i}\left(2i-r\right)}{\binom{n}{r}}
		= 
		\sum_{i=\lceil r/2 \rceil}^{r}
		\frac{(d)_i(n-d)_{r-i}}{(n)_r}\binom{r}{i}\left(2i-r\right).
	\end{eqnarray*}
	From the elementary fact that for all $\lceil r/2 \rceil\le i\le r$, we have $(n-d)_{r-i}\le (n-d)^{r-i}$ and $(n)_r\ge(n)_i(n-r)^{r-i}$, we obtain
	\begin{eqnarray*}
		\frac{(d)_i(n-d)_{r-i}}{(n)_r} 
		\le \frac{(d)_i(n-d)_{r-i}}{(n)_i(n-r)^{r-i}}
		\le \left(\frac{d}{n}\right)^i\left(\frac{n-d}{n-r}\right)^{r-i}.
	\end{eqnarray*}
	This shows $B(n,d,r) \le A\left(r,\frac{d}{n},\frac{n-d}{n-r}\right)$.
	
	To show the second inequality, we use the fact that for all $i\le r\le n$, we have $(n)_r\le n^r$, $(d)_i\ge (d-r)^i$, and $(n-d)_{r-i}\ge (n-d-r)^{r-i}$. Consequently,
	\begin{eqnarray*}
		\frac{(d)_i(n-d)_{r-i}}{(n)_r} \ge\frac{(d-r)^i(n-d-r)^{r-i}}{n^r} 
		= \left(\frac{d-r}{n}\right)^i\left(\frac{n-d-r}{n}\right)^{r-i},
	\end{eqnarray*}
	yielding $B(n,d,r) \ge A\left(r,\frac{d-r}{n},\frac{n-d-r}{n}\right)$.
	  \end{proof}

With Lemma~\ref{lem:HilfeBA} at hand, we now prove Theorem~\ref{THM:APPROXBA}.

\begin{proof}[Proof of Theorem~\ref{THM:APPROXBA}]
	According to the definition of $A(r,p,q)$ in~\eqref{defineA} we have 
	\begin{equation*}
	\frac{\partial A(r,p,q)}{\partial q}=\sum_{i=\lceil r/2 \rceil}^{r} (r-i)\binom{r}{i}(2i-r)p^iq^{r-i-1}<\frac{r}{2}\sum_{i=\lceil r/2 \rceil}^{r} \binom{r}{i}(2i-r)p^iq^{r-i-1}=\frac{rA(r,p,q)}{2q}<\frac{r^2}{2q},
	\end{equation*}
	where we have used in the last step that $A(r,p,q)\le r$. 
	
	Using that $\frac{n-d}{n}>1/2$ and $0<\frac{n-d}{n-r}-\frac{n-d}{n}=\frac{n-d}{n-r}\cdot\frac{r}{n}<(1+o(1))\frac{r}{n}$, we bound 
	\begin{eqnarray*}
		&&A\left(r,\frac{d}{n},\frac{n-d}{n-r}\right)- A\left(r,\frac{d}{n},\frac{n-d}{n}\right)\le  \frac{r^2}{2(1/2)}\left(\frac{n-d}{n-r}-\frac{n-d}{n}\right)\le \frac{(1+o(1))r^3}{n}.
	\end{eqnarray*}
	Similarly we have 
	\begin{equation*}
	\frac{\partial A(r,p,q)}{\partial p}=\sum_{i=\lceil r/2 \rceil}^{r} i\binom{r}{i}(2i-r)p^{i-1}q^{r-i}<r\sum_{i=\lceil r/2 \rceil}^{r} \binom{r}{i}(2i-r)p^{i-1}q^{r-i}=\frac{rA(r,p,q)}{p}<\frac{r^2}{p}.
	\end{equation*}
	Using $d\ge 2r$ we obtain
	\begin{equation*}
	A\left(r,\frac{d}{n},\frac{n-d-r}{n}\right)-A\left(r,\frac{d-r}{n},\frac{n-d-r}{n}\right)\le \frac{r^2}{(d-r)/n}\cdot\frac{r}{n}=\frac{r^3}{d-r}>\frac{2r^3}{d}.
	\end{equation*}
	Therefore
	\[
	A\left(r,\frac{d}{n},\frac{n-d}{n-r}\right)- A\left(r,\frac{d-r}{n},\frac{n-d-r}{n}\right)=\frac{(1+o(1))r^3}{n}+\frac{2r^3}{d}<\frac{3r^3}{d}.
	\]
	By Lemma~\ref{lem:HilfeBA}, it suffices to estimate $|A(r,\frac{d}{n},\frac{n-d}{n})-B(n,d,r)|<A(r,\frac{d}{n},\frac{n-d}{n-r})-A(r,\frac{d-r}{n},\frac{n-d-r}{n})<\frac{3r^3}{d}$.
	  \end{proof}

\subsection{Approximate Optimal Number of Bits to Flip }\label{sec:Ropt}\label{SEC:ROPT}

The goal of this section is to approximate the function $R_{\opt}$ which tells us how many bits one should flip in order to maximize the point-wise drift. Given Theorem~\ref{THM:APPROXBA} above, it is tempting to assume that the map $p \mapsto \arg \max_{r\in{\N}} A(r,p,1-p)$ should do. Analogous to Lemma~\ref{lem:constantr} we show in the following lemma that it suffices to regard constant $r$ for the approximated drift.

\begin{lemma}\label{lem:constantr_approx}
	Let $0<\eps<1/2$ and $\alpha:=2\log(4/(\eps^2(1/2-\eps))$. For all $n\in\N$ with $d\le (1/2-\eps)n$ and all $r\ge 2\alpha/\eps^2$, the expected approximated drift $A(r,\frac{d}{n},\frac{n-d}{n})<A(1,\frac{d}{n},\frac{n-d}{n})/2$. 
\end{lemma}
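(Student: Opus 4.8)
The plan is to mirror the argument of Lemma~\ref{lem:constantr} almost verbatim, replacing the hypergeometric distribution used there by a binomial one. The point is that $A(r,p,q)$, for $q=1-p$, admits a clean probabilistic reinterpretation. First I would set $p:=d/n$ and $q:=(n-d)/n=1-p$ and observe that $A(r,p,q)=\E(\max\{0,2X-r\})$ for $X\sim\Bin(r,p)$: indeed $\binom{r}{i}p^iq^{r-i}=\Pr(X=i)$, and $2i-r$ is the corresponding net progress, whose positive part is exactly what the restriction $i\ge\lceil r/2\rceil$ picks out. In particular $A(1,p,q)=p$, which here plays precisely the role that $B(n,d,1)=d/n$ played in Lemma~\ref{lem:constantr}.

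Next I would use that $2X-r\le r$ always and is positive only when $X>r/2$, so that
\[
A(r,p,q)=\E(\max\{0,2X-r\})\le r\Pr(X>r/2).
\]
Since $p\le 1/2-\eps<1/2$ we have $\E(X)=rp<r/2$, so this is an upper-tail event and the multiplicative Chernoff bound of Theorem~1.9(b) in~\cite{Doerr11bookchapter} applies; for the binomial this is the textbook case, and it yields the identical estimate obtained for the hypergeometric variable in Lemma~\ref{lem:constantr}, namely
\[
\Pr(X>r/2)=\Pr\left(X>\tfrac{1}{2p}\E(X)\right)\le\left(\frac{2pe}{e^{2p}}\right)^{r/2}.
\]

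From here the computation is literally the one already carried out. Combining the two displays with $A(1,p,q)=p$ gives
\[
\frac{A(r,p,q)}{A(1,p,q)}\le\frac{r\Pr(X>r/2)}{p}=\frac{r}{p}\left(\frac{2pe}{e^{2p}}\right)^{r/2}=r(2e)^{r/2}p^{r/2-1}e^{-pr},
\]
which is exactly the expression analysed in the proof of Lemma~\ref{lem:constantr}. The remaining steps there depend only on this quantity together with the constraints $p\le 1/2-\eps$ and $r\ge 2\alpha/\eps^2$, all of which are unchanged: one shows that $p^{r/2-1}e^{-pr}$ is increasing in $p$ on $(0,1/2-\eps)$ (so $p=1/2-\eps$ is the worst case), uses the Taylor estimate $e^{2\delta}<1+2\delta+3\delta^2$ to deduce $2pe/e^{2p}<1-\eps^2$ at $p=1/2-\eps$, and finally invokes $r\ge 2\alpha/\eps^2$ and $\log(\alpha e^{-\alpha})<-\alpha/2$ to conclude $A(r,p,q)<A(1,p,q)/2$. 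Because of this, I expect no real obstacle in the argument: the only genuinely new items are the distributional reinterpretation of $A$ and the observation that the Chernoff bound transfers from the hypergeometric to the binomial setting, both of which are immediate, and the dependence on $n$ even disappears since $A$ depends on $d$ and $n$ only through $p=d/n$.
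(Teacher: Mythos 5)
Your proposal is correct and follows essentially the same route as the paper's own proof: the paper likewise introduces $Z\sim\Bin(r,d/n)$, bounds $A(r,p,1-p)\le r\Pr(Z>r/2)$, applies the Chernoff bound of Theorem~1.9(b) in~\cite{Doerr11bookchapter} to recover the inequality \eqref{eq_const_r}, and then concludes ``with the same arguments as in Lemma~\ref{lem:constantr}.'' Your write-up merely makes explicit (the interpretation of $A$ as $\E(\max\{0,2X-r\})$ and the final monotonicity/Taylor steps) what the paper leaves implicit.
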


\begin{proof}
	Consider the binomial random variable $Z\sim \Bin(r,d/n)$. Let $p=d/n$ then $\E(Z)=pr$. Applying the Chernoff bound presented in Theorem 1.9 (b) in~\cite{Doerr11bookchapter} to $Z$, we obtain
	\begin{eqnarray*}
	&&\Pr\left(Z>r/2\right)
	=\Pr\left(Z> \E(Z)/(2p)\right)
	\le\left(\frac{e^{1/(2p)-1}}{(1/(2p))^{1/(2p)}}\right)^{\E(Z)}=\left(\frac{2pe}{e^{2p}}\right)^{r/2},
	\end{eqnarray*}
	which is the same inequality as \eqref{eq_const_r} in Lemma~\ref{lem:constantr}. Since
	$A(r,p,1-p):=\sum_{i=\lceil r/2 \rceil}^{r} \binom{r}{i}(2i-r)p^i(1-p)^{r-i}\le r\sum_{i=0}^{r}\mathds{1}_{2i>r}\binom{r}{i}p^i(1-p)^{r-i}=r\Pr(Z>r/2)$ and $A(1,p,1-p)=p$,
	with the same arguments as in Lemma~\ref{lem:constantr}, the statement holds.
\end{proof}

In the remainder of this section we show that flipping a number $r$ of bits that maximizes $A(r,d/n,1-d/n)$ yields indeed a good approximation of the best possible expected progress. Since in principle there could be more than one $r$ maximizing $A(r,p,1-p)$ for a given relative distance $p \in (0,1/2)$, we break ties by preferring smaller values of $r$. For $p \ge 1/2 - \eps$, where our reasoning above was not applicable, we do not try to find an optimal number of bits to flip, but rather one that does the job of giving a near-optimal runtime. Since a random initial search point has a fitness close to $n/2$, not too much time is spent in this regime anyway. Consequently, we define, for all $\eps>0$, 
\begin{align}\label{def_Rapp}
\Rappe(p):= 
\begin{cases}
&	\min\big\{\arg \max_{r\in{\N}} A(r,p,1-p)\big\} \text{ for } 0<p\le 1/2-\epsilon,\\
& \Rapp \left(1/2-\epsilon\right) \text{ for } 1/2-\epsilon<p\le 1/2,\\
& n \text{ for } p>1/2,
\end{cases}
\end{align}
According to Lemma~\ref{lem:constantr_approx} the function $\Rappe$ is well defined (for all $\eps>0$).

We prove two important properties of the functions $\Rappe$, which are summarized in the following theorem.

\begin{theorem}
\label{mathcalR}\label{thm:monotonicity}\label{THM:MONOTONICITY}
For all $\eps>0$ the function $\Rappe$ is monotonically increasing with respect to $p$. For all $d\le n/2$, 
$\Rappe(d/n)$ and $R_{\opt}(d,n)$ are odd values.
\end{theorem}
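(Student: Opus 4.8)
The plan is to reduce both assertions to one-dimensional properties of the polynomials $A(r,p,1-p)$ and of the hypergeometric progress $B(n,d,r)$, using the random-walk representation $A(r,p,1-p)=\E(\max\{0,S_r\})$, where $S_r=2Z_r-r=\sum_{i=1}^{r}X_i$ is the $\pm1$ walk with $Z_r\sim\Bin(r,p)$ and $\Pr(X_i=+1)=p$, and likewise $B(n,d,r)=\E(\max\{0,2Z-r\})$ with $Z$ the (hypergeometric) number of good flips; note $R_{\opt}(d,n)=\min\arg\max_r B(n,d,r)$. By Lemma~\ref{lem:constantr_approx} it suffices to track the discrete maximizer over bounded $r$ as a function of $p$ (resp. $d$).

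For the oddness claims I would use the one-step coupling that flipping $r$ bits equals flipping $r-1$ bits and then one further random bit, so $S_r=S_{r-1}+(2B-1)$ with $B$ the indicator that the last flip is good. Writing $g_\beta(s):=\beta\max\{0,s+1\}+(1-\beta)\max\{0,s-1\}-\max\{0,s\}$, an elementary case distinction gives $g_\beta(s)=2\beta-1$ for $s\ge1$, $g_\beta(0)=\beta$, and $g_\beta(s)=0$ for $s\le-1$. In the binomial model $B\sim\Bin(1,p)$ is independent of $S_{r-1}$, hence $A(r,p,1-p)-A(r-1,p,1-p)=\E(g_p(S_{r-1}))$; when $r$ is even, $S_{r-1}$ is odd and never $0$, so this difference equals $(2p-1)\Pr(S_{r-1}\ge1)\le0$ because $p\le1/2$. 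Thus the smaller index $r-1$ is at least as good, $\min\arg\max$ cannot be even, and $\Rappe(d/n)$ is odd for $d/n\le1/2-\eps$ (and on the remaining range by definition). For $B$ the extra flip is good only with the conditional probability $\beta=(d-Z_{r-1})/(n-r+1)$, giving $B(n,d,r)-B(n,d,r-1)=\E\big((2\beta-1)\mathds{1}[S_{r-1}\ge1]\big)$ for even $r$; on $\{S_{r-1}\ge1\}$ we have $Z_{r-1}\ge r/2$, so $\beta\le(d-r/2)/(n-r+1)\le1/2$ precisely because $2d\le n$, and the difference is again $\le0$, showing $R_{\opt}(d,n)$ is odd.

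For monotonicity of $\Rappe$ I would use the even-domination just proved to note that $\min\arg\max_r A(r,p,1-p)$ is attained on the odd integers, so it suffices to prove that the family $A(\cdot,p,1-p)$ restricted to odd $r$ satisfies the Milgrom--Shannon single-crossing property in $p$: for odd $r_1<r_2$ the difference $A(r_2,p,1-p)-A(r_1,p,1-p)$ changes sign at most once, from negative to positive, as $p$ increases on $(0,1/2)$. This yields that $\min\arg\max$ is non-decreasing, and monotonicity extends trivially to $(1/2-\eps,1/2]$, where $\Rappe$ is constant. The single crossing I would in turn reduce to the stronger claim that for each odd $r$ the ratio $A(r+2,p,1-p)/A(r,p,1-p)$ is strictly increasing on $(0,1/2)$: telescoping then handles all pairs (a product of positive increasing factors is increasing, and a strictly increasing ratio meets the level $1$ at most once), while the near-origin behaviour—$A(2k+1,p,1-p)$ has lowest-degree term a positive multiple of $p^{k+1}$—forces the difference to start negative and so fixes the crossing direction.

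The hard part will be establishing this ratio monotonicity, equivalently that the logarithmic derivative $\partial_p A(r,p,1-p)/A(r,p,1-p)$ is increasing in odd $r$. I expect to obtain it from the increment identity $A(r+1,p,1-p)-A(r,p,1-p)=(2p-1)\Pr(S_r\ge1)+p\Pr(S_r=0)$ together with the standard formula $\partial_p\Pr(\Bin(r,p)\ge k)=r\binom{r-1}{k-1}p^{k-1}(1-p)^{r-k}$, which reduces the whole question to sign inequalities between binomial tail and point probabilities that I would verify by direct manipulation, treating the two parities of the intermediate index separately where convenient. This is the one place where a genuinely computational rather than purely structural argument seems unavoidable, and controlling the sign of these binomial expressions uniformly in $r$ and $p$ is the crux of the theorem.
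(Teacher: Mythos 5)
Your proof of the oddness claims is correct, and it takes a genuinely different and more elementary route than the paper. The paper establishes the exact proportionality identities $A(2k,p,1-p)/(2k)=A(2k+1,p,1-p)/(2k+1)$ (Lemma~\ref{lem:odd}) and $B(n,d,2k)/(2k)=B(n,d,2k+1)/(2k+1)$ (Lemma~\ref{B}) by fairly heavy manipulations of binomial coefficients and falling factorials. You instead realize $\flip_r$ as $\flip_{r-1}$ followed by one additional uniform flip and show that for even $r$ the increment $\E(g_\beta(S_{r-1}))$ is nonpositive: the term $g_\beta(0)=\beta$ vanishes because $S_{r-1}$ has odd parity, and $2\beta-1\le 0$ on the event $\{S_{r-1}\ge 1\}$ --- in the binomial case because $p\le 1/2$, in the hypergeometric case because $Z_{r-1}\ge r/2$ and $2d\le n$ give $\beta\le (d-r/2)/(n-r+1)\le 1/2$. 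Comparing each even $r$ with the adjacent odd $r-1$ is weaker than the paper's exact ratios, but it is entirely sufficient to exclude even values of the min-argmax for both $\Rappe$ and $R_{\opt}$, and it buys simplicity and a transparent probabilistic explanation of the phenomenon.

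The monotonicity half, however, has a genuine gap. Your scaffolding coincides with the paper's: a single-crossing (cut-off) property for odd pairs $r_1<r_2$, namely that $A(r_2,p,1-p)-A(r_1,p,1-p)$ changes sign at most once, from negative to positive, on $(0,1/2)$, after which the min-argmax is forced to be nondecreasing (this is Lemma~\ref{cmp} together with the paper's short concluding deduction). But the single-crossing property is where essentially all of the paper's technical work lies, and your proposal does not prove it. You reduce it to strict monotonicity in $p$ of the ratio $A(r+2,p,1-p)/A(r,p,1-p)$, and for that key claim you only say you ``expect to obtain it'' from the increment identity and binomial tail derivatives and ``would verify by direct manipulation'' --- as you yourself flag, this is the crux. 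It is not a routine verification: the paper needs the combinatorial identity of Lemma~\ref{pqtoq} to derive the closed form $\mathrm{d}^2A(2k+1,p,1-p)/\mathrm{d}p^2=c_k\,p^{k-1}(1-p)^{k-1}$ with $c_{k+1}/c_k=(4k+6)/k>4$ (Lemmas~\ref{lemma:ddq} and~\ref{d2A}), and then a twice-integration argument with boundary values (using $c_k\int_0^{1/2}(x-x^2)^{k-1}\mathrm{d}x=2k+1$ and $pq\le 1/4$) to convert single crossing of the second derivatives into single crossing of the $A$-curves themselves. Your ratio-monotonicity assertion is of comparable depth --- a log-supermodularity statement about the family of functions $A(r,\cdot,1-\cdot)$ that is plausible but in no way self-evident --- and no argument is supplied for controlling the signs of the resulting binomial-tail expressions uniformly in $r$ and $p$. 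Until that step is carried out, the first statement of the theorem remains unproven.
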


The proof of the second claim in Theorem~\ref{thm:monotonicity} will be carried out in Section~\ref{sec:oddness}. It is purely combinatorial. The proof of the monotonicity of $\Rappe$, in contrast, is surprisingly technical. It will be carried out in Section~\ref{subsub:monotone}.

\subsubsection{\texorpdfstring{$R_{\opt}$}{Ropt} and \texorpdfstring{$\Rapp$}{Rtilde} Attain Only Odd Values}
\label{sec:oddness}

One possibly surprising property of the functions $R_{\opt}$ and $\Rappe$ is that they take only odd values. That is, regardless of how far we are from the optimum, the maximal drift is obtained for an odd number of bit flips. The following two lemmas show this statement for the approximate and the exact drift, respectively. 
 
\begin{lemma}[flipping even numbers of bits is sub-optimal, statement for the approximated drift $A$]
\label{A2k}\label{lem:odd}\label{LEM:ODD}
	For all $k\in\mathbb{N}$ and $p \in (0,1)$ it holds that 
	$\frac{A(2k,p,1-p)}{2k}=\frac{A(2k+1,p,1-p)}{2k+1}$. Consequently $\Rappe(d/n)$ takes odd values for all $d\le n/2$ and all $\eps>0$.
\end{lemma}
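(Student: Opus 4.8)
The plan is to first establish the algebraic identity $A(2k,p,1-p)/(2k)=A(2k+1,p,1-p)/(2k+1)$ and then read off the oddness of $\Rappe$. The key to the identity is to divide out the factor $(2i-r)$ before summing. Concretely, I would use the elementary identity $(2i-r)\binom{r}{i}=r\bigl(\binom{r-1}{i-1}-\binom{r-1}{i}\bigr)$, which follows from $i\binom{r}{i}=r\binom{r-1}{i-1}$ together with Pascal's rule. Inserting it into $A(r,p,1-p)=\sum_{i=\lceil r/2\rceil}^{r}\binom{r}{i}(2i-r)p^i(1-p)^{r-i}$ and cancelling the factor $r$ gives
\[
\frac{A(r,p,1-p)}{r}=\sum_{i>r/2}\left(\binom{r-1}{i-1}-\binom{r-1}{i}\right)p^i(1-p)^{r-i}.
\]
For even $r$ the boundary term $i=r/2$ contributes nothing because $\binom{r-1}{r/2-1}=\binom{r-1}{r/2}$, so this formula is valid for both parities.

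The second step is to recognize each normalized sum as a difference of binomial tail probabilities. Writing $Z_m$ for a $\Bin(m,p)$ random variable and reindexing $i\mapsto i-1$ in the terms carrying $\binom{r-1}{i-1}$ while factoring $(1-p)$ out of the others, the sum reorganizes into
\[
\frac{A(2k,p,1-p)}{2k}=p\Pr(Z_{2k-1}\ge k)-(1-p)\Pr(Z_{2k-1}\ge k+1)
\]
and, analogously,
\[
\frac{A(2k+1,p,1-p)}{2k+1}=p\Pr(Z_{2k}\ge k)-(1-p)\Pr(Z_{2k}\ge k+1).
\]
To compare the two right-hand sides I would couple $Z_{2k}=Z_{2k-1}+B$ with an independent $B\sim\Bin(1,p)$ and expand $\Pr(Z_{2k}\ge k)$, $\Pr(Z_{2k}\ge k+1)$ by conditioning on $B$. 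Writing $Q_m:=\Pr(Z_{2k-1}\ge m)$, the claimed equality reduces after a short manipulation to the single recursion $Q_k=pQ_{k-1}+(1-p)Q_{k+1}$, which is equivalent to $(1-p)\Pr(Z_{2k-1}=k)=p\Pr(Z_{2k-1}=k-1)$. Both sides equal $\binom{2k-1}{k}p^k(1-p)^k$ by the middle-binomial symmetry $\binom{2k-1}{k}=\binom{2k-1}{k-1}$, which finishes the identity.

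For the consequence, the identity yields $A(2k,p,1-p)=\tfrac{2k}{2k+1}A(2k+1,p,1-p)<A(2k+1,p,1-p)$ for every $k\ge1$ and $p\in(0,1)$, since the sum defining $A(2k+1,p,1-p)$ has only nonnegative terms and its $i=2k+1$ term equals $(2k+1)p^{2k+1}>0$. Hence an even number $2k\ge2$ of bits is never drift-maximizing (it is strictly beaten by $2k+1$), while $r=0$ gives $A(0,p,1-p)=0<p=A(1,p,1-p)$. Since Lemma~\ref{lem:constantr_approx} guarantees the maximum is attained at a bounded value of $r$, the minimal maximizer $\min\{\arg\max_r A(r,p,1-p)\}$ is odd. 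For $d\le n/2$ we have $p=d/n\le 1/2$, so $\Rappe(d/n)$ is either this minimal maximizer (when $d/n\le 1/2-\eps$) or $\Rapp(1/2-\eps)$ (when $1/2-\eps<d/n\le 1/2$), and both are odd.

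The main obstacle is the second step: the normalized sum looks as though it should telescope, yet the weights $p^i(1-p)^{r-i}$ block any naive cancellation. The decisive move is to divide out $r$ first via $(2i-r)\binom{r}{i}=r\bigl(\binom{r-1}{i-1}-\binom{r-1}{i}\bigr)$, after which the comparison of the $r=2k$ and $r=2k+1$ cases collapses, through one step of the binomial recursion, onto the symmetry $\binom{2k-1}{k}=\binom{2k-1}{k-1}$; finding this normalization is the crux, while the remaining manipulations are routine.
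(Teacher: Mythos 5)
Your proof is correct, and its decisive opening move---the normalization $(2i-r)\binom{r}{i}=r\bigl(\binom{r-1}{i-1}-\binom{r-1}{i}\bigr)$, which lets you compare $A(r,p,1-p)/r$ across consecutive values of $r$---is exactly the paper's starting point (its rewriting of $A(r,p,q)$ as $r\sum_{i}\bigl(\binom{r-1}{i-1}-\binom{r-1}{i}\bigr)p^iq^{r-i}$). Where you diverge is in how the two normalized sums are then compared. The paper stays entirely at the level of binomial-coefficient algebra: it applies Pascal's rule to $\binom{2k}{i-1}-\binom{2k}{i}$, splits and reindexes the resulting sums, merges them using $p+q=1$, and kills the leftover boundary term via $\binom{2k-1}{k-1}=\binom{2k-1}{k}$. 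You instead recast each normalized sum as $p\Pr(Z_{m}\ge k)-(1-p)\Pr(Z_{m}\ge k+1)$ for binomial $Z_m$ and compare $m=2k-1$ with $m=2k$ through the coupling $Z_{2k}=Z_{2k-1}+B$; conditioning on $B$ is Pascal's rule in probabilistic dress, and your final identity $(1-p)\Pr(Z_{2k-1}=k)=p\Pr(Z_{2k-1}=k-1)$ is the same middle-binomial symmetry. The two arguments are thus mathematically parallel, but each packaging buys something: yours absorbs all the $p+q=1$ bookkeeping into the probabilistic framework and collapses the whole comparison onto one local balance equation checkable in a single line, while the paper's is a self-contained sum manipulation needing no probabilistic setup. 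Your handling of the consequence (every even $r=2k\ge 2$ is strictly beaten by $r+1$ since $A(2k+1,p,1-p)>0$ for $p>0$, the case $r=0$ loses to $r=1$, and well-definedness of the argmax comes from Lemma~\ref{lem:constantr_approx}) is also sound, and indeed more explicit than in the paper, where this step is left implicit.
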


\begin{proof}
	By definition of the function $A$ in~\eqref{defineA} and using the facts that for all $r \in \N$ and all $i\le r$ we have
	\begin{equation*}
		\binom{r}{i}=\binom{r}{r-i},~~~\binom{r}{i}i=r\binom{r-1}{i-1}, \text{ and } \binom{r}{i}(r-i)=r\binom{r-1}{i},
	\end{equation*}
	we easily see that
	\begin{eqnarray}
		A(r,p,q)&=&\sum_{i=\lceil r/2 \rceil}^{r} \binom{r}{i}(2i-r)p^iq^{r-i}\nonumber\\
		&=&\sum_{i=\lceil r/2  \rceil}^{r}\left(\binom{r}{i} i-\binom{r}{i}(r-i)\right)p^iq^{r-i}\nonumber\\
		&=&r \sum_{i=\lceil r/2  \rceil}^{r}\left(\binom{r-1}{i-1}-\binom{r-1}{i}\right)p^iq^{r-i}.\label{A(2k+1)}
	\end{eqnarray}
	This shows that, for all $k \in \N$, 
	\begin{eqnarray*}
		&&\frac{A(2k+1,p,q)}{2k+1}=\sum_{i=k+1}^{2k+1}\left(\binom{2k}{i-1}-\binom{2k}{i}\right)p^iq^{2k+1-i}\\
		&=&\sum_{i=k+1}^{2k+1}
		\left(\binom{2k-1}{i-1}-\binom{2k-1}{i}+\binom{2k-1}{i-2}-\binom{2k-1}{i-1}\right)p^{i}q^{2k+1-i}\\
		&=&\sum_{i=k+1}^{2k}\left(\binom{2k-1}{i-1}-\binom{2k-1}{i}\right)p^{i}q^{2k+1-i}+\sum_{i=k+1}^{2k+1}\left(\binom{2k-1}{i-2}-\binom{2k-1}{i-1}\right)p^{i}q^{2k+1-i}\\
		&=&\sum_{i=k+1}^{2k}\left(\binom{2k-1}{i-1}-\binom{2k-1}{i}\right)p^{i}q^{2k+1-i}+\sum_{i=k}^{2k}\left(\binom{2k-1}{i-1}-\binom{2k-1}{i}\right)p^{i+1}q^{2k-i}\\
		&=&\sum_{i=k}^{2k}\left(\binom{2k-1}{i-1}-\binom{2k-1}{i}\right)\left(p^{i}q^{2k+1-i}+p^{i+1}q^{2k-i}\right)-\left(\binom{2k-1}{k-1}-\binom{2k-1}{k}\right)p^{k}q^{k+1}\\
		&=&\sum_{i=k}^{2k}\left(\binom{2k-1}{i-1}-\binom{2k-1}{i}\right)p^{i}q^{2k-i}=\frac{A(2k,p,q)}{2k},
	\end{eqnarray*}
	where we have used in the last step that $p+q=1$ and $\binom{2k-1}{k-1} = \binom{2k-1}{k}$.
  \end{proof}

Lemma~\ref{lem:odd} is not an artifact of the approximation of the drift by function $A$ but also holds for the exact drift-maximizing function $B$. This lemma will not be needed in the following, but we believe it to be interesting in its own right. The reader only interested in the proof of the main results of this work can skip this proof.

\begin{lemma}[flipping even numbers of bits is sub-optimal, statement for exact drift $B$]
	\label{B}
	For all $n,d,k\in \mathbb{N}$ satisfying  
	$0<d\le\frac{n}{2}$  
	and $0< 2k+1 \le n$, 
	it holds that $B(n,d,2k)<{B(n,d,2k+1)}$. 
	Moreover,
	$\frac{B(n,d,2k)}{2k}=\frac{B(n,d,2k+1)}{2k+1}$ holds.
\end{lemma}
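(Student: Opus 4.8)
The substantive part of this lemma is the identity $\tfrac{B(n,d,2k)}{2k}=\tfrac{B(n,d,2k+1)}{2k+1}$; once it is established the strict inequality follows immediately, since it gives $B(n,d,2k+1)=\tfrac{2k+1}{2k}B(n,d,2k)$, which exceeds $B(n,d,2k)$ whenever $B(n,d,2k)>0$. The latter holds exactly when $d\ge k+1$, the only case in which a strict gain of at least $2$ is attainable when flipping $2k$ bits; for $d\le k$ both drifts vanish and the two sides coincide. So I would concentrate entirely on the ratio identity, which is the exact-drift analogue of the polynomial identity for $A$ proved in Lemma~\ref{lem:odd}.

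The first step is to clear the normalising binomial coefficients. Writing
\[
f(r):=\binom{n}{r}B(n,d,r)=\sum_{i=\lceil r/2\rceil}^{r}\binom{d}{i}\binom{n-d}{r-i}\,(2i-r)
\]
and using the elementary relation $(2k+1)\binom{n}{2k+1}=(n-2k)\binom{n}{2k}$, the claimed identity becomes the purely combinatorial statement $(n-2k)\,f(2k)=2k\,f(2k+1)$. This removes all denominators and reduces everything to a binomial identity in $n,d,k$, which I verified by hand in the base case $k=1$ (there $f(2)=d(d-1)$ and $f(3)=\tfrac12 d(d-1)(n-2)$, so indeed $(n-2)f(2)=2f(3)$).

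To prove $(n-2k)f(2k)=2k\,f(2k+1)$ I would use a one-extra-bit coupling. Realise a uniformly random $(2k+1)$-set as a uniformly random $2k$-set $S'$ together with one further element drawn uniformly from the complement of $S'$. Let $Z'$ be the number of zero-bits in $S'$ and set $m:=Z'-k$, so that the level-$2k$ signed gain is $2m$. The added bit is a zero-bit with probability $(d-Z')/(n-2k)$ and a one-bit otherwise, changing the gain by $+1$ respectively $-1$. Conditioning on $S'$ and summing the three cases $m\ge 1$, $m=0$, $m\le -1$ (in the last of which both $\max\{0,2m\pm1\}$ vanish) yields
\[
(n-2k)\,\E(\max\{0,G_{2k+1}\})=2(n-2k-1)\,\E(\max\{0,Z'-k\})+(2d-n)\Pr(Z'>k)+(d-k)\Pr(Z'=k),
\]
where $G_{2k+1}=2Z_{2k+1}-(2k+1)$. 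Substituting the goal $2(2k+1)\,\E(\max\{0,Z'-k\})=2k\,\E(\max\{0,G_{2k+1}\})$, the coefficient $(n-2k)(2k+1)-2k(n-2k-1)$ collapses to $n$, and the entire identity reduces to the single finite statement
\[
n\sum_{j=k+1}^{2k}(j-k)\binom{d}{j}\binom{n-d}{2k-j}=k(2d-n)\sum_{j=k+1}^{2k}\binom{d}{j}\binom{n-d}{2k-j}+k(d-k)\binom{d}{k}\binom{n-d}{k}.
\]

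I expect this last identity to be the main obstacle. It is the hypergeometric analogue of the cancellation $p^iq^{r-i}+p^{i+1}q^{r-1-i}=p^iq^{r-1-i}$ (using $p+q=1$) that drove the clean telescoping in Lemma~\ref{lem:odd}; but because the weights $\binom{d}{j}\binom{n-d}{r-j}$ do not factor, that telescoping is unavailable and must be replaced by a genuine Vandermonde computation. I would prove it either by generating functions, extracting coefficients from $(1+x)^{d}(1+x)^{n-d}=(1+x)^{n}$ and encoding the weight $j-k$ as a derivative to handle the truncated first moment, or, if no slick argument presents itself, by induction on $k$, using Pascal's rule to relate the level-$k$ truncated sums to those at level $k-1$. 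Once this identity is in hand, the chain of reductions delivers the ratio equality, and with it the stated strict inequality.
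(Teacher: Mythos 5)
Your chain of reductions is sound as far as it goes: I checked the one-extra-bit coupling (the three cases $m\ge 1$, $m=0$, $m\le -1$ give exactly your displayed relation), the collapse of $(2k+1)(n-2k)-2k(n-2k-1)$ to $n$, and the equivalence of the lemma to your final binomial identity; your remark that the strict inequality genuinely requires $d\ge k+1$ (for $d\le k$ both drifts vanish) is also correct, and is a caveat the paper itself passes over. Nevertheless the proposal has a genuine gap, and it sits exactly where you say it does: the identity
\begin{equation*}
n\sum_{j=k+1}^{2k}(j-k)\binom{d}{j}\binom{n-d}{2k-j}=k(2d-n)\sum_{j=k+1}^{2k}\binom{d}{j}\binom{n-d}{2k-j}+k(d-k)\binom{d}{k}\binom{n-d}{k}
\end{equation*}
is never proven. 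Every step before it is an exact reformulation, so the entire combinatorial content of Lemma~\ref{B} is concentrated in this one statement; announcing that you would attack it ``by generating functions \dots\ or by induction on $k$'' does not discharge it. This is precisely where the paper does its real work: it expands both drifts in falling factorials, substitutes $u=d-k$ and $v=n-d$, and compares coefficients of $u^av^b$ using the identity $\lambda_i^j-i\lambda_i^{j+1}=\lambda_{i+1}^{j+1}$ for the coefficients $\lambda_i^j$ of $r^j$ in $(r)_i$. A proof attempt that stops at your displayed identity has reduced the lemma to an equivalent unproven claim rather than proved it.

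For what it is worth, the gap is fillable, and more cheaply than you fear --- no generating functions and no induction are needed. Subtracting the first right-hand term and using $n(j-k)-k(2d-n)=nj-2kd=j(n-d)-d(2k-j)$, absorption (that is, $j\binom{d}{j}=d\binom{d-1}{j-1}$ and $(2k-j)\binom{n-d}{2k-j}=(n-d)\binom{n-d-1}{2k-j-1}$) gives
\begin{equation*}
(nj-2kd)\binom{d}{j}\binom{n-d}{2k-j}=d(n-d)\left[\binom{d-1}{j-1}\binom{n-d}{2k-j}-\binom{d}{j}\binom{n-d-1}{2k-j-1}\right],
\end{equation*}
and two applications of Pascal's rule show that the bracket equals $V_j-V_{j+1}$ with $V_j:=\binom{d-1}{j-1}\binom{n-d-1}{2k-j}$. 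Summing over $j=k+1,\dots,2k$ telescopes to $d(n-d)V_{k+1}$ (note $V_{2k+1}=0$), i.e.\ to $d(n-d)\binom{d-1}{k}\binom{n-d-1}{k-1}$, which by absorption again equals $k(d-k)\binom{d}{k}\binom{n-d}{k}$ --- exactly the missing right-hand term. With this supplement your coupling argument becomes a complete and genuinely different (more probabilistic) alternative to the paper's coefficient-comparison proof; as written, it is a correct reduction, not a proof.
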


\begin{proof}
	Using again the shorthand $(r)_i:=r(r-1)\cdots(r-i+1)$ for all positive integers $r$ and $i \leq r$,
	we get
	\begin{eqnarray*}
		B(n,d,2k+1)&=&\frac{1}{\binom{n}{2k+1}}\sum_{i=k+1}^{2k+1} \binom{d}{i}\binom{n-d}{2k-i+1}(2i-2k-1)\\
		&=&\frac{1}{\binom{n}{2k+1}}\sum_{i=0}^{k} \binom{d}{i+k+1}\binom{n-d}{k-i}(2i+1)\\
		&=&\frac{(n-2k-1)!}{n!}\sum_{i=0}^{k} \binom{2k+1}{k-i}(d)_{i+k+1}(n-d)_{k-i}(2i+1).
	\end{eqnarray*}
	Similarly, we obtain
	\begin{eqnarray*}
		B(n,d,2k)&=&\frac{1}{\binom{n}{2k}}\sum_{i=k+1}^{2k} \binom{d}{i}\binom{n-d}{2k-i}(2i-2k)\\
		&=&\frac{1}{\binom{n}{2k}}\sum_{i=0}^{k-1} \binom{d}{i+k+1}\binom{n-d}{k-i-1}(2i+2)\\
		&=&\frac{(n-2k)!}{n!}\sum_{i=0}^{k-1}\binom{2k}{k-i-1} (d)_{i+k+1}(n-d)_{k-i-1}(2i+2).
	\end{eqnarray*}
	Therefore, the ratio of $B(n,d,2k)$ and $B(n,d,2k+1)$ is
	\begin{eqnarray*}
		\frac{B(n,d,2k)}{B(n,d,2k+1)}&=&
		\frac{(n-2k)\sum_{i=0}^{k-1}\binom{2k}{k-i-1} (d)_{i+k+1}(n-d)_{k-i-1}(2i+2)}
		{\sum_{i=0}^{k} \binom{2k+1}{k-i}(d)_{i+k+1}(n-d)_{k-i}(2i+1)}\\
		&=&
		\frac{(n-2k)\sum_{i=0}^{k-1}\binom{2k}{k-i-1} (d-k-1)_{i}(n-d)_{k-i-1}(2i+2)}
		{\sum_{i=0}^{k} \binom{2k+1}{k-i}(d-k-1)_{i}(n-d)_{k-i}(2i+1)}.
	\end{eqnarray*}
	Replacing $(d-k)$ with $u$ and $(n-d)$ with $v$ gives
	\begin{eqnarray*}
		\frac{B(n,d,2k)}{B(n,d,2k+1)}&=&
		\frac{(u+v-k)\sum_{i=0}^{k-1}\binom{2k}{k-i-1} (u-1)_{i}(v)_{k-i-1}(2i+2)}
		{\sum_{i=0}^{k} \binom{2k+1}{k-i}(u-1)_{i}(v)_{k-i}(2i+1)}.
	\end{eqnarray*}
	
	Using the shorthand $\lambda_i^j$ for the coefficient of $r^j$ in the polynomial $(r)_i$, we now take a close look at the denominator, which is a polynomial in $u$ and $v$. 
	It is not difficult to see that for all $a,b\in\mathbb{N} \cup \{0\}$, the coefficient of the term
	$u^av^b$ in the denominator equals
	\begin{eqnarray*}
		\psi(a,b)&=&\sum_{i=0}^{k}\binom{2k+1}{k-i}
		\lambda^{a+1}_{i+1}\lambda^{b}_{k-i}
		(2i+1),
	\end{eqnarray*}
	while the coefficient of term $u^av^b$ in numerator equals
	\begin{eqnarray*}
		\phi(a,b)&=&\sum_{i=0}^{k-1}\binom{2k}{k-i-1}
		\left(\lambda^a_{i+1}\lambda^b_{k-i-1}+
		\lambda^{a+1}_{i+1}\lambda^{b-1}_{k-i-1}-
		k\lambda^{a+1}_{i+1}\lambda^{b}_{k-i-1}
		\right)
		(2i+2).
	\end{eqnarray*}
	Since $(r)_{i+1}=(r-i)(r)_{i}$, it is easily verified that
	\begin{eqnarray}\label{lambda}
	\lambda_i^j-i\lambda_{i}^{j+1}=\lambda_{i+1}^{j+1}.
	\end{eqnarray}
	We use \eqref{lambda} to simplify $\phi(a,b)$ in the following way.
	\begin{eqnarray*}
		\phi(a,b)&=&\sum_{i=0}^{k-1}\binom{2k}{k-i-1}
		\left(\lambda^{a+1}_{i+2}\lambda^b_{k-i-1}+
		\lambda^{a+1}_{i+1}\lambda^{b}_{k-i}
		\right)
		(2i+2)\\
		&=&\sum_{i=0}^{k}\left(\binom{2k}{k-i-1}(2i+2)+\binom{2k}{k-i}(2i)\right)\lambda^{a+1}_{i+1}\lambda^{b}_{k-i}\\
		&=&\sum_{i=0}^{k}\left((2i+2)+\frac{k+i+1}{k-i}(2i)\right)\binom{2k}{k-i-1}\lambda^{a+1}_{i+1}\lambda^{b}_{k-i}\\
		&=&\sum_{i=0}^{k}\frac{2k(2i+1)}{k-i}\binom{2k}{k-i-1}\lambda^{a+1}_{i+1}\lambda^{b}_{k-i}\\
		&=&\frac{2k}{2k+1}\sum_{i=0}^{k}\binom{2k+1}{k-i}\lambda^{a+1}_{i+1}\lambda^{b}_{k-i}(2i+1)\\
		&=&\frac{2k}{2k+1}\psi(a,b).
	\end{eqnarray*}
	The above holds for all $0\le a,b \le k$, showing that indeed
	\begin{eqnarray*}
		\frac{B(n,d,2k)}{B(n,d,2k+1)}=\frac{2k}{2k+1}.
	\end{eqnarray*}
	  \end{proof}

\subsubsection{Monotonicity of \texorpdfstring{$\Rappe$}{R}}
\label{subsub:monotone}
We now argue that, for all $\eps>0$, the function $\Rappe$ is monotone. It seems quite intuitive that the optimal number of bit flips should decrease with decreasing distance to the optimum, and this has been previously observed empirically, e.g., in~\cite{Back92,FialhoCSS08,FialhoCSS09}. However, formally proving the desired monotonic relationship requires substantial technical work. We note that, as a side result, Lemma~\ref{lem:R13} shows that for search points having a distance of less than $n/3$ to the optimum (or its complement), the maximal approximated fitness gain is obtained by 1-bit flips.  
\begin{lemma}[and definition of cut-off points]
\label{cmp}
\label{LEM:CUTOFF}
For any two integers $0\le k_1<k_2$, the functions $p \mapsto A(2k_1+1,p,1-p)$ and $p \mapsto A(2k_2+1,p,1-p)$ intersect exactly once in the interval $(0,1/2]$. Denoting this intersection $p_0$ and letting $A_0:=A(2k_1+1,p_0,1-p_0)$, we call $(p_0,A_0)$ the \textbf{\emph{cut-off point}} of $A(2k_1+1,p,1-p)$ and $A(2k_2+1,p,1-p)$.

We have $A(2k_1+1,p,1-p)>A(2k_2+1,p,1-p)$ if and only if $0<p<p_0$. 
\end{lemma}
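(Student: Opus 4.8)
The plan is to set $f_k(p):=A(2k+1,p,1-p)$ and read it probabilistically. Since $2i-(2k+1)>0$ exactly when $i>k$, the definition~\eqref{defineA} gives $f_k(p)=\E\big[\max\{0,2Z-(2k+1)\}\big]$ for $Z\sim\Bin(2k+1,p)$. Writing $2Z-(2k+1)=\sum_{j=1}^{2k+1}(2X_j-1)=:W_{2k+1}$ as the endpoint of a $\pm1$ random walk with up-probability $p$ (so $W_{2k+1}$ is odd, hence never $0$), we obtain $f_k(p)=\E[\max\{0,W_{2k+1}\}]$. This viewpoint makes both boundary behaviours transparent and drives the whole argument.

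First I would pin down the two ends. Near $p=0$ the lowest-order term of $f_k$ is $\binom{2k+1}{k+1}p^{k+1}$, so for $k_1<k_2$ the function $f_{k_1}$ dominates $f_{k_2}$ for all sufficiently small $p>0$; this is the ``$0<p<p_0$'' side of the claim. At $p=1/2$ the increments point the other way: a two-step computation of $\E[\max\{0,W_{r+2}\}]-\E[\max\{0,W_r\}]$, splitting on the value of $W_r$, yields the clean identity
\begin{equation*}
\Delta_k(p):=f_{k+1}(p)-f_k(p)=2(2p-1)\Pr(W_{2k+1}>0)+\binom{2k+1}{k+1}\big(p(1-p)\big)^{k+1}.
\end{equation*}
At $p=1/2$ the first summand vanishes, so $\Delta_k(1/2)>0$ and thus $f_{k_2}(1/2)>f_{k_1}(1/2)$. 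Hence $G:=f_{k_2}-f_{k_1}$ satisfies $G(0)=0$, is negative just right of $0$, and is positive at $1/2$, so it has at least one zero in $(0,1/2)$.

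The heart of the proof is uniqueness, for which I would show that $G$ is \emph{unimodal} on $[0,1/2]$ (first decreasing, then increasing); since $G(0)=0$ this forces exactly one zero in $(0,1/2]$. Summing the increment identity gives $G=2(2p-1)P+Q$ with $P:=\sum_{k=k_1}^{k_2-1}\Pr(W_{2k+1}>0)$ and $Q:=\sum_{k=k_1}^{k_2-1}\binom{2k+1}{k+1}(p(1-p))^{k+1}$, both strictly positive on $(0,1/2)$. Using the telescoping derivative $\frac{d}{dp}\Pr(\Bin(2k+1,p)\ge k+1)=(2k+1)\binom{2k}{k}(p(1-p))^k$ together with $(k+1)\binom{2k+1}{k+1}=(2k+1)\binom{2k}{k}$, one gets the two key relations
\begin{equation*}
Q'(p)=(1-2p)P'(p)\qquad\text{and hence}\qquad G'(p)=4P(p)+(2p-1)P'(p).
\end{equation*}
Near $0$ the lower-order $P'$ term dominates, so $G'<0$ there, consistent with the dip. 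It then suffices to prove that at every zero $p^*\in(0,1/2)$ of $G$ one has $G'(p^*)>0$: combined with $G'<0$ near $0$, this rules out a second zero, since a return of $G$ to zero from above would require a zero of $G$ with $G'\le 0$.

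The main obstacle is exactly this positivity at zeros. At a zero we have $2(1-2p^*)P(p^*)=Q(p^*)$, so substituting $(1-2p^*)=Q(p^*)/(2P(p^*))$ into $G'(p^*)=4P(p^*)-(1-2p^*)P'(p^*)$ reduces the desired inequality to
\begin{equation*}
8\,P(p^*)^2>Q(p^*)\,P'(p^*).
\end{equation*}
This cannot hold for all $p$ (indeed $G'<0$ near $0$), so the location of the zero must be exploited: the zero relation confines $p^*$ to the region where $1-2p^*$ is small, and there the bound holds with ample room (asymptotically the two sides are of order $2$ versus $1/\pi$). I expect the cleanest route is to establish, for each $k$ and all $p$ above a threshold lying below every zero of $G$, the per-term estimate $4\Pr(W_{2k+1}>0)\ge(1-2p)(2k+1)\binom{2k}{k}(p(1-p))^k$ (equivalently, the log-concavity of $P$, which would hand us the unimodality of $G$ directly), and then sum; controlling that threshold against the zero and handling the Stirling-type estimates for the central binomial coefficients is where the real work lies. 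Once the uniqueness of $p_0$ is in place, the sign statement ``$A(2k_1+1,p,1-p)>A(2k_2+1,p,1-p)$ iff $0<p<p_0$'' follows immediately from the boundary behaviour established above.
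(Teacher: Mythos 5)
Your setup is sound, and everything you actually derive is correct: the probabilistic reading $A(2k+1,p,1-p)=\E\left[\max\{0,W_{2k+1}\}\right]$, the two-step increment identity $\Delta_k(p)=2(2p-1)\Pr(W_{2k+1}>0)+\binom{2k+1}{k+1}\left(p(1-p)\right)^{k+1}$ (I checked it by conditioning on $W_{2k+1}\in\{\ge 3,\,1,\,-1,\,\le -3\}$), the relations $Q'=(1-2p)P'$ and $G'=4P+(2p-1)P'$, the boundary behaviour at $0$ and $1/2$, and the logical reduction of uniqueness to the claim that $G'(p^*)>0$ at every zero $p^*$ of $G$. The gap is that this last claim --- equivalently $8P(p^*)^2>Q(p^*)P'(p^*)$ --- is precisely the hard content of the lemma, and you do not prove it; you only sketch a route (a per-term inequality $4\Pr(W_{2k+1}>0)\ge(1-2p)(2k+1)\binom{2k}{k}(p(1-p))^{k}$ valid above some threshold, plus the assertion that every zero of $G$ lies above that threshold) and explicitly defer both parts as ``where the real work lies''. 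This deferral is not a routine verification: the per-term inequality changes sign at a point $\tau_k$ with $1-2\tau_k=\Theta(1/\sqrt{k})$, while for neighbouring indices (say $k_2=k_1+1$ with $k_1$ large) the zero $p_0$ of $G$ also satisfies $1-2p_0=\Theta(1/\sqrt{k_1})$ with a constant of the same order; so whether $p_0>\tau_k$ holds for all $k\in[k_1..k_2-1]$ is a fight between constants, and your asymptotic ``$2$ versus $1/\pi$'' estimate (computed near $p=1/2$, not at the zero) does not settle it. As it stands, your argument yields existence of an intersection and that $G$ can cross zero upward at most once, but not uniqueness of the intersection, which is the whole point.

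For comparison, the paper sidesteps this difficulty entirely by computing the \emph{second} derivative in closed form (Lemmas~\ref{pqtoq}, \ref{lemma:ddq}, \ref{d2A}): $\frac{\mathrm{d}^2}{\mathrm{d}p^2}A(2k+1,p,1-p)=c_k\left(p(1-p)\right)^{k-1}$ with $c_{k+1}/c_k=(4k+6)/k>4$. Since $p(1-p)$ is increasing on $(0,1/2)$, the second derivatives of two such functions cross exactly once, with a known ordering on either side; then, using $c_k\int_0^{1/2}(x-x^2)^{k-1}\mathrm{d}x=2k+1$ to fix the endpoint values, the single crossing is pushed through two integrations (checking signs at $p\to 0$ and $p\to 1/2$ each time) to the functions themselves. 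In your language, the beta-function identity replaces exactly the inequality you are missing: the curvature comparison makes the ``first down, then up'' structure of $G$ automatic rather than something to be fought for at the zeros. If you want to complete your route, you would need either a proof of log-concavity-type control on $P$ strong enough to dominate $Q P'$ at the zeros, or to import the closed form of $P'_k=(2k+1)\binom{2k}{k}(p(1-p))^k$ and essentially redo the paper's curvature argument --- at which point the probabilistic framing, while elegant, no longer saves work.
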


\begin{figure}[t]
\begin{center}
\includegraphics[width=0.7\linewidth]{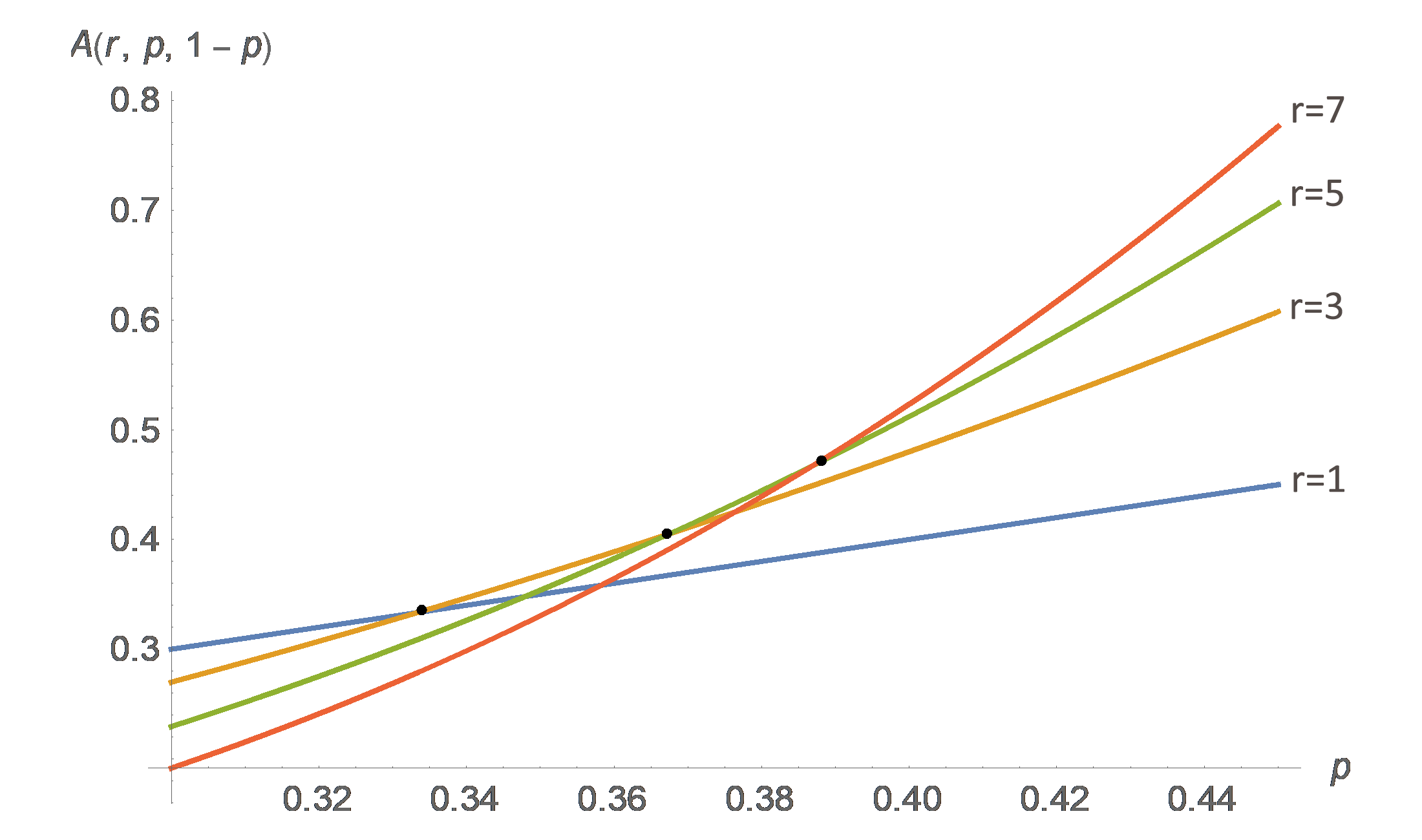}
\end{center}
\caption{The approximated drift $A(k,p,1-p)$ for $k=1,3,5,7$. By Lemma~\ref{lem:R13} the function $r \mapsto A(r,p,1-p)$ is maximized for $r=1$ whenever $p<1/3$.}
\label{fig:cutoff}
\end{figure}

The graph in Figure~\ref{fig:cutoff} illustrates the functions $p\mapsto A(k,p,1-p)$ for $k=1,3,5,7$.
The precise cut-off points will be computed numerically in Section~\ref{sec:numerics}. 

In order to prove Lemma~\ref{LEM:CUTOFF}, we first show the following combinatorial lemma.
\begin{lemma} \label{pqtoq}For all $k,r\in \mathbb{N}\cup\{0\}$ and $0\le q\le 1$ it holds that
	\begin{equation}\label{eq:helper111}
	\sum_{i=0}^{k}\binom{k+r+1}{i}(1-q)^{k-i}q^i=\sum_{i=0}^{k}\binom{i+r}{r}q^i.
	\end{equation}
\end{lemma}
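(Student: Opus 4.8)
The plan is to fix $r$ and induct on $k$, viewing both sides of~\eqref{eq:helper111} as polynomials in $q$; once they agree as polynomials they agree for every real $q$, in particular on $[0,1]$. Writing $p:=1-q$, so that $p+q=1$, I set
\[
L(k):=\sum_{i=0}^{k}\binom{k+r+1}{i}p^{k-i}q^{i}, \qquad R(k):=\sum_{i=0}^{k}\binom{i+r}{r}q^{i}.
\]
The base case $k=0$ is immediate, since $L(0)=\binom{r+1}{0}=1=\binom{r}{r}=R(0)$. The goal is then to show that $L$ and $R$ satisfy the same first-order recurrence, namely $F(k+1)-F(k)=\binom{k+r+1}{r}q^{k+1}$, which together with the common base case finishes the proof.

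For $R$ the recurrence is trivial, as the only new summand in passing from $R(k)$ to $R(k+1)$ is $\binom{(k+1)+r}{r}q^{k+1}=\binom{k+r+1}{r}q^{k+1}$. For $L$ I would apply Pascal's rule $\binom{k+r+2}{i}=\binom{k+r+1}{i}+\binom{k+r+1}{i-1}$ inside $L(k+1)$ and split the result into two sums. In the first sum one factors out $p$ and peels off the boundary term $i=k+1$ (whose power of $p$ would otherwise be negative), which leaves $pL(k)+\binom{k+r+1}{k+1}q^{k+1}$. In the second sum the index shift $j=i-1$ (the $j=-1$ term vanishing) factors out $q$ and yields $qL(k)$. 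Adding the two contributions and using $p+q=1$ gives $L(k+1)=L(k)+\binom{k+r+1}{k+1}q^{k+1}$, and the symmetry $\binom{k+r+1}{k+1}=\binom{k+r+1}{r}$ matches exactly the recurrence established for $R$.

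The only real obstacle is the bookkeeping of the two boundary terms---the $i=k+1$ term in the first sum and the $i=-1$ term in the second---and checking that the powers of $p$ telescope correctly; everything else is routine. As a conceptual check on the identity (and an alternative proof route), I note that after multiplying both sides by $p^{r+1}$ the left-hand side becomes $\Pr(\Bin(k+r+1,q)\le k)$, while the right-hand side becomes the probability that a negative binomial variable counting failures before the $(r+1)$-st success (success probability $p$) is at most $k$. The equality of these two is the classical observation that the $(r+1)$-st success occurs within the first $k+r+1$ trials if and only if those trials contain at least $r+1$ successes. This probabilistic reading explains why the identity must hold, but the inductive argument above is the most self-contained way to write it down.
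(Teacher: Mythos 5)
Your proof is correct, but it takes a genuinely different inductive route from the paper's. The paper inducts along a diagonal in the \emph{two} parameters: it peels off the $i=k+1$ term, factors $(1-q)$ out of the remaining sum, applies the inductive hypothesis for the pair $(k,r+1)$, and then recombines the result with the boundary term to obtain the identity for $(k+1,r)$ --- so the claim for $(k,r)$ ultimately traces back to the base case $(0,k+r)$, and Pascal's rule enters implicitly when simplifying $(1-q)\sum_{i=0}^{k}\binom{i+r+1}{r+1}q^i$ via $\binom{i+r+1}{r+1}-\binom{i+r}{r+1}=\binom{i+r}{r}$. You instead fix $r$ and induct on $k$ alone, showing that both sides satisfy the same first-order recurrence $F(k+1)-F(k)=\binom{k+r+1}{r}q^{k+1}$: trivially for the right-hand side, and for the left-hand side by applying Pascal's rule directly to the coefficients $\binom{k+r+2}{i}$, splitting into $pL(k)$, $qL(k)$ and one boundary term, and using $p+q=1$. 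Your single-parameter induction avoids the paper's two-parameter bookkeeping (and its closing remark needed to chain the pairs together), at the cost of tracking the two boundary terms explicitly; both arguments are about equally short. Your concluding probabilistic observation --- after multiplying by $(1-q)^{r+1}$, the two sides are the probability that a binomial variable with $k+r+1$ trials has at most $k$ successes and the probability that the $(r+1)$-st success in a Bernoulli sequence occurs within the first $k+r+1$ trials, which describe the same event --- is in fact a complete alternative proof, not merely a sanity check, and is arguably the most illuminating explanation of why the identity holds; the paper offers nothing analogous.
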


\begin{proof}
	We prove the equation by induction. It is obvious that equation~\eqref{eq:helper111} holds for all $r\ge 0$ and $k=0$. Assume that it holds for some pair of integers $(k,r+1)$, then the following computation shows that it also holds for $(k+1,r)$. 
	\begin{eqnarray*}
		&& \sum_{i=0}^{k+1}\binom{k+1+r+1}{i}(1-q)^{k-i+1}q^i\\
		&=& \binom{k+r+2}{k+1}q^{k+1}+(1-q)\sum_{i=0}^{k}\binom{k+(r+1)+1}{i}(1-q)^{k-i}q^i\\
		&=& \binom{k+r+2}{r+1}q^{k+1}+(1-q)\sum_{i=0}^{k}\binom{i+r+1}{r+1}q^i\\
		&=& \sum_{i=0}^{k+1}\binom{i+r}{r}q^i.
	\end{eqnarray*}
	For arbitrary combinations of $k$ and $r$, we thus get the desired correctness of~\eqref{eq:helper111} for the pair $(k,r)$ inductively from that of the pair $(0,r+k)$.
  \end{proof}

We use Lemma~\ref{pqtoq} to compute the second derivative of $A(r,p,q)$ for $q$ in Lemma~\ref{lemma:ddq} and then obtain the second derivative of $A(r,p,q)$ for $p$ in Lemma~\ref{d2A}. We first notice that $A(1,p,1-p)=p$ and $\mathrm{d}A(1,p,1-p)/\mathrm{d}p=1$. Thus we only look at the second derivative for $r>1$.

\begin{lemma} \label{lemma:ddq}
For all $k\in\mathbb{N}$ 
and all $0<p\le 1/2$, it holds that
\begin{eqnarray}\label{ddq}
\frac{\mathrm{d}^2A(2k+1,p,1-p)}{(\mathrm{d}(1-p))^2}=c_k p^{k-1}(1-p)^{k-1},
\end{eqnarray}
	where $c_k$ is a constant related to $k$ via
	\begin{eqnarray*}
		c_k:=2(2k-1)(2k+1)\binom{2k-2}{k-1}=\frac{4k+2}{\beta(k,k)},
	\end{eqnarray*}
	and $\beta(x,y):=\int_{0}^1 t^{x-1}(1-t)^{y-1}dt=\frac{\Gamma(x)\Gamma(y)}{\Gamma(x+y)}$ is the well-known beta function. 
\end{lemma}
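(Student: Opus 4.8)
The plan is to reduce the statement to a single-variable computation and then differentiate a binomial expectation. First I would note that $A(2k+1,p,1-p)$ is a function of $p$ alone, so with $u=1-p$ the chain rule gives $\frac{\mathrm{d}}{\mathrm{d}u}=-\frac{\mathrm{d}}{\mathrm{d}p}$ and hence $\frac{\mathrm{d}^2}{(\mathrm{d}(1-p))^2}A(2k+1,p,1-p)=\frac{\mathrm{d}^2}{\mathrm{d}p^2}A(2k+1,p,1-p)$; this disposes of the slightly awkward $1-p$ in the statement. Next I would observe that, writing $\phi(i):=\max\{2i-(2k+1),0\}$ (so $\phi(i)=0$ for $i\le k$), the definition~\eqref{defineA} extends harmlessly to $A(2k+1,p,1-p)=\sum_{i=0}^{2k+1}\binom{2k+1}{i}\phi(i)\,p^i(1-p)^{2k+1-i}=\E[\phi(Z)]$ for $Z\sim\Bin(2k+1,p)$.

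The key tool I would establish is a \emph{binomial derivative identity}: for any $\psi$ and $G_r(p):=\sum_{i=0}^{r}\binom{r}{i}\psi(i)p^i(1-p)^{r-i}$ one has $G_r'(p)=r\sum_{i=0}^{r-1}\binom{r-1}{i}\big(\psi(i+1)-\psi(i)\big)p^i(1-p)^{r-1-i}$. This follows by differentiating each term via $\frac{\mathrm{d}}{\mathrm{d}p}[p^i(1-p)^{r-i}]=ip^{i-1}(1-p)^{r-i}-(r-i)p^i(1-p)^{r-i-1}$, invoking the identities $\binom{r}{i}i=r\binom{r-1}{i-1}$ and $\binom{r}{i}(r-i)=r\binom{r-1}{i}$ already used to derive~\eqref{A(2k+1)}, and a single index shift. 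In words: differentiating in $p$ replaces $\psi$ by its forward difference $\Delta\psi(i)=\psi(i+1)-\psi(i)$ and lowers $r$ by one.

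Applying this identity twice to $G_{2k+1}=A(2k+1,\cdot,1-\cdot)$ yields $\frac{\mathrm{d}^2}{\mathrm{d}p^2}A(2k+1,p,1-p)=(2k+1)(2k)\sum_{i=0}^{2k-1}\binom{2k-1}{i}\Delta^2\phi(i)\,p^i(1-p)^{2k-1-i}$. I would then read off $\Delta^2\phi$ from the piecewise-linear shape of $\phi$: the first difference is $\Delta\phi(i)=2\cdot\mathds{1}_{i\ge k+1}+\mathds{1}_{i=k}$, so the second difference collapses to $\Delta^2\phi(i)=\mathds{1}_{i=k}+\mathds{1}_{i=k-1}$ and vanishes elsewhere. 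Keeping only the two surviving terms and using $\binom{2k-1}{k}=\binom{2k-1}{k-1}$, I would factor
\begin{equation*}
(2k+1)(2k)\binom{2k-1}{k-1}\Big(p^{k}(1-p)^{k-1}+p^{k-1}(1-p)^{k}\Big)=(2k+1)(2k)\binom{2k-1}{k-1}\,p^{k-1}(1-p)^{k-1},
\end{equation*}
since $p+(1-p)=1$. Finally I would verify that $c_k:=(2k+1)(2k)\binom{2k-1}{k-1}$ matches both stated forms: from $\binom{2k-1}{k-1}=\frac{2k-1}{k}\binom{2k-2}{k-1}$ one gets $c_k=2(2k-1)(2k+1)\binom{2k-2}{k-1}$, and since $(2k-1)\binom{2k-2}{k-1}=\frac{(2k-1)!}{((k-1)!)^2}=1/\beta(k,k)$ this equals $\frac{4k+2}{\beta(k,k)}$.

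I expect the only delicate point to be the boundary bookkeeping in computing $\Delta^2\phi$ near $i=k-1,k$, since the kink of $\phi$ at $i=k$ is precisely where the two nonzero contributions arise; everything else is routine. One could alternatively avoid the probabilistic language and reach the same two-term sum by differentiating the polynomial in~\eqref{A(2k+1)} directly and collapsing the resulting telescoping sum with Lemma~\ref{pqtoq}, but I find the forward-difference viewpoint makes the massive cancellation (a degree-$(2k-1)$ derivative collapsing to a single monomial) completely transparent.
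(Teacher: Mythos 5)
Your proof is correct, but it takes a genuinely different route from the paper's. The paper first rewrites $A(2k+1,p,q)$ via~\eqref{A(2k+1)} and the combinatorial identity of Lemma~\ref{pqtoq} as $(2k+1)p^{k+1}f_k(q)$, where $f_k$ is a degree-$(k-1)$ polynomial with coefficients $a_k^i=\binom{k+i-1}{i}-\binom{k+i-1}{i-1}$; it then applies the Leibniz rule for the second derivative and checks, coefficient by coefficient using recurrences for the $a_k^i$, that every coefficient of $p^2f_k''-2(k+1)pf_k'+(k+1)kf_k$ vanishes except the leading one, which it computes to be $2(2k-1)\binom{2k-2}{k-1}$. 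You instead read $A(2k+1,p,1-p)$ as the binomial expectation $\E(\phi(Z))$ with $Z\sim\Bin(2k+1,p)$ and $\phi(i)=\max\{2i-(2k+1),0\}$, and apply twice the standard fact that differentiating such an expectation in $p$ replaces the payoff by its forward difference and lowers the number of trials by one; since $\Delta^2\phi$ is supported on the two points $i=k-1$ and $i=k$, the sum collapses immediately to $c_k\,p^{k-1}(1-p)^{k-1}$. I verified the details: your dismissal of the $\mathrm{d}(1-p)$ versus $\mathrm{d}p$ issue (the sign squares away), the computation of $\Delta\phi$ and $\Delta^2\phi$, the two-term collapse using $\binom{2k-1}{k}=\binom{2k-1}{k-1}$ and $p+(1-p)=1$, and the reconciliation of $(2k+1)(2k)\binom{2k-1}{k-1}$ with both stated forms of $c_k$ are all correct (a spot check at $k=1$ gives $A(3,p,1-p)=3p^2$ and $c_1=6$, consistent with both arguments). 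Your route is shorter and makes the cancellation structural rather than computational — the single surviving monomial is precisely the second difference of the kink of $\phi$ at $i=k$ — and it bypasses Lemma~\ref{pqtoq} entirely, which the paper introduces only for this proof; conversely, the paper's calculation produces the explicit representation $A(2k+1,p,q)=(2k+1)p^{k+1}f_k(q)$, but since neither $f_k$ nor the $a_k^i$ are reused elsewhere, nothing downstream would be lost by adopting your argument.
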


\begin{proof}
	Set $q:=1-p$.
	We expand $A(2k+1,p,q)$ according to Equation~\eqref{A(2k+1)} and use Lemma~\ref{pqtoq} to obtain the following
	\begin{eqnarray*} 
	A(2k+1,p,q)
	&=&(2k+1)\sum_{i=k}^{2k}
	\left(\binom{2k}{i}-\binom{2k}{i+1}\right)p^{i+1}q^{2k-i}\\
	&=&(2k+1)\sum_{i=0}^{k}
	\left(\binom{2k}{k+i}-\binom{2k}{k+i+1}\right)p^{k+i+1}q^{k-i}\\
	&=&(2k+1)p^{k+1}\cdot \sum_{i=0}^{k}
	\left(\binom{2k}{i}-\binom{2k}{i-1}\right)p^{k-i}q^{i}\\
	&=&(2k+1)p^{k+1}\left[ \sum_{i=0}^{k}
	\binom{k+(k-1)+1}{i}p^{k-i}q^{i}
	+q\sum_{i=0}^{k-1}
	\binom{(k-1)+k+1}{i}p^{k-1-i}q^{i}\right]\\
	&=&(2k+1)p^{k+1}\left[ \sum_{i=0}^{k}
	\binom{i+(k-1)}{i}q^{i}+ q\sum_{i=0}^{k-1}
	\binom{i+k}{i}q^{i}\right]\\
	&=&(2k+1)p^{k+1}\cdot \sum_{i=0}^{k}
	\left(\binom{k+i-1}{i}-\binom{k+i-1}{i-1}\right)q^{i}.\\
	\end{eqnarray*}
	We extract the term $p^{k+1}$ in the above equation and define the polynomial $f_k(q):=\sum_{i=0}^{\infty}a_k^{i}q^{i}$ with coefficient $a_k^{i}=0$ when $i>k$ and
	\begin{eqnarray}\label{aki}
	a_k^{i}&:=&\binom{k+i-1}{i}-\binom{k+i-1}{i-1}=\binom{k+i-1}{i}\frac{k-i}{k} \text{ when } i\le k.
	\end{eqnarray} 
	Then $A(2k+1,p,q)=(2k+1)p^{k+1}f_k(q)$. 
	We use the general Leibniz rule 
	for the second derivative (informally, this rule states that $(fg)^{''}=f''g+2f'g'+fg''$) and obtain
	\begin{eqnarray*}
	\frac{\mathrm{d^2}A(2k+1,p,q)}{\mathrm{d^2}q}=
		(2k+1)p^{k-1}\cdot \left( p^2f''_k(q)-2(k+1)pf'_k(q)+(k+1)kf_k(q)\right).
	\end{eqnarray*}
	It remains to prove that
	\begin{eqnarray}\label{deri}
	p^2f''_k(q)-2(k+1)pf'_k(q)+(k+1)kf_k(q)=2(2k-1)\binom{2k-2}{k-1}q^{k-1}.
	\end{eqnarray}
	We look at the coefficient of $q^i$ in the left part of equation~\eqref{deri} and we denote it by $c_k^i$.
	By expanding $f_k(q)$ into a polynomial and replacing $p$ by $1-q$, we see that $c_k^i$ equals the coefficient of $q^i$ in the following expression 
	\begin{eqnarray*}
		(1-q)^2(a_k^iq^i+a_k^{i+1}q^{i+1}+a_k^{i+2}q^{i+2})''
		-2(k+1)(1-q)(a_k^iq^i+a_k^{i+1}q^{i+1})'
		+(k+1)k(a_k^iq^i).
	\end{eqnarray*}
	This shows that $c_k^i$ is equal to 
	\begin{eqnarray*}
	a_k^i\left((k+1)k+2(k+1)i+i(i-1)\right)
	+ a_k^{i+1}\left(-2(k+1)(i+1)-2(i+1)i\right)
	+ a_k^{i+2}\left((i+2)(i+1)\right).
	\end{eqnarray*}
	According to \eqref{aki} the coefficient $a_k^i$ satisfies
	\begin{eqnarray}
		a_k^{i+1}&=& a_k^{i}+a_{k-1}^{i+1}, \text{ and} \label{39}\\
		a_{k-1}^{i+1}&=& a_k^{i}\cdot \frac{k}{i+1}\cdot \frac{k-i-2}{k-i}, \text{ for } k>i. \label{40}
	\end{eqnarray}
	We use Equation \eqref{39} to rewrite the expression of $c_k^{i}$ for $i\le k-2$, and then use Equation \eqref{40} to simplify the equation in the following way
	\begin{eqnarray*}
		c_k^i&=& a_k^i\left(k(k-1)\right)+
			 a_{k-1}^{i+1}\left(-2(k-1)(i+1)\right)+ a_{k-2}^{i+2}\left((i+2)(i+1)\right)\\
			 &=& a_k^i k(k-1)-2a_k^ik(k-1)\frac{k-i-2}{k-i}
			 + a_{k}^{i}k(k-1)\frac{k-i-4}{k-i}\\
			 &=& 0.
	\end{eqnarray*}
	Noticing that $a_k^k=0$ shows that $f_k(q)$ has a degree of $k-1$. This implies that the
	term $q^{k-1}$ has the highest degree in \eqref{deri}. Its coefficient is
	\begin{eqnarray*}
		c_k^{k-1}&=& a_k^{k-1}\left((k+1)k+2(k+1)(k-1)+(k-1)(k-2)\right)\\
			&=&2(2k-1)\binom{2k-2}{k-1}.
	\end{eqnarray*}
	This proves the claimed equality in~\eqref{ddq}.
  \end{proof}

\begin{lemma}\label{d2A}
	For all $k\in\mathbb{N}$ and all $0<p\le 1/2$, it holds that
	\begin{eqnarray*}
		\frac{\mathrm{d}^2A(2k+1,p,1-p)}{\mathrm{d}p^2}&=&c_k p^{k-1}(1-p)^{k-1},\\
		\frac{\mathrm{d}A(2k+1,p,1-p)}{\mathrm{d}p}&=&c_k
		\int_0^p x^{k-1}(1-x)^{k-1}\mathrm{d}x \text{ and } \\
		 A(2k+1,p,1-p)&=&c_k\int_{0}^{p}\int_0^y x^{k-1}(1-x)^{k-1}\mathrm{d}x \mathrm{d}y.
	\end{eqnarray*}
	Furthermore for all $k\in\N_0$ we can write  
	\[
	A(2k+1,p,1-p)=\int_0^p \frac{\mathrm{d}A(2k+1,x,1-x)}{\mathrm{d} p}\mathrm{d}x.
	\]	
\end{lemma}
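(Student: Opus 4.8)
The plan is to set $F(p) := A(2k+1,p,1-p)$, regard it as a function of the single variable $p \in (0,1/2]$, read off its two derivatives from the second-derivative formula already established in Lemma~\ref{lemma:ddq}, and then recover the lower-order statements by integration.

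First I would establish the second-derivative identity. Lemma~\ref{lemma:ddq} computes the second derivative of $A(2k+1,p,1-p)$ with respect to the variable $q=1-p$; writing $G(q) := A(2k+1,1-q,q)$, it states $G''(q) = c_k (1-q)^{k-1} q^{k-1}$ (substituting $p=1-q$ into $c_k p^{k-1}(1-p)^{k-1}$). Since $F(p) = G(1-p)$, the chain rule gives $F'(p) = -G'(1-p)$ and $F''(p) = G''(1-p)$, so the two sign changes cancel. Evaluating $G''$ at $q = 1-p$ then yields
\[
F''(p) = c_k \bigl(1-(1-p)\bigr)^{k-1}(1-p)^{k-1} = c_k p^{k-1}(1-p)^{k-1},
\]
which is the first claim. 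Note that the right-hand side is invariant under $p \leftrightarrow 1-p$, which is precisely why the $q$-formula and the $p$-formula coincide verbatim.

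Next I would integrate. From the expansion $A(2k+1,p,q) = (2k+1)p^{k+1} f_k(q)$ obtained inside the proof of Lemma~\ref{lemma:ddq}, we have $F(p) = (2k+1)p^{k+1} f_k(1-p)$, a polynomial in $p$ whose lowest-order term is $p^{k+1}$. Hence for $k \ge 1$ both $F(0) = 0$ and $F'(0) = 0$ (in $F'(p)$ each surviving summand carries a factor $p^k$). Integrating the second-derivative identity once, using $F'(0) = 0$, gives $F'(p) = c_k \int_0^p x^{k-1}(1-x)^{k-1}\,\mathrm{d}x$, the second claim; integrating once more, using $F(0) = 0$, gives the iterated-integral expression for $F(p)$, the third claim. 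Because $F$ is a polynomial it is $C^\infty$ on $[0,1/2]$, so every application of the fundamental theorem of calculus here is legitimate.

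Finally, the displayed identity $A(2k+1,p,1-p) = \int_0^p \frac{\mathrm{d}A(2k+1,x,1-x)}{\mathrm{d}p}\,\mathrm{d}x$ is just $F(p) = F(0) + \int_0^p F'(x)\,\mathrm{d}x$ combined with $F(0)=0$, and this version holds for all $k \in \N_0$: for $k=0$ we have $A(1,p,1-p)=p$ with derivative $1$, whence $\int_0^p 1\,\mathrm{d}x = p$ as required, while for $k \ge 1$ it follows from the vanishing boundary value established above. The only genuinely delicate point is the chain-rule bookkeeping --- recognising that differentiating twice in $q=1-p$ produces exactly the same formula as differentiating twice in $p$ (the two minus signs cancel and the kernel is $p\leftrightarrow 1-p$ symmetric) --- together with checking the boundary data $F(0)=F'(0)=0$ at $p=0$; both become routine once the polynomial form $F(p)=(2k+1)p^{k+1}f_k(1-p)$ is in hand.
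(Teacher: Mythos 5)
Your proposal is correct and follows essentially the same route as the paper: both obtain the second-derivative identity in $p$ from Lemma~\ref{lemma:ddq} (the paper leaves the chain-rule/symmetry step implicit where you spell it out), integrate twice, and fix the integration data by the behavior at $p=0$ --- which you justify via the polynomial form $(2k+1)p^{k+1}f_k(1-p)$, the very same fact the paper invokes as $\lim_{p\to 0}A(2k+1,p,q)=o(p)$ for $k\ge 1$ to conclude $C_1=C_2=0$. The separate treatment of the $k=0$ case of the final identity, using $A(1,p,1-p)=p$, is identical in both proofs.
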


\begin{proof}
	The first equality can be easily obtained from the equality in~\eqref{ddq}. 
	Consequently, 
	\begin{gather*}
	\frac{\mathrm{d}A(2k+1,p,q)}{\mathrm{d}p}=c_k
		\int_0^p x^{k-1}(1-x)^{k-1}\mathrm{d}x +C_1\text{ with } C_1\in\R,\\
	A(2k+1,p,q)=c_k\int_{0}^{p}\int_0^y x^{k-1}(1-x)^{k-1}\mathrm{d}x \mathrm{d}y +C_1p+C_2 \text{ with } C_2\in\R.
	\end{gather*}
	Using the fact that $\lim_{p\to 0} A(2k+1,p,q)=o(p)$ for $k\ge 1$, we obtain $C_1=C_2=0$ as claimed.
	
	For the last statement, we only need to consider the case $k=0$. 
	Recalling that $A(1,p,1-p)=p$ and $\mathrm{d}A(1,p,1-p)/\mathrm{d}p=1$ shows that the equality also applies to this case.
  \end{proof}	

We next prove Lemma~\ref{LEM:CUTOFF}. 
 
\begin{proof}[Proof of Lemma~\ref{LEM:CUTOFF}]
	Using the notation from Lemma~\ref{lemma:ddq}, we first notice that for all $k>0$ we have 
	\begin{equation}\label{c_k}
	\frac{c_{k+1}}{c_k}
	= \frac{2(2k+1)(2k+3)\binom{2k}{k}}{2(2k-1)(2k+1)\binom{2k-2}{k-1}}
	=\frac{4k+6}{k}
	>4.
	\end{equation}
	Let $0<k_1<k_2$. By the above, we have $4<(4+6/k_2)^{k_2-k_1}<c_{k_2}/c_{k_1}<(4+6/k_1)^{k_2-k_1}$. Notice that $c_{k_1} (pq)^{k_1-1} - c_{k_2} (pq)^{k_2-1} =(pq)^{k_1-1}(c_{k_1}-c_{k_2}(pq)^{k_2-k_1})$.
	We now use the fact that $\lim_{p\to 0}pq=0 $ and $\lim_{p\to 1/2}pq=1/4$ to obtain that for all $k_2>k_1>0$,
	\begin{eqnarray*}
		&\lim_{p\to 0}(c_{k_1}-c_{k_2}(pq)^{k_2-k_1})>0 \text {  while  } \lim_{p\to 1/2} \left( c_{k_1}-c_{k_2}(pq)^{k_2-k_1}\right)<0.
	\end{eqnarray*}
	
	 This shows that the function $p \mapsto c_{k_1} (pq)^{k_1-1}$ intersects with $p \mapsto c_{k_2} (pq)^{k_2-1}$ in at most one point $p_I\in(0,1/2)$.
	 Moreover, we have that $c_{k_1} (pq)^{k_1-1} \ge c_{k_2} (pq)^{k_2-1}$ if and only if $p\in[0,p_I]$.
	Therefore, when $p>0$, the function $p \mapsto \int_0^p c_{k_1}(x-x^2)^{k_1-1}\mathrm{d}x$ intersects with the function $p \mapsto \int_0^p c_{k_2}(x-x^2)^{k_2-1}\mathrm{d}x$ at most once for $p\in(0,1/2)$. 
	We now prove that the intersection exists.  
	
	Notice that for all $k>1$ we have
	\begin{eqnarray*}
	 c_{k}\int_0^{0.5}(x-x^2)^{k-1}\mathrm{d}x=\frac{c_{k}}{2}\int_0^{1} (x-x^2)^{k-1}\mathrm{d}x=\frac{c_{k}}{2}\beta(k,k)=2k+1,
	\end{eqnarray*}
	and thus 
	\begin{eqnarray*}
		\lim_{p\to 1/2} \left(\int_0^p c_{k_1}(x-x^2)^{k_1-1}\mathrm{d}x - \int_0^p c_{k_2}(x-x^2)^{k_2-1}\mathrm{d}x \right)=2(k_1-k_2)<0,
	\end{eqnarray*}
	while 
	\begin{eqnarray*}
		\int_0^p c_{k_1}(x-x^2)^{k_1-1}\mathrm{d}x > \int_0^p c_{k_2}(x-x^2)^{k_2-1}\mathrm{d}x \text{ for all $p\in(0,p_I)$}.
	\end{eqnarray*}
	There exists a intersection point $p_{II}\in(0,1/2)$ such that 
	 \begin{eqnarray*}
	 	\int_0^p c_{k_1}(x-x^2)^{k_1-1}\mathrm{d}x \ge \int_0^p c_{k_2}(x-x^2)^{k_2-1}\mathrm{d}x \text{ if and only if } p\in[0,p_{II}].
	 \end{eqnarray*}
	This shows that for all $k_2>k_1> 0$ there exists a point $p_{II}\in(0,1/2)$ such that 
    \begin{eqnarray}\label{deri2}
    	\frac{\mathrm{d}A(2k_1+1,p,q)}{\mathrm{d}p} \ge \frac{\mathrm{d}A(2k_2+1,p,q)}{\mathrm{d}p} \text{ if and only if }  p\in[0,p_{II}].
    \end{eqnarray}
    To extend the conclusion to $k_1=0$, let $k_2>k_1=0$. We have $\lim_{p\to 1/2} \int_0^p c_{k_2}(x-x^2)^{k_2-1}dx=2k_2+1$ and $\lim_{p\to 0} \int_0^p c_{k_2}(x-x^2)^{k_2-1}dx=0$, while $\mathrm{d}A(1,p,q)/\mathrm{d}p=1$. Therefore the intersection point $p_{II}$ still exists and there is a unique such point.
    
	As a result we see that the function $\int_0^p \mathrm{d}A(2k_1+1,x,1-x)$ intersects with the function $\int_0^p \mathrm{d}A(2k_2+1,x,1-x)$ at most once for $p\in(0,1/2)$ and
	\begin{eqnarray*}
		&&\int_0^p \mathrm{d}A(2k_1+1,x,q) > \int_0^p \mathrm{d}A(2k_2+1,x,q) \text{ for all $p\in(0,p_{II})$ while}\\
		&&\lim_{p\to 1/2} \left(A(2k_1+1,p,q) - A(2k_2+1,p,q)\right)<0.
	\end{eqnarray*}
	This shows that $A(2k_1+1,p,q)$ intersects with $A(2k_2+1,p,q)$ exactly once at some value $p_{III}<1/2$.   
    
  \end{proof}

We are now ready to prove the monotonicity of $\Rappe$. 
\begin{proof}[Proof of the first part of Theorem~\ref{THM:MONOTONICITY}]
Let $\eps>0$, let $p_0 \in (0,1)$, and set $q_0:=1-p_0$.
	By Lemma~\ref{A2k} it holds that $A(2k,p_0,q_0)<A(2k+1,p_0,q_0)$. This shows that $\Rappe(p_0)$ is odd. Let $k \in \N \cup \{0\}$ such that $\Rappe(p_0)=2k+1$. By definition of $\Rappe$ (cf.\ equation~\eqref{def_Rapp}), $k$ is the smallest integer obtaining a drift of $A(2k+1,p_0,q_0)$. For all integers $k'<k$ we thus obtain
	\begin{equation}
	 A(2k+1,p_0,q_0)>A(2k'+1,p_0,q_0).
	\end{equation}
	By Lemma~\ref{cmp} we also get that for all $p>p_0$ it holds that 
	\begin{equation}
	A(2k+1,p,q)>A(2k'+1,p,q).
	\end{equation}
	Therefore $\Rappe(p)\ge 2k+1$ for all $p>p_0$. Since the statement holds for all $p_0\in(0,1/2-\eps]$ we obtain the monotonicity of $\Rappe$.
  \end{proof}

\subsubsection{\texorpdfstring{$\Rappe(p)=1$}{R=1} when \texorpdfstring{$0<p<1/3$}{0<p<1/3} and \texorpdfstring{$R_{\opt}(d,n)=1$}{R(d,n)=1} when \texorpdfstring{$0<d=o(n)$}{0<d=o(n)} }

	We first show that flipping one bit is optimal for the approximated drift when the distance to the optimal solution is less than $n/3$.
	
	\begin{lemma}\label{lem:R13}
		For all $\eps>0$ and all $0<p<1/3$ it holds that $\Rappe(p)=1$. 
	\end{lemma}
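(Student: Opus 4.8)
The plan is to show that for $0<p<1/3$ the approximated drift $A(r,p,1-p)$ is maximized, uniquely, at $r=1$, so that by definition $\Rappe(p)=1$. First I would discard the even values of $r$: Lemma~\ref{lem:odd} gives $A(2k,p,1-p)=\tfrac{2k}{2k+1}A(2k+1,p,1-p)<A(2k+1,p,1-p)$, so it suffices to compare $r=1$ against the odd values $r=2k+1$, $k\ge1$. A direct expansion of~\eqref{defineA} yields $A(1,p,1-p)=p$ and $A(3,p,1-p)=3p^2$, whence $A(1,p,1-p)>A(3,p,1-p)$ exactly for $0<p<1/3$, the two curves meeting precisely at $p=1/3$. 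By Lemma~\ref{cmp}, $A(1,\cdot)$ and $A(2k+1,\cdot)$ cross exactly once in $(0,1/2]$, with $A(1,\cdot)$ larger below the crossing. Hence the whole statement reduces, via that ``iff'' characterization, to proving that every such crossing point is at least $1/3$, i.e.\ to the single-point inequality $A(2k+1,\tfrac13,\tfrac23)\le\tfrac13=A(1,\tfrac13,\tfrac23)$ for all $k\ge1$.

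The crux is this single-point inequality, which I would establish by showing that the odd-indexed sequence $k\mapsto A(2k+1,\tfrac13,\tfrac23)$ is non-increasing, starting from the equality $A(1,\tfrac13,\tfrac23)=A(3,\tfrac13,\tfrac23)=\tfrac13$. The convenient viewpoint is the random-walk reading of $A$: writing $M_r:=2B_r-r$ with $B_r\sim\Bin(r,1/3)$, we have $A(r,\tfrac13,\tfrac23)=\E(\max\{0,M_r\})$, where $M_r$ is a sum of $r$ i.i.d.\ $\pm1$ steps of drift $-1/3$. Conditioning on $M_{2k+1}=m$ (necessarily odd) and on the two extra steps $\eta:=\xi_{2k+2}+\xi_{2k+3}\in\{-2,0,2\}$ with probabilities $\tfrac49,\tfrac49,\tfrac19$, a short case analysis of $\Delta(m):=\E(\max\{0,m+\eta\})-\max\{0,m\}$ gives $\Delta(m)=-\tfrac23$ for $m\ge3$, $\Delta(1)=-\tfrac29$, $\Delta(-1)=\tfrac19$, and $\Delta(m)=0$ for $m\le-3$. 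Using the elementary identity $\Pr(M_{2k+1}=-1)=2\Pr(M_{2k+1}=1)$ (the two point masses differ by exactly the factor $\tfrac{1/3}{2/3}$), the contributions at $m=\pm1$ cancel and
\begin{equation*}
A(2k+3,\tfrac13,\tfrac23)-A(2k+1,\tfrac13,\tfrac23)=\E(\Delta(M_{2k+1}))=-\tfrac23\Pr(M_{2k+1}\ge3)\le0,
\end{equation*}
which is the claimed monotonicity (strict for $k\ge1$, and an equality for $k=0$).

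To conclude, I combine the two parts. For every $k\ge1$ we obtain $A(2k+1,\tfrac13,\tfrac23)\le\tfrac13$, so by Lemma~\ref{cmp} the crossing of $A(1,\cdot)$ and $A(2k+1,\cdot)$ lies at some $p_k\ge1/3$; therefore $A(1,p,1-p)>A(2k+1,p,1-p)$ for all $0<p<1/3$ and all $k\ge1$. Together with the domination of even $r$, this makes $r=1$ the unique maximizer of $r\mapsto A(r,p,1-p)$, so $\Rappe(p)=1$ whenever $p\le 1/2-\eps$. For the remaining range $1/2-\eps<p<1/3$ (nonempty only when $\eps>1/6$), the definition~\eqref{def_Rapp} sets $\Rappe(p)=\Rapp(1/2-\eps)$, and since $1/2-\eps<1/3$ the first case already forces this value to be $1$, so the claim holds for every $\eps>0$.

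I expect the only real obstacle to be the single-point inequality $A(2k+1,\tfrac13,\tfrac23)\le\tfrac13$. It is worth noting why the naive route fails: one might hope to show $\tfrac{\mathrm d}{\mathrm dp}A(2k+1,p,1-p)\le 1=\tfrac{\mathrm d}{\mathrm dp}A(1,p,1-p)$ on $(0,1/3)$ and integrate from $0$, but already $A(3,p,1-p)=3p^2$ overtakes $A(1,p,1-p)=p$ at $p=1/3$ with the \emph{steeper} slope $6p$. Thus a slope bound cannot work, and one genuinely needs the integrated (equivalently probabilistic) monotonicity argument above.
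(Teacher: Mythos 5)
Your proof is correct, and it takes a genuinely different route from the paper's at the decisive step. Both arguments dispose of even $r$ via Lemma~\ref{lem:odd} and reduce the claim to a comparison at the single point $p=1/3$; you do this by invoking Lemma~\ref{cmp} directly, while the paper invokes the monotonicity of $\Rappe$ (Theorem~\ref{thm:monotonicity}, itself proved via Lemma~\ref{cmp}), so this part is essentially equivalent. The real divergence is in how the inequality $A(2k+1,\tfrac13,\tfrac23)\le\tfrac13$ for all $k\ge1$ is established. The paper first caps the range of candidates: using the second-derivative formula of Lemma~\ref{d2A} and the ratio $c_{k+1}/c_k=(4k+6)/k$, it shows the drift at $p=1/3$ decreases once $k>12$, so $\Rappe(1/3)\le 25$, and then evaluates the remaining thirteen values $A(r,1/3,2/3)$, $r=1,3,\dots,25$, \emph{numerically}, finding the maximum $1/3$ attained only by $r=1$ and $r=3$. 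You instead prove the exact telescoping identity $A(2k+3,\tfrac13,\tfrac23)-A(2k+1,\tfrac13,\tfrac23)=-\tfrac23\Pr(M_{2k+1}\ge3)$ by reading $A(r,\tfrac13,\tfrac23)$ as $\E(\max\{0,M_r\})$ for a $\pm1$ walk with drift $-1/3$, conditioning on two further steps and using $\Pr(M_{2k+1}=-1)=2\Pr(M_{2k+1}=1)$ to cancel the boundary terms; I verified the case analysis of $\Delta(m)$ and the cancellation, and the base case $k=0$ correctly reproduces the tie $A(3,\tfrac13,\tfrac23)=A(1,\tfrac13,\tfrac23)=\tfrac13$ (consistent with the paper's numerical finding that $r=1$ and $r=3$ tie at $p=1/3$). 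What each approach buys: the paper's is short given the machinery of Lemma~\ref{d2A} already in place, but delegates part of the verification to numerics; yours is fully analytic and self-contained, needs no computation, and yields the stronger structural fact that $k\mapsto A(2k+1,\tfrac13,\tfrac23)$ is non-increasing with an exact formula for the decrement. A further small plus on your side: you treat the regime $1/2-\eps<p<1/3$ (nonempty only when $\eps>1/6$) explicitly through the definition~\eqref{def_Rapp}, whereas the paper's reduction to showing $\Rappe(1/3)=1$ tacitly assumes $1/3\le 1/2-\eps$.
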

	
	\begin{proof} 
		Let $\eps>0$. Due to the monotonicity of $\Rappe$, it suffices to show that $\Rappe(1/3)=1$. By Lemma~\ref{A2k} we only need to consider odd values of~$r$. According to Lemma~\ref{d2A} if the second derivative $c_kx^{k-1}(1-x)^{k-1}>c_{k+1}x^k(1-x)^k$ for all $x\in(0,1/3)$ then $A(2k-1,1/3,2/3)>A(2k+1,1/3,2/3)$ and thus $\Rappe \neq 2k+1$. We notice that
		\[
		\frac{c_{k+1}~ x^k (1-x)^k}{c_{k}~ x^{k-1} (1-x)^{k-1}} = \frac{4k+6}{k}x(1-x)\le \frac{4k+6}{k}\cdot\frac{2}{9} \text{ for all } 0<x<\frac{1}{3}.
		\]
		For all $k>12$ it holds that $(4k+6)/k<9/2$, which implies that $c_kx^{k-1}(1-x)^{k-1}>c_{k+1}x^k(1-x)^k$. We therefore obtain that $\Rappe(1/3)\le 25$. For the remaining values, i.e., for $r=1,3,\ldots,25$, we can compute $A(r,1/3,2/3)$ numerically. This numerical evaluation shows that the maximum value $1/3$ is obtained (only) by $r=1$ and $r=3$. This proves $\Rappe(1/3)=1$.
	  \end{proof}
	
	We show in Lemma~\ref{lem:R_dis=1} that flipping one bit is also optimal for the exact fitness drift when the distance is a lower-order term of $n$.
	
	\begin{lemma}\label{lem:R_dis=1}
	For all $0<d=o(n)$ it holds that $R_{\opt}(d,n)=1$. 
	\end{lemma}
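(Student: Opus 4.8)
The plan is to prove the stronger statement that flipping a single bit \emph{strictly} maximizes the exact drift $B(n,d,r)$ over all $r\in[0..n]$; once this is established the tie-breaking rule in~\eqref{def_R_dis} is irrelevant and $R_{\opt}(d,n)=1$ follows at once. I begin by recording the exact value $B(n,d,1)=d/n$, read off directly from the definition of $B$. It therefore suffices to show $B(n,d,r)<d/n$ for every $r\ge 2$ and all sufficiently large $n$.

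First I would reduce to odd $r$. By Lemma~\ref{B} we have $B(n,d,2k)<B(n,d,2k+1)$ for all admissible $k$, so every even value $2k\ge 2$ is dominated by its odd neighbour $2k+1\ge 3$, and it hence suffices to bound the odd values $r=2k+1\ge 3$. Next, fixing $\eps=1/4$ and letting $\alpha=2\log(4/(\eps^2(1/2-\eps)))$ be the corresponding constant, the hypothesis $d=o(n)$ gives $d\le(1/2-\eps)n$ for all large $n$, so Lemma~\ref{lem:constantr} yields $B(n,d,r)<B(n,d,1)/2<d/n$ for all $r\ge 2\alpha/\eps^2$. This leaves only the finitely many constant odd values $r=2k+1$ with $3\le r<2\alpha/\eps^2$.

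For such a constant odd $r=2k+1$ (so $k\ge 1$) I would argue via the hypergeometric upper tail. Writing $Z$ for the number of good flips (zeros turned into ones) in an application of $\flip_r$, the progress $2Z-r$ is positive only when $Z\ge k+1$, and then $2Z-r\le r$, so $B(n,d,r)\le r\Pr(Z\ge k+1)$. A union bound over all $(k+1)$-subsets of the $d$ zero-positions, each of which is entirely flipped with probability $\binom{n-k-1}{r-k-1}/\binom{n}{r}=(r)_{k+1}/(n)_{k+1}$, where $(m)_j:=m(m-1)\cdots(m-j+1)$, gives
\begin{equation*}
B(n,d,2k+1)\le (2k+1)\binom{d}{k+1}\frac{(2k+1)_{k+1}}{(n)_{k+1}}\le \frac{(2k+1)^{k+2}}{(k+1)!}\left(\frac{d}{n-k}\right)^{k+1}.
\end{equation*}
(If $d\le k$ then $\binom{d}{k+1}=0$ and in fact $B(n,d,2k+1)=0$, so the claim is immediate.) Comparing with $B(n,d,1)=d/n$ and dividing through by $d\ge 1$, the desired inequality reduces to $\frac{(2k+1)^{k+2}}{(k+1)!}\cdot\frac{n\,d^{k}}{(n-k)^{k+1}}<1$. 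Since $k\ge 1$ is bounded by the constant $\alpha/\eps^2$ and $d=o(n)$, the left-hand side is $O((d/n)^{k})=o(1)$, hence below $1$ for all large $n$; taking the maximum threshold over the finitely many remaining values of $r$ completes the argument.

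The main obstacle is obtaining an estimate that is uniform over \emph{all} $d=o(n)$, including very small (even constant) values of $d$. There the polynomial approximation of Theorem~\ref{THM:APPROXBA} is useless, because its error term $3r^3/d$ is not negligible when $d$ is bounded. The resolution is to bypass $A(r,p,1-p)$ entirely in the small-$r$ regime and instead bound $B(n,d,r)$ directly by the hypergeometric upper tail $\Pr(Z\ge k+1)=O((d/n)^{k+1})$, which beats $B(n,d,1)=\Theta(d/n)$ uniformly whenever $k\ge 1$ and $d/n\to 0$.
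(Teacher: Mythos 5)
Your proof is correct, and its overall skeleton coincides with the paper's: both arguments invoke Lemma~\ref{lem:constantr} with $\eps=1/4$ to dispose of all $r$ above a fixed constant, both use the oddness/domination result (Lemma~\ref{B}) to discard even $r$, and both then show that every constant odd $r\ge 3$ yields drift $O((d/n)^{(r+1)/2})=o(d/n)$, which loses against $B(n,d,1)=d/n$. The one genuine difference lies in how this last estimate is obtained. The paper invokes Lemma~\ref{lem:HilfeBA} to get $B(n,d,r)\le A(r,d/n,1)=\Theta((d/n)^{(r+1)/2})$; you instead bound $B(n,d,2k+1)\le (2k+1)\Pr(Z\ge k+1)$ and control the hypergeometric upper tail by a union bound over the $(k+1)$-subsets of the $d$ zero-positions, giving $B(n,d,2k+1)\le \frac{(2k+1)^{k+2}}{(k+1)!}\left(\frac{d}{n-k}\right)^{k+1}$. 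Your route is more self-contained and, in one respect, more careful: Lemma~\ref{lem:HilfeBA} is stated only under the hypothesis $r\le d$, so the paper's invocation is formally outside that lemma's scope exactly in the regime you flag, namely very small $d$ (e.g., $d$ constant while $r$ ranges up to $4^4\log 4$); the upper-bound half of that lemma does in fact extend to $d<r$, but the paper does not say so, whereas your union bound is valid verbatim for all $d$, including the degenerate case $d\le k$ where the drift vanishes. The only cosmetic blemish on your side is that the reduction to odd $r$ via Lemma~\ref{B} needs $2k+1\le n$, which fails for $r=n$ with $n$ even; but this value is covered anyway by your subsequent application of Lemma~\ref{lem:constantr}, which imposes no parity restriction, so nothing is lost.
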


\begin{proof} 
	Since $d<n/4$, Lemma~\ref{lem:constantr} yields with $\eps:=1/4$ that 
	$R_{\opt}(d,n)<4^4\log(4)$. Referring to Lemma~\ref{lem:HilfeBA}, we obtain for all $3\le r\le 4^4\log(4)$ that $B(n,d,r)<A(r,d/n,1)=\Theta((d/n)^{(r+1)/2})=o(d/n)$. Since $R_{\opt}$ attains only odd values, we obtain $R_{\opt}(d,n)=1$ for all $0<d=o(n)$.
	  \end{proof}

	\subsection{Runtime Loss From Using the Approximated Drift} 
	\label{sub:runtime error}
		
	We show in this section that the expected runtimes of the exact and the approximate drift maximizer do not differ substantially. More precisely, we show that also the approximate drift maximizer also obtains an expected runtime on \onemax that is very close to that of an optimal unary unbiased black-box algorithm, cf. Corollary~\ref{cor:runtimediff}. To make things precise, we denote for every $\eps>0$ by $\tilde{A}_{\eps}^*$ the algorithm which we obtain from Algorithm~\ref{alg:algo} by replacing the mutation rate $R(\OM(x))$ by $\Rappe(1-\OM(x)/n)$. 
	
	 To state the main result, for all $\eps>0$, for all $n \in \N$, and all $0< p\le 1/2-\eps$ we abbreviate 
	 \begin{equation}\label{maxb}
	 A_{\max,\eps}(p):=A(\Rappe(p),p,1-p) \text{ and } 
	 B_{\max}(p,n):= B(n,\lfloor pn \rfloor,R_{\opt}(\lfloor pn \rfloor,n)).
	 \end{equation}
	 We notice from Lemma~\ref{lem:constantr} and Lemma~\ref{lem:constantr_approx} that $\Rappe(1/2-\eps)=\Theta(1)$ and $R_{\opt}(\lfloor(1/2-\eps)n\rfloor,n)=\Theta(1)$. Considering the drift of single bit flip, we see that $A_{\max,\eps}(1/2-\eps)\ge 1/2-\eps$. According to the definition of $\tilde{h}$ in equation \eqref{def:h_tilde}, we have $B_{\max}(d/n,n)=\tilde{h}(d)\ge d/n$ for all $0<d\le (1/2-\eps)n$.
	  
	 
	
	\begin{theorem}\label{thm:approx_h}
	For all constant $0<\eps<1/2$ the expected runtime of algorithm $\tilde{A}_{\eps}^*$ on \OneMax satisfies 
		\begin{equation*}
		E\left(T_{\tilde{A}_{\eps}^*}\right)\le \sum_{x=1}^{(1/2-\eps)n}\frac{1}{h(x)}+\frac{\eps n}{A_{\max,\eps}(1/2-\eps)}+o(n).
		\end{equation*}	
		Moreover,
		\[
		E\left(T_{\tilde{A}_{\eps}^*}\right)\le \sum_{x=1}^{(1/2-\eps)n}\frac{1}{A_{\max,\eps}(x/n)}+\frac{\eps n}{A_{\max,\eps}(1/2-\eps)}+o(n).
		\]
	\end{theorem}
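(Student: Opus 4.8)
The plan is to follow the fitness-distance process $X_t := n-\OM(x(t))$ of $\tilde{A}_{\eps}^*$, which is non-increasing by the elitist selection and reaches $0$ exactly when the optimum is queried, and to bound its expected hitting time with the discrete upper-bound drift theorem (Theorem~\ref{dis_U}). At distance $x$ the drift of $\tilde{A}_{\eps}^*$ equals $B(n,x,\Rappe(x/n))$ for $x\le(1/2-\eps)n$, equals $B(n,x,\Rappe(1/2-\eps))$ on the band $(1/2-\eps)n<x\le n/2$, and equals $2x-n$ for $x>n/2$ (flipping all $n$ bits turns $x$ into $n-x$). First I would dispose of the regime $x>n/2$: a single all-bits flip maps any such $x$ to $n-x<n/2$ and is always accepted, so the process spends at most one step above $n/2$ and it suffices to bound the time from a starting distance at most $n/2$, at the cost of an additive $O(1)$.

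On $\{1,\dots,n/2\}$ the drift is monotonically increasing (or admits a monotone lower bound differing from it by $o(1)$): on the band this follows because each $p\mapsto A(r,p,1-p)$ is increasing on $(0,1/2)$ by Lemma~\ref{d2A}, and below $(1/2-\eps)n$ the drift tracks the monotone maximal drift $\tilde{h}$ of Corollary~\ref{cor:optoperator}. Applying Theorem~\ref{dis_U} (pessimistically with $X_0=n/2$) then gives
\[
\E\left(T_{\tilde{A}_{\eps}^*}\right)\le \sum_{x=1}^{(1/2-\eps)n}\frac{1}{B(n,x,\Rappe(x/n))}+\sum_{x=(1/2-\eps)n+1}^{n/2}\frac{1}{B(n,x,\Rappe(1/2-\eps))}+O(1).
\]
For the band sum, the monotonicity of $A(\cdot,p,1-p)$ in $p$ together with Theorem~\ref{THM:APPROXBA} yields $B(n,x,\Rappe(1/2-\eps))\ge A_{\max,\eps}(1/2-\eps)-O(1/n)$ for every $x$ in the band, so these $\eps n$ terms contribute at most $\frac{\eps n}{A_{\max,\eps}(1/2-\eps)}+o(n)$, which is exactly the middle term of the claim.

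It remains to convert the near-optimum sum. Theorem~\ref{THM:APPROXBA} gives $B(n,x,\Rappe(x/n))\ge A_{\max,\eps}(x/n)-3\Rappe(x/n)^3/x$. The delicate point, and the main obstacle, is that when passing to reciprocals this additive correction is divided by the square of the drift, which is as small as $(x/n)^2$; a naive estimate would leave a $\Theta(n^2/x^3)$ term summing to $\omega(n)$. The observation that rescues the argument is that the dangerous small-$x$ region is exactly where one flips a single bit (Lemma~\ref{lem:R13}, and Lemma~\ref{lem:R_dis=1} for the exact choice), and for $r=1$ the approximation is \emph{exact}: $B(n,x,1)=x/n=A(1,x/n,1-x/n)$, so the correction vanishes there. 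For $x=\Theta(n)$ we have $\Rappe(x/n)=O(1)$ and both $B$ and $A_{\max,\eps}$ are $\Theta(1)$, so each reciprocal-difference is $O(1/n)$ and the $O(n)$ such terms sum to $O(1)$. Hence $\sum_{x=1}^{(1/2-\eps)n}1/B(n,x,\Rappe(x/n))\le\sum_{x=1}^{(1/2-\eps)n}1/A_{\max,\eps}(x/n)+o(n)$, which establishes the second displayed inequality.

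The first inequality then follows by comparing $A_{\max,\eps}(x/n)$ with the maximal drift $h(x)$. Writing $\tilde{h}(x)=B(n,x,R_{\opt}(x,n))$, Theorem~\ref{THM:APPROXBA} applied to $R_{\opt}$ together with the $A$-optimality of $\Rappe$ gives $A_{\max,\eps}(x/n)\ge\tilde{h}(x)-3R_{\opt}(x,n)^3/x$, while Lemma~\ref{lem:compare_h} gives $\tilde{h}(x)\ge h(x)-n\exp(-\Omega(n^{0.2}))$. Repeating the same bookkeeping — the $n\exp(-\Omega(n^{0.2}))$ term sums against $\sum x^{-2}$ to $o(1)$, and the $R_{\opt}^3/x$ term is zero for small $x$ and $O(1/n)$ per term for $x=\Theta(n)$ — turns $\sum 1/A_{\max,\eps}(x/n)$ into $\sum 1/h(x)+o(n)$, finishing the proof. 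I expect the two steps demanding the most care to be verifying the monotonicity needed to invoke Theorem~\ref{dis_U} and the small-$x$ error control just described.
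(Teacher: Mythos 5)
Your proposal is correct and follows essentially the same route as the paper's own proof: apply Theorem~\ref{dis_U} to the fitness-distance process, then convert $\sum_x 1/B(n,x,\Rappe(x/n))$ into the claimed sums via Theorem~\ref{THM:APPROXBA} and Lemmas~\ref{lem:R13}, \ref{lem:R_dis=1} and~\ref{lem:compare_h}, using the same key rescue that the approximation error vanishes in the small-$x$ regime where a single bit is flipped (the paper uses the cutoff $n^{0.99}$ where you use a constant-fraction cutoff, and it proves the $h$-bound first where you prove the $A_{\max,\eps}$-bound first; both orderings work). One small patch is needed: on the band $(1/2-\eps)n<x\le n/2$ you invoke Theorem~\ref{THM:APPROXBA} at distances $d=x>(1/2-\eps)n$, outside its stated hypothesis $2r\le d\le(1/2-\eps)n$; the paper instead uses the monotonicity of $B(n,x,r)$ in $x$ (Lemma~\ref{lem:mono_h}) to reduce to the endpoint $x=(1/2-\eps)n$ and applies the approximation theorem only there, which is the one-line repair of your band estimate.
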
	
	
	\begin{proof}
	Let constant $\eps$ be a constant with $0<\eps<1/2$. It is easily seen from Theorem~\ref{dis_U} and from the definition of $\Rappe$ in~\eqref{def_Rapp} that 
	\begin{equation}
	E\left(T_{\tilde{A}_{\eps}^*}\mid x(0)\right)\le \sum_{x=1}^{n-\OM(x(0))}\frac{1}{B(n,x,\Rappe(x/n))}\le \sum_{x=1}^{n/2}\frac{1}{B(n,x,\Rappe(x/n))}+1.
	\end{equation}
	To ease representation, let $r_A(x):=\Rappe(x/n)$ and $r_B(x):=R_{\opt}(x,n)$ for all $0<x\le (1/2-\eps)n$.	
	According to Theorem~\ref{THM:APPROXBA} we have 
	\[
	\left|A\left(r_A(x),\frac{x}{n},1-\frac{x}{n}\right)-B\left(n,x,r_A(x)\right)\right|=O(1/x) \text{ and }\left|A\left(r_B(x),\frac{x}{n},1-\frac{x}{n}\right)-B(n,x,r_B(x))\right|=O(1/x).
	\]
	Since $A(r_B(x),x/n,1-x/n)\le A(r_A(x),x/n,(n-x)/n)=A_{\max,\eps}(x/n)$ and $B(n,x,r_A(x))\le B(n,x,r_B(x))=B_{\max}(x/n,n)=\tilde{h}(x)$, we obtain from Theorem~\ref{THM:APPROXBA} that, for all $0<x<(1/2-\eps)n$,
	\begin{align*}
	\left|B(n,x,r_A(x))-\tilde{h}(x)\right|&=O(1/x) \text{ and }\\ 
	|A_{\max,\eps}(x/n)-B(n,x,r_A(x))|&=O(1/x),
	\end{align*}
	where we use the fact that $r_A(x)=\Theta(1)$ according to Lemma~\ref{lem:constantr_approx}. 
	Referring to Lemma~\ref{lem:R13} and Lemma~\ref{lem:R_dis=1}, we have $r_B(x)=r_A(x)=1$ when $0<x=o(n)$. Therefore,
	\[
	A_{\max,\eps}(x/n)=B(n,x,r_A(x))=\tilde{h}(x) \text{ for all } 0<x=o(n).
	\]
	Notice that $h(x)\ge B(n,x,1)=x/n$ for all $x>0$. Using the fact that $0<h(x)-\tilde{h}(x)\le n\exp(-\Omega(n^{0.2}))$ for $0<x<n/2-n^{0.6}$ in Lemma~\ref{lem:compare_h}, we obtain
	\begin{eqnarray*}
	\left| \frac{1}{B(n,x,r_A(x))}-\frac{1}{h(x)}\right|= \frac{|h(x)-B(n,x,r_A(x))|}{B(n,x,r_A(x))h(x)}\le  
	\begin{cases}
		&	o(1/n) \text{ for } 0<x\le n^{0.99},\\
		&  O((1/x)/(x/n)^2) \text{ for } n^{0.99}<x\le (1/2-\eps)n.
	\end{cases}.
	\end{eqnarray*}
	Referring to Lemma~\ref{lem:mono_h} and the definition of $B(n,x,r)$  we see that, for all fixed $n$ and $r$, the fitness drift $B(n,x,r)$ monotonically increases with respect to $x$. Therefore, for all $x>(1/2-\eps)n$, we have $r_A(x)=r_A((1/2-\eps)n)$ and $B(n,x,r_A(x))\ge B(n,(1/2-\eps)n,r_A((1/2-\eps)n))\ge A_{\max,\eps}(1/2-\eps)-O(1/n)\ge 1/2-\eps-O(1/n)=\Omega(1)$, thus
	$$\E(T_{\tilde{A}_{\eps}^*})\le \sum_{x=1}^{(1/2-\eps)n}\frac{1}{h(x)}+O(1)+O(n^{0.03})+\frac{\eps n}{A_{\max,\eps}(1/2-\eps)}+o(n).$$ 
	This proves the first statement.
	
	The second statement can be shown by using similar methods to bound the absolute difference $|1/B(n,x,r_A(x))-1/A_{\max,\eps}(x/n)|$ for $0<x\le n/2$.
	 \end{proof}
	
	\begin{corollary}\label{cor:runtimediff}
	For all constant $\eps$ with $0<\eps<1/2$, the difference between the expected runtime of $\tilde{A}_{\eps}^*$ on \onemax and that of an optimal unary unbiased black-box algorithm is $O(\epsilon n)$.	Furthermore, the absolute difference between the expected runtimes of $A^*$ and $\tilde{A}_{\eps}^*$ is also $O(\epsilon n)$. 
	\end{corollary}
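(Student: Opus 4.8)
The plan is to prove both claims by sandwiching $E(T_{\tilde{A}_{\eps}^*})$ and $E(T_{A^*})$ between the universal lower bound of Theorem~\ref{thm:LBany} and the algorithm-specific upper bounds of Theorems~\ref{thm:approx_h} and~\ref{thm:UBdriftmax}, all of which are phrased in terms of the common master sum $S:=\sum_{x=1}^{s}\frac{1}{h(x)}$ with $s=n/2-n^{0.6}$. Writing $T_{\opt}$ for the runtime of an optimal unary unbiased black-box algorithm, I would first record the trivial inequalities $E(T_{\opt})\le E(T_{\tilde{A}_{\eps}^*})$ and $E(T_{\opt})\le E(T_{A^*})$, which hold because both $\tilde{A}_{\eps}^*$ and $A^*$ are themselves unary unbiased. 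This already yields one side of each difference, so only the excess of each algorithm over $E(T_{\opt})$ must be bounded from above. I would also use throughout that, since the random initial search point has fitness close to $n/2$ and $(1/2-\eps)n<s$ for large $n$, the sum appearing in Theorem~\ref{thm:approx_h} is a prefix of $S$, so the two master sums differ exactly by the tail $\sum_{x=(1/2-\eps)n+1}^{s}\frac{1}{h(x)}$.

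For the first claim I combine the Theorem~\ref{thm:approx_h} bound $E(T_{\tilde{A}_{\eps}^*})\le \sum_{x=1}^{(1/2-\eps)n}\frac{1}{h(x)}+\frac{\eps n}{A_{\max,\eps}(1/2-\eps)}+o(n)$ with the Theorem~\ref{thm:LBany} bound $E(T_{\opt})\ge S-\Theta(n^{2/3}\ln^9 n)$. Subtracting, the difference of master sums is $-\sum_{x=(1/2-\eps)n+1}^{s}\frac{1}{h(x)}\le 0$, so it only helps. The surviving positive terms are $\frac{\eps n}{A_{\max,\eps}(1/2-\eps)}+o(n)+\Theta(n^{2/3}\ln^9 n)$. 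Since the single-bit-flip drift gives $A_{\max,\eps}(1/2-\eps)\ge A(1,1/2-\eps,1/2+\eps)=1/2-\eps=\Omega(1)$, the first term is $O(\eps n)$; and as $\eps$ is a fixed constant, the remaining $o(n)$ and $\Theta(n^{2/3}\ln^9 n)$ terms are $o(\eps n)$. Hence $0\le E(T_{\tilde{A}_{\eps}^*})-E(T_{\opt})=O(\eps n)$.

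For the second claim I would place both expectations into one interval around $S$. Theorem~\ref{thm:UBdriftmax} gives $E(T_{A^*})\le S+\Theta(n^{0.6})$ and Theorem~\ref{thm:LBany} gives $E(T_{A^*})\ge S-\Theta(n^{2/3}\ln^9 n)$, so $E(T_{A^*})=S\pm o(n)$. For $\tilde{A}_{\eps}^*$, rewriting the Theorem~\ref{thm:approx_h} bound as $E(T_{\tilde{A}_{\eps}^*})\le S-\sum_{x=(1/2-\eps)n+1}^{s}\frac{1}{h(x)}+\frac{\eps n}{A_{\max,\eps}(1/2-\eps)}+o(n)\le S+O(\eps n)$, together with the lower bound $E(T_{\tilde{A}_{\eps}^*})\ge S-\Theta(n^{2/3}\ln^9 n)$, shows that both expectations lie in $[\,S-\Theta(n^{2/3}\ln^9 n),\,S+O(\eps n)\,]$, an interval of width $O(\eps n)$. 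Therefore $|E(T_{A^*})-E(T_{\tilde{A}_{\eps}^*})|=O(\eps n)$.

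The only non-routine estimate is the claim that the tail $\sum_{x=(1/2-\eps)n+1}^{s}\frac{1}{h(x)}$ is $O(\eps n)$, but this is also where I expect the least difficulty: in this range $x/n\in[1/2-\eps,1/2)$, and the single-bit-flip lower bound $h(x)\ge\tilde{h}(x)\ge x/n\ge 1/2-\eps$ makes each of the at most $\eps n$ summands at most $1/(1/2-\eps)=O(1)$. The one thing to watch carefully is the asymptotic bookkeeping: because $\eps$ is treated as a fixed positive constant, every error term $o(n)$ and $\Theta(n^{2/3}\ln^9 n)$ supplied by the three input theorems is genuinely $o(\eps n)$ and thus absorbed into $O(\eps n)$; I would make this dependence explicit so that the stated bound is $O(\eps n)$ rather than a weaker $O(n)$.
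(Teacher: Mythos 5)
Your proposal is correct and follows essentially the same route as the paper: the first claim is the paper's own chain of inequalities (Theorem~\ref{thm:LBany} lower bound, drop the non-negative tail of the master sum, then Theorem~\ref{thm:approx_h} with $A_{\max,\eps}(1/2-\eps)\ge 1/2-\eps=\Omega(1)$), written as a subtraction instead of a chain. For the second claim the paper simply cites Theorem~\ref{thm:main22} together with the first statement, whereas you re-derive the same content by sandwiching both $E(T_{A^*})$ and $E(T_{\tilde{A}_{\eps}^*})$ around the master sum via Theorems~\ref{thm:UBdriftmax} and~\ref{thm:LBany}; since Theorem~\ref{thm:main22} is itself exactly the combination of those two theorems, this is the same argument unpacked, and your extra remark that the tail $\sum_{x=(1/2-\eps)n+1}^{s}1/h(x)$ is $O(\eps n)$ is true but not actually needed, as non-negativity suffices everywhere you use it.
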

	
	\begin{proof} Let constant $0<\eps<1/2$ and let $A$ be an arbitrary unary unbiased black-box algorithm. By Theorems~\ref{thm:LBany} and~\ref{thm:approx_h} it holds that 
	\begin{align*}
				E\left(T_{A}\right) 
	& \ge \sum_{x=1}^{n/2-n^{0.6}}\frac{1}{h(x)}-\Theta(n^{2/3}\ln^9(n))\\
	& \ge \sum_{x=1}^{(1/2-\eps)n}\frac{1}{h(x)}-o(n)\\
	& \ge E\left(T_{\tilde{A}_{\eps}^*}\right) - O(\eps n).
	\end{align*}
	
	The second statement is a direct consequence of the first and Theorem~\ref{thm:main22}.
	 \end{proof}

\section{Runtime Analysis for the Approximate Drift-Maximizer \texorpdfstring{$\tilde{A}_{\eps}^*$}{}} 

We compute in this section the expected time needed by Algorithm $\tilde{A}_{\eps}^*$ to optimize \onemax. We fix $0<\eps<1/2$.  
As proven in Lemma~\ref{lem:R13} algorithm $\tilde{A}_{\eps}^*$ flips $\Rappe(p)=1$ bit whenever $0<p<1/3$. In this regime $\tilde{A}_{\eps}^*$ is thus equal to RLS. It is well known (and easy to prove by a simple fitness-level argument) that the expected time needed by RLS starting in a search point of \onemax value $2n/3$ to reach the all-ones string equals $n\sum_{i=1}^{n/3}{1/i}=nH_{n/3}=n(\ln(n/3)+\gamma)+3/2+O(1/n)$, where $\gamma\approx 0.57721\dots$ denotes again the Euler–Mascheroni constant. It therefore remains to compute the time needed by $\tilde{A}_{\eps}^*$ to reach for the first time a search point having fitness at least $2n/3$. 

Formally, we also need to show that the first search point having fitness at least $2n/3$ does not have a fitness value that is much larger than this. Since we flip a constant number of bits only, we get this statement for free. Note also that it is shown below that in the interval before reaching this fitness level the algorithm flips only $3$ bits. Apart from this situation around fitness layer $2n/3$ we do not have to take care of jumping several fitness layers by hand, but this is taken into account already in the drift theorems from which we derive our runtime estimates.

\subsection{Drift Analysis}
As we did in Section~\ref{sec:maxDrift}, we employ the variable drift theorems, Theorems~\ref{dis_U} and~\ref{dis_L'}, to compute upper and lower bounds for the expected runtime of algorithm $\tilde{A}_{\eps}^*$. We will provide a numerical evaluation of these expressions in Section~\ref{sec:numerics}.


\textbf{Lower bound.} We first compute a lower bound for the expected runtime $\E(T_{A})$ of any unary unbiased black-box algorithm $A$ on \onemax. According to Theorem~\ref{thm:LBany} and using a similar method to estimate $|1/A_{\max,\eps}(x/n)-1/h(x)|=o(1)$ as in Theorem~\ref{thm:approx_h} for $0<x\le (1/2-\eps)n$, we obtain that
\begin{eqnarray*}
	\E\left(T_{A}\right)&\ge & \sum_{x=1}^{n/2-n^{0.6}}\frac{1}{h(x)}-\Theta(n^{2/3}\ln^9(n))
	\ge \sum_{x=1}^{(1/2-\eps)n}\frac{1}{A_{\max}(x/n)}-o(n)\\
	&=&  n H_{n/3}+\sum_{x=\lfloor n/3\rfloor}^{(1/2-\eps)n}\frac{1}{A_{\max}(x/n)}-o(n).
\end{eqnarray*}
Let $k \in \N$ and let $1/2-\eps=:p_0>p_1>\ldots>p_k>1/3$. Using the fact that $A_{\max,\eps}$ is increasing, we bound $\E(T_A)$ by 
\begin{equation}\label{sumLmain}
\E\left(T_{A}\right)\ge n \left(\ln\left(\frac{n}{3}\right)+\gamma + \sum_{i=1}^{k} \frac{p_{i-1}-p_{i}}{A_{\max,\eps}(p_{i-1})} +\int_{1/3}^{p_k}\frac{\mathrm{d} p}{A_{\max,\eps}(p)}\right)-o(n).
\end{equation}

\textbf{Upper bound.} Using the fact that $\Rappe(x/n)=1$ for all $0<x\le n/3$ and referring to Theorem~\ref{thm:approx_h}, we obtain 
\begin{eqnarray*}
	E\left(T_{\tilde{A}_{\eps}^*}\right)&\le& nH_{n/3}+ \sum_{x=\lfloor n/3\rfloor}^{(1/2-\eps)n}\frac{1}{A_{\max,\eps}(x/n)}+\frac{\eps n}{A_{\max,\eps}(1/2-\eps)}+o(n).
\end{eqnarray*}
Using the same partition points as in the lower bound statement and the monotonicity of $A_{\max,\eps}$, we have
\begin{equation}\label{sumUmain}
\E(T_{\tilde{A}_{\eps}^*})\le n \left(\ln\left(\frac{n}{3}\right)+\gamma + \sum_{i=1}^{k} \frac{p_{i-1}-p_{i}}{A_{\max,\eps}(p_{i})} +\frac{1/2-p_{0}}{A_{\max,\eps}(p_{0})} +\int_{1/3}^{p_k}\frac{\mathrm{d} p}{A_{\max,\eps}(p)}\right)+o(n).
\end{equation}

\subsection{Numerical Evaluation of the Expected Runtime}
\label{sec:numerics}

In this section we evaluate expressions~\eqref{sumLmain} and~\eqref{sumUmain} numerically to compute an estimate for the expected runtime of algorithm $\tilde{A}_{\eps}^*$ on \onemax and for the unary unbiased black-box complexity. 

\begin{theorem}\label{THM:RUNTIME}
For sufficiently small $\eps>0$ the expected runtime $\E(T_{\tilde{A}_{\eps}^*})$ of algorithm $\tilde{A}_{\eps}^*$ on \onemax is $n \ln(n) - cn \pm o(n)$ for a constant $c$ between $0.2539$ and $0.2665$. This bound is also the unary unbiased black-box complexity of \onemax. 
\end{theorem}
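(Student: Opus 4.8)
The plan is to read off $\E(T_{\tilde A_\eps^*})$ from the sandwich provided by the lower bound \eqref{sumLmain} and the upper bound \eqref{sumUmain}, and then to evaluate the resulting expression numerically. Writing $\ln(n/3)+\gamma = \ln n - \ln 3 + \gamma$, both bounds have the shape $n(\ln n + \gamma - \ln 3 + J) \pm o(n)$ with
\[
J := \int_{1/3}^{1/2}\frac{\mathrm{d}p}{A_{\max,\eps}(p)},
\]
so that setting $c := \ln 3 - \gamma - J$ already yields the claimed form $n\ln n - cn \pm o(n)$. The whole task then reduces to enclosing $J$ between rigorous numerical bounds, since $\ln 3 - \gamma = 0.5213\ldots$ is known exactly; the two target constants $0.2665$ and $0.2539$ correspond to $J = 0.2549$ and $J = 0.2675$, respectively.

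Next I would verify that \eqref{sumLmain} and \eqref{sumUmain} really do sandwich $n(\ln(n/3)+\gamma+J)$. By Theorem~\ref{THM:MONOTONICITY} the map $A_{\max,\eps}$ is increasing, hence $1/A_{\max,\eps}$ is decreasing; consequently the sum in \eqref{sumLmain}, which evaluates $A_{\max,\eps}$ at the right endpoints $p_{i-1}$, is a lower Riemann sum for $\int_{p_k}^{p_0}\mathrm{d}p/A_{\max,\eps}(p)$, while the sum in \eqref{sumUmain}, using the left endpoints $p_i$, is the matching upper Riemann sum. The term $\int_{1/3}^{p_k}$ occurs identically in both bounds, and the extra summand $\eps n/A_{\max,\eps}(1/2-\eps)$ in \eqref{sumUmain} covers the range $[1/2-\eps,1/2]$ and is $O(\eps n)$ because $A_{\max,\eps}(p)\ge A(1,p,1-p)=p\ge 1/3$. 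Since $1/A_{\max,\eps}$ is monotone and bounded on $[1/3,1/2-\eps]$, refining the partition makes the gap between the two Riemann sums vanish, so both bounds converge to $n(\ln(n/3)+\gamma+J)$ up to $o(n)+O(\eps n)$.

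It remains to compute $J$ numerically. By Lemma~\ref{lem:R13}, $\Rappe(p)=1$ for $p<1/3$, and for $p>1/3$ Theorem~\ref{THM:MONOTONICITY} together with Lemma~\ref{cmp} show that the optimal index runs monotonically through the odd values $3,5,7,\dots$, switching from $2k+1$ to $2k+3$ exactly at the unique cut-off point of $A(2k+1,\cdot)$ and $A(2k+3,\cdot)$. On the interval between two consecutive cut-off points, $A_{\max,\eps}(p)=A(2k+1,p,1-p)$ is, by Lemma~\ref{d2A}, an explicit polynomial in $p$ (for instance $A(3,p,1-p)=3p^2$ on the interval just above $1/3$), and the cut-off points are roots of low-degree polynomial equations, both computable to arbitrary precision. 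I would therefore locate the finitely many cut-off points and, on each subinterval, bound $\int \mathrm{d}p/A(2k+1,p,1-p)$ from below and above by certified Riemann sums on a fine grid (again using monotonicity of $1/A_{\max,\eps}$); summing over all subintervals yields the enclosure $0.2549 \le J \le 0.2675$ after accounting for the quadrature error and the $O(\eps)$ slack, whence $c=\ln 3-\gamma-J\in(0.2539,0.2665)$.

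The hard part is to make the numerics genuinely rigorous rather than merely indicative. One must (i) certify that the monotone list of optimal odd values, together with the uniqueness of cut-off points from Lemma~\ref{cmp}, describes $A_{\max,\eps}$ on all of $[1/3,1/2)$ with no value of $r$ skipped; (ii) control the quadrature and cut-off-location errors so that their total stays comfortably inside the $0.0126$-wide window between $0.2539$ and $0.2665$; and (iii) bound the $[1/2-\eps,1/2]$ contribution to $J$, which is $O(\eps)$ since $A_{\max,\eps}(p)\ge p\ge 1/3$ there, so that choosing $\eps$ small absorbs it into the target interval. Finally, Corollary~\ref{cor:runtimediff} transfers the bound from $\tilde A_\eps^*$ to an optimal algorithm, establishing that $n\ln n - cn\pm o(n)$ is also the unary unbiased black-box complexity of \onemax.
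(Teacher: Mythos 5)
Your proposal is correct and follows essentially the same route as the paper: both start from the sandwich \eqref{sumLmain}--\eqref{sumUmain}, use the monotonicity of $A_{\max,\eps}$ (Theorem~\ref{THM:MONOTONICITY}) to interpret the two sums as lower and upper Riemann sums for $\int \mathrm{d}p/A_{\max,\eps}(p)$, and then numerically enclose this integral using the cut-off structure of Lemma~\ref{LEM:CUTOFF} and the explicit polynomial form of $A(2k+1,p,1-p)$, with the $[1/2-\eps,1/2]$ tail absorbed as $O(\eps n)$. The only cosmetic difference is organizational: the paper evaluates the Riemann sums at a selected subset of cut-off points $R_{4001}>\dots>R_9$ and treats $(1/3,R_9]$ by the exact integral, whereas you propose locating all cut-off points and integrating each polynomial piece, which amounts to the same computation.
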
 

We can rewrite the expression in Theorem~\ref{THM:RUNTIME} to $n \left(\ln\left(n/3\right)+\gamma+c'\right) +o(n)$ for a constant $c'$ between $0.2549$ and $0.2675$ to ease a comparison with the expected runtime of the previously best known unary unbiased algorithm, which is the one presented in~\cite{AxelDD15}. This latter algorithm has an expected runtime equaling that of RLS up to an additive term of order $o(n)$. It is hence $n(\ln(n/2)+\gamma)\pm o(n)$. For sufficiently small $\eps>0$ Algorithm $\tilde{A}_{\eps}^*$ is thus by an additive $(\ln(3)-\ln(2)-c') n\pm o(n)$ term faster, on average, than RLS or the algorithm presented and analyzed in~\cite{AxelDD15}. That is, compared to RLS, algorithm $\tilde{A}_{\eps}^*$ saves between $0.138 n \pm o(n)$ and $0.151 n \pm o(n)$ iterations on average.

To compute $\E(T_{\tilde{A}_{\eps}^*})$, we split the interval $(0,\frac{1}{2})$ into intervals $(L_{2i+1},R_{2i+1}]$, $i=0,1,\ldots$, such that for each $i$ and each $p\in (L_{2i+1},R_{2i+1}]$ the number $\Rappe(p)$ of bits that need to be flipped in order to maximize the approximated expected fitness increase $A(\cdot,p,1-p)$ is $2i+1$ (note that this is independent of $\eps$, since $\eps$ just determines the cut-off point after which only use a bound for the drift-maximizing number of bit flips). 
Table~\ref{tab:intervals} displays the first few intervals along with the corresponding drift values at the borders of the interval. We observe that the further we are away from the optimum (this corresponds to larger $r$ by Theorem~\ref{thm:monotonicity}), the smaller the size of the interval. 

\begin{table*}[t]
\begin{center}
\begin{scriptsize}
\begin{tabular}{cccccc}
\centering
$r$ & $L_r$ & $R_r$ & $A_{\max,\eps}(L_r)$ & $A_{\max,\eps}(R_r)$ & $R_r-L_r$\\
\hline
3  & 0.333333333 & 0.367544468 & 0.333333 & 0.405267 & 0.034211135 \\
5  & 0.367544468 & 0.386916541 & 0.405267 & 0.467174 & 0.019372073 \\
7  & 0.386916541 & 0.399734261 & 0.467174 & 0.522084 & 0.012817721 \\
9  & 0.399734261 & 0.409006003 & 0.522084 & 0.571870 & 0.009271741 \\
11 & 0.409006003 & 0.416109983 & 0.571870 & 0.617718 & 0.007103980
\end{tabular}
\caption{The optimal number of bit flips in interval $(L_{r}n,R_{r}n]$ is $r$.}
\label{tab:intervals}
\end{scriptsize}
\end{center}
\end{table*}
The bound for the expected runtime of algorithm $\tilde{A}_{\eps}^*$ reported in Theorem~\ref{THM:RUNTIME} is obtained by setting $p_0=R_{4001}$, $p_k=R_9$ and using the following partition points
\begin{eqnarray*}
	p_0=R_{4001}>R_{3001}>R_{2001}>R_{1001}>R_{951}>R_{901}>R_{851}>\\
	\cdots>R_{200}>R_{151}>R_{101}>R_{35}>R_{34}>\cdots>R_{10}>R_{9}=p_k.
\end{eqnarray*}
The accuracy of our approximation can be increased by adding denser partition points, especially to the smaller side near $p_k$.

\section{Fixed-Budget Analysis}
\label{sec:budget}

In this section, we compare the algorithms developed in this work with the classic RLS heuristic (which was the essentially best previous unary unbiased algorithm for $\OM$) in the \emph{fixed-budget perspective}, that is, we compare the expected fitnesses obtained after a fixed budget $B$ of iterations. This performance measure was introduced by Jansen and Zarges~\cite{JansenZ14} to reflect the fact that the most common use of search heuristics is not to compute an optimal solution, but only a solution of reasonable quality. We note that the time to reach a particular solution quality, called $T_{A,f}(a)$ in~\cite{DoerrJWZ13gecco} where this notion was first explicitly defined, would be an alternative way to phrase such results. We do not regard this performance measure here, but we would expect that, in a similar vein in as the following analysis, also in this measure our algorithm is superior to RLS by a (small) constant percentage.

Our main result in this section is that our drift maximizer with a fixed budget compute solutions having a roughly $13$\% smaller fitness distance to the optimum. This result contrasts the lower-order advantage in terms of the expected runtime, i.e., the average time needed to find an optimal solution. 

The main challenge is proving the innocent statement that the time taken by our algorithm to find a solution $x$ of fitness $\OM(x) \ge 2n/3$ is strongly concentrated. Such difficulties occur often in fixed-budget analyses, see, e.g.~\cite{DoerrJWZ13gecco}. We prove the desired concentration via the following well-known martingale version of Azuma's inequality~\cite{Azuma67} (as opposed to the simpler method of bounded differences, which appears not to be applicable here). 

\begin{theorem}[Method of Bounded Martingale Differences] \label{thm:martingale}
	Let $X_1,X_2,\dots,X_n$ be an arbitrary sequence of random variables and let $f$ be a function satisfying the property that for each $i\in[n]$, there is a non-negative $c_i$ such that $|\E(f\mid X_0,X_1,\dots,X_{i-1})-\E(f\mid X_0,X_1,\dots,X_i)|\le c_i$.
	Then 
	\begin{equation*}
		\Pr\left(|f-\E(f)|\ge \delta\right)\le 2 \exp\left(-\frac{\delta^2}{2\sum_{i=1}^{n}c_i^2}\right)
	\end{equation*}
	for all $\delta>0$.
\end{theorem}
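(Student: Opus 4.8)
The plan is to run the classical exponential-moment (Chernoff) argument on the Doob martingale associated with $f$. First I would introduce $Y_i := \E(f \mid X_0, X_1, \dots, X_i)$ for $i = 0,1,\dots,n$, where (as is implicit in the hypothesis, whose conditioning always retains $X_0$) the variable $X_0$ carries no randomness, so that $Y_0 = \E(f)$, while $f$ being a function of $X_1,\dots,X_n$ gives $Y_n = f$. Writing $D_i := Y_i - Y_{i-1}$ for the martingale differences, the hypothesis of the theorem is precisely $|D_i| \le c_i$, and the tower property of conditional expectation gives $\E(D_i \mid X_0,\dots,X_{i-1}) = \E(Y_i \mid X_0,\dots,X_{i-1}) - Y_{i-1} = 0$. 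Hence $(Y_i)$ is a martingale with bounded differences and $f - \E(f) = \sum_{i=1}^{n} D_i$.

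For the upper tail I would fix $\lambda > 0$ and apply Markov's inequality to the exponentiated sum,
\[
\Pr\left(f - \E(f) \ge \delta\right) \le e^{-\lambda \delta}\, \E\!\left(\exp\!\left(\lambda \textstyle\sum_{i=1}^{n} D_i\right)\right),
\]
and then bound the moment generating function by peeling off one difference at a time. Conditioning on $X_0,\dots,X_{n-1}$ and using that $S_{n-1} := \sum_{i=1}^{n-1} D_i$ is measurable with respect to this information (each $Y_i$, $i \le n-1$, is a function of $X_0,\dots,X_{n-1}$), the factor $e^{\lambda S_{n-1}}$ pulls out, leaving the inner term $\E(e^{\lambda D_n} \mid X_0,\dots,X_{n-1})$. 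The key technical ingredient here is Hoeffding's lemma: a mean-zero random variable taking values in an interval of length $2c$ satisfies $\E(e^{\lambda Z}) \le \exp(\lambda^2 c^2/2)$. Applied to $D_n$, which has conditional mean zero and lies in $[-c_n, c_n]$, this bounds the inner expectation by $\exp(\lambda^2 c_n^2/2)$, so that $\E(e^{\lambda S_n}) \le e^{\lambda^2 c_n^2/2}\,\E(e^{\lambda S_{n-1}})$. Iterating this peeling over $i = n, n-1, \dots, 1$ yields
\[
\E\!\left(\exp\!\left(\lambda \textstyle\sum_{i=1}^{n} D_i\right)\right) \le \exp\!\left(\tfrac{\lambda^2}{2}\textstyle\sum_{i=1}^{n} c_i^2\right).
\]

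Combining the two displays gives $\Pr(f - \E(f) \ge \delta) \le \exp\!\left(-\lambda\delta + \tfrac{\lambda^2}{2}\sum_i c_i^2\right)$, and the optimal choice $\lambda = \delta / \sum_i c_i^2$ produces the one-sided bound $\exp(-\delta^2/(2\sum_i c_i^2))$. Applying the identical argument to the differences $-D_i$ (whose bounds $|{-D_i}| \le c_i$ and conditional means $0$ are unchanged) controls the lower tail $\Pr(f - \E(f) \le -\delta)$, and a union bound over the two tails yields the factor of $2$ in the claimed inequality. The main obstacle is the conditional version of Hoeffding's lemma together with the bookkeeping in the peeling step: one must verify at each stage that the already-accumulated partial sum is measurable with respect to the conditioning $\sigma$-algebra and that the conditional mean-zero property holds, so that the factorization is legitimate. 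The lemma itself is a short self-contained convexity estimate (bound $e^{\lambda z}$ on $[-c,c]$ by its chord, take expectations, and minimize the resulting function of $\lambda c$), which I would either cite or insert as a one-line auxiliary claim.
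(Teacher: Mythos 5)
The paper does not actually prove this statement: it is quoted as the ``well-known martingale version of Azuma's inequality'' with a citation to Azuma (1967), so there is no internal proof to compare against. Your argument is the classical proof of the Azuma--Hoeffding inequality---Doob martingale $Y_i=\E(f\mid X_0,\dots,X_i)$, conditional mean-zero and boundedness of the differences, Chernoff exponentiation with the conditional Hoeffding lemma applied in a peeling step, optimization at $\lambda=\delta/\sum_i c_i^2$, and a union bound over the two tails---and it is correct, including the bookkeeping you flag (measurability of the partial sums with respect to the conditioning $\sigma$-algebra, and the reading of $X_0$ as a degenerate variable so that $Y_0=\E(f)$ and $Y_n=f$). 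The only point left implicit is the degenerate case $\sum_{i=1}^n c_i^2=0$, where the stated bound is vacuous but the conclusion is trivial since $f$ is then almost surely constant.
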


Consider a run of the algorithm $\tilde{A}_{\eps}^*$ with small constant $0<\eps<1/6$. Let $X_t:=n-\max\{\OM(x(i)) \mid i \in [0..t]\}$ 
be the current smallest fitness distance and let $r_{\max}:=\Rappe(1/2-\eps)$ be the  maximal mutation strength. Let $T_{1/3}$ be the first time at which the distance to the optimum is at most $n/3$, i.e., $T_{1/3}$ is the smallest $t$ for which $X_t\le n/3$. 
Let $N:=3r_{\max}n$ and define the function $f$ by setting $f(X_0,X_1,\dots):=\min\{N,T_{1/3}\}$.

We notice that $\Pr(T_{1/3}>f)=\Pr(T_{1/3}> N)= \Pr(X_{N}>n/3 )$. Referring  to Lemma~\ref{THM:APPROXBA}, we obtain for all $X_t> n/3$ that $\E(X_t-X_{t+1}\mid X_t)=B(n,X_t,\Rapp(X_t/n))\ge A_{\max,\eps}(X_t/n)-O(1/X_t)\ge 1/3-o(1)$. Using the fact that $X_t-X_{t+1}\le r_{\max}$, we obtain $\Pr(X_t>X_{t+1}\mid X_t>n/3)\ge 1/(3r_{\max}) - o(1)$. Define binary random variables $Y_t$ by setting $Y_t:=\mathbbm{1}_{X_t>X_{t+1}}$, if $X_t > n/3$, and otherwise by having $Y_t = 1$ with probability $1/(3r_{\max}) - o(1)$ independently for all such $Y_t$. Note that, by definition, we have $\Pr[Y_t = 1] \ge 1/(3r_{\max}) - o(1)$ regardless of the outcomes of $Y_{t'}$, $t' < t$. Consequently, by well-known results, e.g., Lemma~3 in~\cite{Doerr18evocop}, the $Y_t$ admit the same Chernoff bounds for the lower tail as independent binary random variables with success probability $1/(3r_{\max}) - o(1)$. We thus estimate $\Pr(X_N>n/3)\le\Pr(Y_1+Y_2+\dots+Y_N<(2/3)n)=\exp(-\Omega(n))$.
Consequently $E(T_{1/3}-f)<E(T)\Pr(T_{1/3}>f)=\exp(-\Omega(n))$.


Using the fact that $X_t-X_{t+1}\le r_{\max}$, the additive drift theorem yields that the expected influence of one iteration on the remaining optimization time is at most $r_{\max}/(1/3-o(1))<4r_{\max}$. Consequently, for $1\le i\le N$, we have
\begin{equation*}
	|\E(f\mid X_0,X_1,\cdots,X_{i-1})-\E(f\mid X_0,X_1,\cdots,X_i)|\le 4r_{\max}.
\end{equation*}
Applying Theorem~\ref{thm:martingale} and using the fact that $\Pr(T_{1/3}>f)=\exp(-\Omega(n))$ and $E(T_{1/3}-f)=\exp(-\Omega(n))$, we compute
\begin{eqnarray*}
	\Pr\left(|T_{1/3}-\E(T_{1/3})|\ge n^{0.6}\right)&\le&
	\Pr\left(|f-\E(T_{1/3})|\ge n^{0.6}\right)+\exp(-\Omega(n))\\
	&\le& \Pr\left(|f-\E(f)-\E(T_{1/3}-f)|\ge n^{0.6}\right)+\exp(-\Omega(n))\\
	&\le& \Pr\left(|f-\E(f)|\ge n^{0.6}-\E(T_{1/3}-f)\right)+\exp(-\Omega(n))\\
	&\le&\Pr\left(|f-\E(f)|\ge n^{0.6}/2\right)+\exp(-\Omega(n))\\
	&\le& 2\exp\left(-\frac{n^{1.2}/4}{2N(4r_{\max})^2}\right)+\exp(-\Omega(n))=o(\exp(-n^{0.1})).
\end{eqnarray*}
According to the computation in the proof of Theorem~\ref{THM:RUNTIME} and using the fact that $\E(T_{1/3})=\E(T_{\tilde{A}_{\eps}^*})-nH_{n/3}$, we obtain with probability $1-O(\exp(-n^{0.1}))$ that $0.2549n \le T_{1/3} \le 0.2675n$. 

Consider a budget of $B=kn$ iterations with $k \ge 0.2675$. Let $s:= \lfloor 0.2675n \rfloor$. With probability $1-O(\exp(-n^{-0.1}))$, a run of algorithm $\tilde{A}_{\eps}^*$ has $X_s \le n/3$. Conditional on this, in the remainder $\tilde{A}_{\eps}^*$ mutates exactly one bit in each iteration according to Lemma~\ref{lem:R13}. Since $E(X_{t} \mid X_{t-1})=X_{t-1}(1-1/n)$ in this case, we have for all $t \ge s$ that
\[\E(X_{t} \mid X_s) = \E(X_s)\left(1-1/n\right)^{t-s} \le (n/3)(1-1/n)^{t-s}.\]
Therefore, with a budget of $B \ge 0.2675n$ algorithm $\tilde{A}_{\eps}^*$ reaches a fitness distance $X_B$ satisfying 
\begin{eqnarray*}
  \E(X_B ) \le \E(X_{B} \mid X_s\le n/3)+ n\cdot \Pr(X_s>n/3)\le
   (1+o(1))(n/3)(1-1/n)^{B - 0.2675n}.
\end{eqnarray*}

Using the same reasoning for RLS, we compute for $Y_B$ the fitness distance RLS reaches with the same budget of $B$ that
\begin{eqnarray*}
	\E(Y_B)&=& (n/2)(1-1/n)^B\\
	&=& (3/2)(1-1/n)^{0.2675n} (n/3)(1-1/n)^{B - 0.2675n}\\
	&\ge& (1-o(1))(3/2)\exp(-0.2675) \E(X_B) \\
	&=& (1-o(1)) 1.1479...\, \E(X_B).
\end{eqnarray*}
In other words, $\E(X_B) \le (1+o(1)) 0.8711...\, \E(X_A)$, that is, with the same budget, Algorithm $\tilde{A}_{\eps}^*$ is roughly $13\%$ closer to the optimum than RLS.


\paragraph{Acknowledgments}
This research benefited from the support 
of the ``FMJH Program Gaspard Monge in optimization and operation research'', 
and from the support to this program from \'Electricit\'e de France. It has also been supported by a public grant as part of the
Investissement d'avenir project, reference ANR-11-LABX-0056-LMH,
LabEx LMH.

}
\newcommand{\etalchar}[1]{$^{#1}$}

\end{document}